\newtheorem{assumption}{\textbf{Assumption}}
\newtheorem{corollary}{\textbf{Corollary}}
\newtheorem{lemma}{\textbf{Lemma}}
\newtheorem{theorem}{\textbf{Theorem}}
\newtheorem{prop}{\textbf{Proposition}}
\newtheorem{remark}{\textbf{Remark}}
\newacronym{T1}{T1}{\theta_1}
\newcommand{\mcs}{\mathcal{S}}
\newcommand{\mca}{\mathcal{A}}
\newcommand{\nn}{\nonumber}
\newcommand{\pit}{{\pi_{\theta}}}
\newcommand{\lse}{\text{LSE}}
\newcommand{\sv}{V_\sigma}
\newcommand{\sq}{Q_\sigma}
\newcommand{\epe}{\epsilon_{\text{est}}}
\newcommand{\pone}{\pi_{\theta_1}}
\newcommand{\ptwo}{\pi_{\theta_2}}
\title{Robust Constrained Reinforcement Learning}
\author{%
  Yue Wang  \\
  University at Buffalo\\
  Buffalo, NY 14228 \\
  \texttt{ywang294@buffalo.edu} \\
  \And
  Fei Miao  \\
  University of Connecticut\\
  Storrs, CT 06269 \\
  \texttt{fei.miao@uconn.edu} \\
  \And
  Shaofeng Zou \\
  University at Buffalo\\
  Buffalo, NY 14228 \\
  \texttt{szou3@buffalo.edu} \\
}  
\begin{document}

\maketitle

\begin{abstract}
Constrained reinforcement learning is to maximize the expected reward subject to constraints on utilities/costs. However, the training environment may not be the same as the test one, due to, e.g., modeling error, adversarial attack, non-stationarity, resulting in severe performance degradation and more importantly constraint violation. We propose a framework of robust constrained reinforcement learning under model uncertainty, where the MDP is not fixed but lies in some uncertainty set, the goal is to guarantee that constraints on utilities/costs are satisfied for all MDPs in the uncertainty set, and to maximize the worst-case reward performance over the uncertainty set. We design a robust primal-dual approach, and further theoretically develop guarantee on its convergence, complexity and robust feasibility. We then investigate a concrete example of $\delta$-contamination uncertainty set, design an online and model-free algorithm and theoretically characterize its sample complexity. 

\end{abstract}

\section{Introduction}

In many practical reinforcement learning (RL) applications,  it is critical for an agent to meet certain constraints on utilities and costs while maximizing the reward. However, in practice, it is often the case that the evaluation environment deviates from the training one, due to, e.g., modeling error of the simulator, adversarial attack, and non-stationarity. This could lead to a significant performance degradation in reward, and more importantly, constraints may not be satisfied anymore, which is severe in safety-critical applications. For example, a drone may run out of battery due to model deviation between the training and test environments, resulting in a crash.
To solve these issues,  we propose a framework of  robust constrained RL under model uncertainty. Specifically, the Markov decision process (MDP) is not fixed and lies in an uncertainty set \cite{nilim2004robustness,iyengar2005robust,bagnell2001solving}, and the goal is to maximize the worst-case accumulative discounted reward over the uncertainty set while guaranteeing that constraints are satisfied for all MDPs in the uncertainty set at the same time. 

Despite of its practical importance, studies on the problem of robust constrained RL are limited in the literature. 
Two closely related topics are robust RL \cite{bagnell2001solving,nilim2004robustness,iyengar2005robust} and constrained RL \cite{altman1999constrained}. The problem of constrained RL  \cite{altman1999constrained} aims to find a policy that optimizes an objective reward while satisfying certain constraints on costs/utilities. For the problem of robust RL \cite{bagnell2001solving,nilim2004robustness,iyengar2005robust}, the MDP  is not fixed but lies in some uncertainty set, and the goal is to find a policy that optimizes the robust value function, which measures the worst-case accumulative reward over the uncertainty set. 
The problem of robust constrained RL was investigated in \cite{russel2020robust,mankowitz2020robust}, where two heuristic approaches were proposed. The basic idea in \cite{russel2020robust,mankowitz2020robust} is to first evaluate the worst-case performance of the policy over the uncertainty set, and then use that together with classical policy improvement methods, e.g., policy gradient \cite{sutton1999policy}, to update the policy. However, as will be discussed in more details later, these approaches may not necessarily lead to an improved policy, and thus may not perform well in practice. 

In this paper, we design the robust primal-dual algorithm for the problem of robust constrained RL.  Our approach employs the true gradient of the Lagrangian function, which is the weighted sum of two robust value functions, instead of approximating the gradient heuristically as in \cite{russel2020robust}. We theoretically characterize the convergence and complexity of our robust primal-dual method, and prove the robust feasibility of our solution for all MDPs in the uncertainty set. 
We further present a concrete example of $\delta$-contamination uncertainty set \cite{hub65,du2018robust,huber2009robust,nishimura2004search,Kiyohiko2006,prasad2020learning,prasad2020robust,wang2021online,wang2022policy}, for which we extend our algorithm to the online and model-free setting, and theoretically characterize its finite-time error bound. {To the best of the authors' knowledge, our work is the first in the literature of robust constrained RL that comes with model-free algorithms, theoretical convergence guarantee, complexity analyses, and robust feasibility guarantee.}
In particular, the technical challenges and our major contributions are summarized as follows.
\begin{itemize}[leftmargin=*]

    

    \item {In the non-robust setting, the sum of two value functions is actually a value function of the combined reward. However, this does not hold in the robust setting, since the worst-case transition kernels for the two robust value functions are not necessarily the same. 
    Therefore, the geometry of our Lagrangian function is much more complicated. In this paper, we formulate the dual problem of the robust constrained RL problem as a minimax linear-nonconcave optimization problem, and show that the optimal dual variable is bounded. We then construct a robust primal-dual algorithm by alternatively updating the primal and dual variables. We theoretically prove the convergence to stationary points, and characterize its complexity. }
    \item {In general, convergence to stationary points of the Lagrangian function does not necessarily imply that the solution is feasible. We design a novel proof to show that the gradient belongs to the normal cone of the feasible set, based on which we further prove the feasibility of the obtained policy.}
    
    \item {Based on existing literature on constrained MDP \cite{ding2020natural,ding2021provably,li2021faster,liu2021fast,ying2021dual} and robust MDP \cite{wang2022policy}, at first, we expect that the robust constrained RL also has zero duality gap, and further global optimum can be achieved. However, this is not necessarily true. Note that the set of visitation distribution being convex is one key property to show zero duality gap of constrained MDP \cite{altman1999constrained,paternain2019constrained}. In this paper, we constructed a novel counter example showing that the set of robust visitation distributions for our robust problem is non-convex. }
        \item {We further apply and extend our results on an important uncertainty set referred as $\delta$-contamination model \cite{hub65}.  Under this model, the robust value functions are not differentiable and we hence propose a smoothed approximation of the robust value function towards a better geometry. We further investigate the practical online and model-free setting and design an actor-critic type algorithm. We also establish its convergence, sample complexity, and robust feasibility.}
\end{itemize}

We then discuss works related to robust constrained RL. 

\textbf{Robust constrained RL.} In \cite{russel2020robust}, the robust constrained RL problem was studied, and a heuristic approach was developed. The basic idea is to estimate the robust value functions, and then to use the vanilla policy gradient method \cite{sutton1999policy} with the vanilla value function replaced by the robust value function. However, this approach did not take into consideration the fact that the worst-case transition kernel is also a function of the policy (see Section 3.1 in \cite{russel2020robust}), and therefore the "gradient" therein is not actually the gradient of the robust value function. Thus, its performance and convergence cannot be theoretically guaranteed.
The other work \cite{mankowitz2020robust} studied the same robust constrained RL problem under the continuous control setting, and proposed a similar heuristic algorithm. They first proposed a robust Bellman operator and used it to estimate the robust value function, which is further combined with some non-robust continuous control algorithm to update the policy. Both approaches in \cite{russel2020robust} and \cite{mankowitz2020robust} inherit the heuristic structure of "robust policy evaluation" + "non-robust vanilla policy improvement", which may not necessarily guarantee an improved policy in general. In this paper, we employ a "robust policy evaluation" + "\textbf{\textit{robust}} policy improvement" approach, which guarantees an improvement in the policy, and more importantly, we provide theoretical convergence guarantee, robust feasibility guarantee, and complexity analysis for our algorithms.

\textbf{Constrained RL.} 
The most commonly used method for constrained RL is the primal-dual method \cite{altman1999constrained,paternain2019constrained,paternain2022safe,liang2018accelerated,stooke2020responsive,tessler2018reward,yu2019convergent,zheng2020constrained,efroni2020exploration,auer2008near}, 
which augments the objective with a sum of constraints
weighted by their corresponding Lagrange multipliers, and then alternatively updates the primal and dual variables. It was shown  that the strong duality holds for constrained RL, and hence the primal-dual method has zero duality gap \cite{paternain2019constrained,altman1999constrained}. The convergence rate of the primal-dual method was investigated in \cite{ding2020natural,ding2021provably,li2021faster,liu2021fast,ying2021dual}.
Another class of method is the primal method, which is to enforce the constraints without resorting to the Lagrangian formulation \cite{achiam2017constrained,liu2020ipo,chow2018lyapunov,dalal2018safe,xu2021crpo,yang2020projection}. 
The above studies, when directly applied to \textit{robust} constrained RL, cannot guarantee the constraints when there is model deviation. Moreover, the objective and constraints in this paper take min over the uncertainty set (see \eqref{eq:value}), and therefore have much more complicated geometry than the non-robust case.

\textbf{Robust RL under model uncertainty.} 
Model-based robust RL was firstly introduced and studied in \cite{iyengar2005robust,nilim2004robustness,bagnell2001solving,satia1973markovian,wiesemann2013robust,lim2019kernel,xu2010distributionally,yu2015distributionally,lim2013reinforcement,tamar2014scaling}, where the uncertainty set is assumed to be known, and the problem can be solved using robust dynamic programming. It was then extended to the model-free setting, where the uncertainty set is unknown, and only samples from its centroid can be collected \cite{roy2017reinforcement,wang2021online,wang2022policy,zhou2021finite,yang2021towards,panaganti2021sample,ho2018fast,ho2021partial}. There are also empirical studies on robust RL, e.g., \cite{vinitsky2020robust,pinto2017robust,abdullah2019wasserstein,hou2020robust,rajeswaran2017epopt,huang2017adversarial,kos2017delving,lin2017tactics,pattanaik2018robust,mandlekar2017adversarially}. These works focus on robust RL without constraints, whereas in this paper we investigate robust RL with constraints, which is more challenging. {There is a related line of works on (robust) imitation learning \cite{ho2016generative,fu2017learning,torabi2018generative,viano2022robust}, which can be formulated as a constrained problem. But their problem settings and approaches are fundamentally different from ours.}


\color{black}

\section{Preliminaries}
\textbf{Constrained MDP.}
A constrained MDP (CMDP) can be specified by a tuple $(\mathcal{S},\mathcal{A},  \mathsf P, r, c_1,...,c_m, \gamma)$, where $\mcs$ and $\mca$ denote the state and action spaces, $\mathsf P=\left\{p^a_s \in \Delta_{\mcs}, a\in\mca, s\in\mcs\right\}$ is the transition kernel\footnote{$\Delta_{\mcs}$ denotes the probability simplex supported on $\mcs$.}, $r: \mcs\times\mca \to [0,1]$ is the reward function, $c_i: \mcs\times\mca \to [0,1], i=1,...,m$ are utility functions in the constraint, and $\gamma\in[0,1)$ is the discount factor. A stationary policy $\pi$ is a mapping $\pi: \mcs\to\Delta_{\mca}$, where $\pi(a|s)$ denotes the probability of taking action $a$ when the agent is at state $s$. The set of all the stationary policies is denoted by $\Pi$.

The non-robust value function of reward $r$ and a policy $\pi$ is defined as the expected accumulative discounted reward if the agent follows policy $\pi$: $\mathbb{E}_{\pi,\mathsf P}[\sum^\infty_{t=0} \gamma^t r(S_t,A_t)|S_0=s]$, where $\mathbb{E}_{\pi,\mathsf P}$ denotes the expectation  when the policy is $\pi$ and the transition kernel is $\mathsf P$. Similarly, the non-robust value function of $c$ is defined as $\mathbb{E}_{\pi,\mathsf P}[\sum^\infty_{t=0} \gamma^t c_i(S_t,A_t)|S_0=s]$.
The goal of CMDP is to find a policy that maximizes the expected reward subject to constraints on the expected utility:
\begin{align}\label{eq:CMDP}
    \max_{\pi\in\Pi} \mathbb{E}_{\pi,\mathsf P}\left[\sum^\infty_{t=0} \gamma^t r(S_t,A_t)|S_0\sim\rho\right], \text{ s.t. } \mathbb{E}_{\pi,\mathsf P}\left[\sum^\infty_{t=0} \gamma^t c_i(S_t,A_t)|S_0\sim\rho\right]\geq b_i, 1\leq i\leq m,
\end{align}
where $b_i$'s are some positive thresholds {and $\rho$ is the initial state distribution.}


Define the visitation distribution induced by policy $\pi$ and  transition kernel $\mathsf P$: $d_{\rho,\mathsf P}^\pi(s,a)=(1-\gamma)\sum^\infty_{t=0} \gamma^t \mathbb{P}(S_t=s,A_t=a|S_0\sim\rho,\pi,\mathsf P)$. It can be shown that the set of the visitation distributions of all policies $\{d_{\rho,\mathsf P}^\pi\in\Delta_{\mcs\times\mca}:\pi\in\Pi \}$ is convex  \cite{paternain2022safe,altman1999constrained}.
A standard assumption in the literature is the Slater's condition \cite{bertsekas2014constrained,ding2021provably}: There exists a constant $\zeta>0$ and a policy $\pi\in\Pi$ s.t. $\forall i$, 
$
 \mathbb{E}_{\pi,\mathsf P}\left[\sum^\infty_{t=0} \gamma^t c_i(S_t,A_t)|S_0\sim\rho\right]-b_i\geq \zeta.
$
Based on the convexity of the set of all visitation distributions and Slater's condition, strong duality can be established \cite{altman1999constrained,paternain2019constrained}.

\textbf{Robust MDP.}
A robust MDP can be specified by a tuple  $(\mathcal{S},\mathcal{A},  \mathcal P, r, \gamma)$. In this paper, we focus on the $(s,a)$-rectangular uncertainty set \cite{nilim2004robustness,iyengar2005robust}, i.e., $\mathcal{P}=\bigotimes_{s,a} \mathcal{P}^a_s$, where $\mathcal{P}^a_s \subseteq \Delta_{\mcs}$. 
Denote the transition kernel at time $t$ by $\mathsf P_t$, and let $\kappa=(\mathsf P_0,\mathsf P_1,...)$ be the dynamic model, where $\mathsf P_t\in\mathcal{P}, \forall t\geq 0$. We then define the robust value function of a policy $\pi$ as the worst-case expected accumulative discounted reward following policy $\pi$ over all MDPs in the uncertainty set \cite{nilim2004robustness,iyengar2005robust}:
\begin{align}\label{eq:value}
    V^{\pi}_r(s)\triangleq\min_{\kappa\in\bigotimes_{t\geq 0}\mathcal{P}} \mathbb{E}_{ \kappa}\left[\sum_{t=0}^{\infty}\gamma^t   r(S_t,A_t )|S_0=s,\pi\right],
\end{align}
where $\mathbb{E}_{\kappa}$ denotes the expectation when the state transits according to $\kappa$. It has been shown that the robust value function is the fixed point of the robust Bellman operator \cite{nilim2004robustness,iyengar2005robust,puterman2014Markov}:
$
    \mathbf T_\pi V(s)\triangleq \sum_{a\in\mca} \pi(a|s) \left(r(s,a)+\gamma \sigma_{\mathcal{P}^a_s}(V) \right),
$
where $\sigma_{\mathcal{P}_s^a}(V)\triangleq\min_{p\in\mathcal{P}_s^a} p^\top V$ is the support function of $V$ on $\mathcal{P}_s^a$. Similarly, we can define the robust action-value function for a policy $\pi$:
$
    Q_r^{\pi}(s,a)=\min_{\kappa }\mathbb{E}_{ \kappa}\left[\sum_{t=0}^{\infty}\gamma^t  r(S_t,A_t )|S_0=s,A_0=a,\pi\right].
$

Note that the minimizer of \eqref{eq:value}, $\kappa^*$, is stationary in time \cite{iyengar2005robust}, which we denote by $\kappa^*=\{\mathsf P^\pi,\mathsf P^\pi,...\}$, and refer to $\mathsf P^\pi$ as the  worst-case transition kernel. Then the robust value function $V^\pi_r$ is actually the value function under policy $\pi$ and transition kernel $\mathsf P^\pi$.
The goal of robust RL is to find the optimal robust policy $\pi^*$  that maximizes the worst-case accumulative discounted reward:
$
\pi^*=\arg\max_{\pi} V_r^{\pi}(s), \forall s\in\mcs.
$ 

\section{Robust Constrained  RL}\label{sec:CRMDP}


{The motivation for the problem of robust constrained MDP has two folds. The first is to guarantee that the constraints are always satisfied even if there is a mismatch between the training and evaluation environments. The second one is that among those feasible policies, we want to find one that optimizes the worst-case reward performance in the uncertainty set. In the following, we formulate the robust constrained problem.}
%
%
\begin{align}\label{eq:pCRMDP}
    \max_{\theta\in\Theta} V_{r}^{\pi_\theta}(\rho), \text{ s.t. } V_{c_i}^{\pi_\theta}(\rho)\geq b_i, 1\leq i\leq m,
\end{align}
 {where $V^{\pi_\theta}_{c_i}(\rho)$ and $V^{\pi_\theta}_{r}(\rho)$  are the robust value function of $c_i$ and $r$ under $\pi_\theta$.}

{Note that the goal of \eqref{eq:pCRMDP} is to find a policy that maximizes the robust reward value function among those policies satisfying that their robust utility value functions are above given thresholds. Any feasible solution to \eqref{eq:pCRMDP} can guarantee that under any MDP in the uncertainty set, its accumulative discounted utility is always no less than $b_i$, which guarantees robustness to constraint violation under model uncertainty. Furthermore, the optimal solution to \eqref{eq:pCRMDP} achieves the best "worst-case reward performance" among all feasible solutions. If we use the optimal solution to\eqref{eq:pCRMDP}, then under any MDP in the uncertainty set, we have a guaranteed reward no less than the value of \eqref{eq:pCRMDP}. This ensures that the solution of \eqref{eq:pCRMDP} is the best "worst-case reward performance" among all the feasible policies. }

In this paper, we focus on a general parameterized policy class, i.e., $\pit\in\Pi_\Theta$, where $\Theta\subseteq \mathbb R^d$ is a parameter set and $\Pi_\Theta$ is a class of parameterized policies, e.g., direct parameterized policy, softmax or neural network policy. {Many policy class has enough representative power such that the whole policy space $\Pi=\Pi_\Theta$, hence it is equivalent to consider the parameterized policies. }

For technique convenience, we adopt a standard assumption on the policy class.
\begin{assumption}
The policy class $\Pi_\Theta$ is $k$-Lipschitz and $l$-smooth, i.e., for any $s\in\mcs$ and $a\in\mca$ and for any $\theta\in\Theta$, there exist universal constants $k, l$, such that $
    \|\nabla \pit(a|s)\| \leq k,
$
  and 
$
     \|\nabla^2 \pit(a|s)\|\leq l$.
\end{assumption}
 This assumption  can be easily satisfied by many policy classes, e.g., direct parameterization \cite{agarwal2021theory}, soft-max \cite{mei2020global,li2021softmax,zhang2021global,wang2020finite}, or neural network with Lipschitz and smooth activation functions \cite{du2019gradient,neyshabur2017implicit,miyato2018spectral}.

Similar to the non-robust case, the problem \eqref{eq:pCRMDP} is equivalent to the following max-min problem:
\begin{align}\label{eq:l_CRMDP}
    \max_{\theta\in\Theta}\min_{\lambda_i\geq 0} V_{r}^{\pi_\theta}(\rho)+\sum^m_{i=1}\lambda_i(V_{c_i}^{\pi_\theta}(\rho)-b_i).
\end{align}

Unlike non-robust CMDP, strong duality for robust constrained RL may not hold. For robust RL, the robust value function can be viewed as the value function for policy $\pi$ under its worst-case transition kernel $\mathsf P^\pi$, and therefore can be written as the inner product between the reward (utility) function and the visitation distribution induced by $\pi$ and $\mathsf P^\pi$ (referred to as robust visitation distribution of $\pi$). The following lemma shows that the set of robust visitation distributions may not be convex, and therefore, the approach used in \cite{altman1999constrained,paternain2019constrained} to show strong duality cannot be applied here.
\begin{lemma}\label{lemma:fail duality}
There exists a robust MDP, such that the set of robust visitation distributions
is non-convex.
\end{lemma}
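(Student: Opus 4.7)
The plan is to exhibit a concrete counterexample in the form of a small robust MDP. The key intuition is that in the non-robust setting, convexity of the set of visitation distributions $\{d^{\pi}_{\rho,\mathsf P}:\pi\in\Pi\}$ follows because this set coincides with the polytope of Bellman-flow solutions, and a mixture $\alpha d_1+(1-\alpha)d_2$ is realized by the policy $\pi(a|s)=\tfrac{\alpha d_1(s,a)+(1-\alpha)d_2(s,a)}{\alpha d_1(s)+(1-\alpha)d_2(s)}$ under the \emph{same} kernel $\mathsf P$. In the robust setting, the worst-case kernel $\mathsf P^{\pi}$ depends on $\pi$, so the mixed policy above is generically evaluated under a different worst-case kernel than either $\mathsf P^{\pi_1}$ or $\mathsf P^{\pi_2}$, breaking the affine structure that produced convexity.

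Concretely, I would take an MDP with $\mcs=\{s_1,s_2\}$, $\mca=\{a_1,a_2\}$, a fixed initial distribution $\rho$, and an $(s,a)$-rectangular uncertainty set in which each $\mathcal P^a_s$ is a two-point set (or a small interval) on $\Delta_{\mcs}$. The parameters are chosen so that action $a_1$ nominally pushes mass toward $s_1$ and action $a_2$ nominally pushes mass toward $s_2$, and so that the worst-case selection in $\mathcal P^a_s$ flips depending on the sign of $V^{\pi}_r(s_1)-V^{\pi}_r(s_2)$. Taking $\pi_1\equiv a_1$ and $\pi_2\equiv a_2$, I would then compute $d_1\triangleq d^{\pi_1}_{\rho,\mathsf P^{\pi_1}}$ and $d_2\triangleq d^{\pi_2}_{\rho,\mathsf P^{\pi_2}}$ in closed form by solving the $2\times 2$ linear stationary equations under each worst-case kernel.

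The core step is to show that the midpoint $\bar d\triangleq\tfrac12 d_1+\tfrac12 d_2$ cannot equal $d^{\pi}_{\rho,\mathsf P^{\pi}}$ for any $\pi\in\Pi$. Since $p\mapsto p^\top V^{\pi}_r$ is linear, $\mathsf P^{\pi}$ always lies at an extreme point of $\mathcal P$, so only finitely many candidate kernels $\mathsf P^{(1)},\dots,\mathsf P^{(N)}$ arise. For each fixed $\mathsf P^{(k)}$, the set of visitation distributions achievable by policies whose worst-case kernel is $\mathsf P^{(k)}$ is the intersection of an affine Bellman-flow polytope with a halfspace encoding ``$\mathsf P^{(k)}$ is indeed the worst case for $\pi$.'' I would enumerate the finitely many pieces and check by direct arithmetic that none of them contains $\bar d$, typically because on each piece the quantity $d(s_1,a_1)+d(s_2,a_2)$ is forced to lie on one side of the value realized at $\bar d$.

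The main obstacle is tuning the numerical parameters so that the example is simultaneously small enough for the enumeration to be clean and sharp enough that the midpoint genuinely sits in the gap between the pieces. A useful simplification is to make the uncertainty binary and the switching condition for the worst-case kernel coincide with $\pi(a_1|s_1)=\pi(a_1|s_2)=\tfrac12$, which forces $\mathsf P^{\pi}$ to be piecewise constant in $\pi$ with a single discontinuity, so the achievable robust-visitation set becomes a union of two low-dimensional curves that can be inspected directly; the two endpoints $d_1,d_2$ lie on opposite curves, and a short calculation shows their midpoint lies on neither, establishing non-convexity.
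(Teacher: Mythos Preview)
Your proposal shares the paper's core intuition---that the worst-case kernel $\mathsf P^\pi$ depends on $\pi$, breaking the affine Bellman-flow structure---but the execution differs. The paper works under the $\delta$-contamination model on a three-state, two-action MDP. There the worst-case kernel has the explicit form $(p^\pi)^a_{s,s'}=(1-\delta)p^a_{s,s'}+\delta\,\mathbbm{1}\{s'=\arg\min V^\pi\}$, so the Bellman-flow equation for any robust visitation distribution contains the term $\gamma\delta\,\mathbbm{1}\{s=\arg\min V^{\pi}\}$. Taking the convex combination of the flow equations for $d_1,d_2$ and comparing with the flow equation that $d'=\lambda d_1+(1-\lambda)d_2$ would have to satisfy (with $\pi'$ recovered from $d'$ via $\pi'(a|s)=d'(s,a)/\sum_b d'(s,b)$), one is forced to the identity
\[
\lambda\,\mathbbm{1}\{s=\arg\min V^{\pi_1}\}+(1-\lambda)\,\mathbbm{1}\{s=\arg\min V^{\pi_2}\}=\mathbbm{1}\{s=\arg\min V^{\pi'}\}.
\]
When the two argmins on the left differ and $\lambda\in(0,1)$, this can hold only if $V^{\pi'}$ attains its minimum at both states simultaneously; a short Bellman computation for the explicit $\pi'$ rules this out. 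No enumeration over extreme kernels is needed---the indicator structure of $\delta$-contamination does all the work.

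Your route---enumerate the finitely many extreme kernels and check that the midpoint lies outside each Bellman-flow piece intersected with the ``consistency'' region---is valid in principle and more general, but heavier. Two cautions on the details: (i) the region of policies whose worst-case kernel is a given $\mathsf P^{(k)}$ is cut out by inequalities on $V^\pi$, which are rational rather than linear in $\pi$ (or in $d$), so calling it a halfspace is not accurate, though for a two-state example the region is still easy to describe; (ii) your proposed simplification that the switching locus collapse to the single point $\pi(a_1|s_1)=\pi(a_1|s_2)=\tfrac12$ is unlikely to be arrangeable---the condition $V^\pi(s_1)=V^\pi(s_2)$ is generically a one-dimensional curve in the two-parameter policy square, not a point. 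If you want a clean two-piece picture, keep the switching locus as a curve and exploit the jump of $d^\pi_{\rho,\mathsf P^\pi}$ across it; or simply adopt the $\delta$-contamination structure and use the indicator argument, which yields the contradiction in one line.
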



In the following, we focus on the dual problem of \eqref{eq:l_CRMDP}. Due to the weak duality, the optimal solution of the dual problem is a sub-optimal solution of the \eqref{eq:l_CRMDP}. 
For simplicity, we investigate the case with one constraint, and extension to the case with multiple constraints is straightforward: 
\begin{align}\label{eq:dual}
    \min_{\lambda\geq 0} \max_{\theta\in\Theta} V^{\pi_\theta}_r(\rho)+\lambda(V_c^{\pi_\theta}(\rho)-b).
\end{align}

We make an assumption of Slater's condition, assuming there exists at least one strictly feasible policy \cite{bertsekas2014constrained,ding2021provably}, under which, we further show that the optimal dual variable of \eqref{eq:dual} is bounded.
\begin{assumption}\label{ass:slater}
There exists $\zeta>0$ and a policy $\pi\in\Pi_\Theta$,  s.t. $V_{c}^{\pi}(\rho)-b\geq \zeta$.
\end{assumption}

\begin{lemma}\label{lemma:lam_bound}
Denote the optimal solution of \eqref{eq:dual} by $(\lambda^*,\pi_{\theta^*})$. Then, $\lambda^*\in\left[0,\frac{2}{\zeta(1-\gamma)} \right]$.
\end{lemma}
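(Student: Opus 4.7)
The plan is to argue via the usual Slater-type bound for the dual variable, adapted to the robust setting. Let $D(\lambda)\triangleq \max_{\theta\in\Theta} V_r^{\pi_\theta}(\rho)+\lambda\bigl(V_c^{\pi_\theta}(\rho)-b\bigr)$ denote the outer objective of \eqref{eq:dual}. Because $\lambda^*$ is the minimizer on $[0,\infty)$, any nonnegative $\lambda$ gives an upper bound $D(\lambda^*)\le D(\lambda)$; I will squeeze $\lambda^*$ by combining an easy upper bound on $D(\lambda^*)$ with a lower bound obtained by plugging the Slater-feasible policy into the inner max.

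First I would use $\lambda=0$ to obtain $D(\lambda^*)\le D(0)=\max_{\theta\in\Theta}V_r^{\pi_\theta}(\rho)\le\frac{1}{1-\gamma}$, where the last inequality is the standard boundedness of the (robust) value function since $r\in[0,1]$: for each fixed $\pi$, $V_r^{\pi}(\rho)$ is a discounted sum of per-step rewards in $[0,1]$ under the worst-case kernel, hence lies in $[0,\frac{1}{1-\gamma}]$. Next I would lower bound $D(\lambda^*)$ by evaluating the inner maximum at the Slater-feasible policy $\tilde\pi\in\Pi_\Theta$ guaranteed by \cref{ass:slater}:
\begin{equation*}
D(\lambda^*)\;\ge\; V_r^{\tilde\pi}(\rho)+\lambda^*\bigl(V_c^{\tilde\pi}(\rho)-b\bigr)\;\ge\; 0+\lambda^*\zeta\;=\;\lambda^*\zeta,
\end{equation*}
using $V_r^{\tilde\pi}(\rho)\ge 0$ and $V_c^{\tilde\pi}(\rho)-b\ge\zeta$. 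Chaining the two bounds yields $\lambda^*\zeta\le\frac{1}{1-\gamma}$, and therefore $\lambda^*\le\frac{1}{\zeta(1-\gamma)}\le\frac{2}{\zeta(1-\gamma)}$; the factor of two just provides slack (equivalently one can use the coarser bound $D(\lambda^*)\le V_r^{\pi_{\theta^*}}(\rho)+\lambda^*|V_c^{\pi_{\theta^*}}(\rho)-b|$ combined with $|V_c-b|\le\frac{2}{1-\gamma}$).

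I do not anticipate a real obstacle, since the argument is essentially the classical dual-variable bound; the only point that deserves care is that one must work with \emph{robust} value functions throughout, and in particular verify that (i) $V_r^\pi(\rho)\in[0,\frac{1}{1-\gamma}]$ still holds in the robust case (it does, because $V_r^\pi$ is the value function under the worst-case stationary kernel $\mathsf P^\pi$ and rewards lie in $[0,1]$), and (ii) the Slater policy $\tilde\pi$ is in the parameterized class $\Pi_\Theta$, which is precisely what \cref{ass:slater} postulates. Neither the non-convexity of the set of robust visitation distributions (\cref{lemma:fail duality}) nor the possibility of a nonzero duality gap affects this lemma, since we are only bounding the dual optimum itself.
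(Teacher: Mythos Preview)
Your proposal is correct and follows essentially the same approach as the paper: both upper-bound the dual optimum $D(\lambda^*)$ by $D(0)\le\frac{1}{1-\gamma}$ and lower-bound it by evaluating the inner max at the Slater policy to get $D(\lambda^*)\ge V_r^{\tilde\pi}(\rho)+\lambda^*\zeta\ge\lambda^*\zeta$, yielding $\lambda^*\le\frac{1}{(1-\gamma)\zeta}$. The paper's proof in fact also obtains the tighter constant $\frac{1}{(1-\gamma)\zeta}$ rather than the stated $\frac{2}{(1-\gamma)\zeta}$, so your remark that the factor of two is mere slack is exactly right.
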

Lemma \ref{lemma:lam_bound} suggests that the dual problem \eqref{eq:dual} is equivalent to a bounded min-max problem: 
\begin{align}\label{eq:bounded dual}
     \min_{\lambda\in\left[0,\frac{2}{\zeta(1-\gamma)}\right]} \max_{\theta\in\Theta} V^{\pi_\theta}_r(\rho)+\lambda(V_c^{\pi_\theta}(\rho)-b).
\end{align}

The problem \eqref{eq:bounded dual} is a bounded linear-nonconcave optimization problem. We then propose our robust primal-dual algorithm for robust constrained RL in Algorithm \ref{alg:rpd}. 
\begin{algorithm}[H]
\caption{Robust Primal-Dual algorithm (RPD)}
\label{alg:rpd}
\textbf{Input}:   $T$,  $\alpha_t$, $\beta_t$, $b_t$\\
\textbf{Initialization}: $\lambda_0$, $\theta_0$\
\begin{algorithmic}
\FOR {$t=0,1,...,T-1$}
\STATE {$\lambda_{t+1}\leftarrow \mathbf{\prod}_{[0,\Lambda^*]} \left(\lambda_t -\frac{1}{\beta_t} \left(V^{\pi_{\theta_t}}_{c}(\rho)-b \right)-\frac{b_t}{\beta_t}\lambda_t  \right)$}
\STATE {$\theta_{t+1}\leftarrow \mathbf{\prod}_\Theta \left(\theta_t+\frac{1}{\alpha_t} \left(\nabla_{\theta}V^{\pi_{\theta_t}}_{r}(\rho)+\lambda_{t+1}\nabla_\theta V^{\pi_{\theta_t}}_{c}(\rho)  \right) \right)$}
\ENDFOR
\end{algorithmic}
\textbf{Output}: $\theta_T$
\end{algorithm}
  The basic idea of Algorithm \ref{alg:rpd} is to perform gradient descent-ascent w.r.t. $\lambda$ and  $\theta$ alternatively. When the policy $\pi$ violates the constraint, the dual variable $\lambda$ increases such that $\lambda V^\pi_{c}$ dominates  $ V^\pi_{r}$. Then the gradient ascent will update $\theta$ until the policy satisfies the constraint. Therefore, this approach is expected to find a feasible policy (as will be shown in Lemma \ref{lemma:222}).

Here, $\mathbf{\prod}_{\mathcal{X}}(x)$ denotes the projection of $x$ to the set $\mathcal{X}$, and $\left\{b_t\right\}$ is a non-negative monotone decreasing sequence, which will be specified later.  Algorithm \ref{alg:rpd} reduces to the vanilla gradient descent-ascent algorithm in \cite{lin2020gradient} if  $b_t=0$. However, $b_t$ is critical to the convergence of Algorithm \ref{alg:rpd} \cite{xu2020unified}. The outer problem of \eqref{eq:bounded dual} is actually linear, and after introducing  $b_t$, the update of $\lambda_t$ can be viewed as a gradient descent of a strongly-convex function $\lambda(V_c-b)+\frac{b_t}{2}\lambda^2$, which converges more stable and faster.

Denote that Lagrangian function by $V^L(\theta,\lambda)\triangleq V^{\pit}_{r}(\rho)+\lambda(V_{c}^{\pit}(\rho)-b)$, and further denote the gradient mapping of Algorithm \ref{alg:rpd} by
\begin{align}\label{eq:Gt}
G_t\triangleq\left[
\begin{array}{lr}
     &\beta_t \left(\lambda_t-\mathbf{\prod}_{[0,\Lambda^*]} \left(\lambda_t -\frac{1}{\beta_t} \left(\nabla_\lambda V^L (\theta_t,\lambda_t) \right)\right)\right)  \\
     & \alpha_t\left(\theta_t-\mathbf{\prod}_\Theta \left(\theta_t+\frac{1}{\alpha_t} \left(\nabla_\theta V^L (\theta_t,\lambda_t) \right) \right)\right) 
\end{array}
\right].
\end{align}
The gradient mapping is a standard measure of convergence for projected optimization approaches \cite{beck2017first}. Intuitively, it reduces to the gradient $(\nabla_\lambda V^L, \nabla_\theta V^L)$, when  $\Lambda^*=\infty$ and $\Theta=\mathbb{R}^d$, and it measures the updates of $\theta$ and $\lambda$ at time step $t$. If $\|G_t\|\to 0$,  the updates of both variables are small, and hence the algorithm converges to a stationary solution. 

To show the convergence of Algorithm \ref{alg:rpd}, we make the following Lipschitz smoothness assumption.
\begin{assumption}\label{assumption:lipschitz}
The gradients of the Lagrangian function are Lipschitz: \begin{align}
    \|\nabla_{\lambda} V^L(\theta,\lambda)|_{\theta_1}-\nabla_{\lambda} V^L(\theta,\lambda)|_{\theta_2} \|&\leq  L_{11} \|\theta_1-\theta_2\|,\\
     \|\nabla_{\lambda} V^L(\theta,\lambda)|_{\lambda_1}-\nabla_{\lambda} V^L(\theta,\lambda)|_{\lambda_2} \|&\leq L_{12}|\lambda_1-\lambda_2|,\\
    \|\nabla_{\theta} V^L(\theta,\lambda)|_{\theta_1}-\nabla_{\theta} V^L(\theta,\lambda)|_{\theta_2} \|&\leq L_{21}\|\theta_1-\theta_2\|,\\
     \|\nabla_{\theta} V^L(\theta,\lambda)|_{\lambda_1}-\nabla_{\theta} V^L(\theta,\lambda)|_{\lambda_2} \|&\leq L_{22} |\lambda_1-\lambda_2|.
\end{align}
\end{assumption}

In general, Assumption \ref{assumption:lipschitz} may or may not hold depending on the uncertainty set model. As will be shown in Section \ref{sec:ex}, even if Assumption \ref{assumption:lipschitz} does not hold, we can design a smoothed approximation of the robust value function, so that the assumption holds for the smoothed problem.

In the following theorem, we show that our robust primal-dual algorithm  converges to a stationary point of the min-max problem \eqref{eq:boundedsmoothed}, with a complexity of $\mathcal{O}(\epsilon^{-4})$.
\begin{theorem}\label{thm:1}
Under Assumption \ref{assumption:lipschitz}, if we 
 set step sizes $\alpha_t,\beta_t,$ and $b_t$ as in Section \ref{sec:constants} and $T=\mathcal{O}(\epsilon^{-4})$, then 
$
    \min_{1\leq t\leq T}\|G_t\|\leq 2\epsilon.
$
\end{theorem}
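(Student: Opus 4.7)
My plan is to view \eqref{eq:bounded dual} as a nonconvex-linear minimax problem in which the outer $\lambda$-update has been regularized by $\tfrac{b_t}{2}\lambda^2$, so that, from the viewpoint of the $\lambda$ subproblem, the objective is strongly convex at level $b_t$. This is exactly the setting of the unified single-loop alternating gradient-projection analysis of \cite{xu2020unified}, and I would adapt that framework rather than reinvent it. Assumption \ref{assumption:lipschitz} supplies all four Lipschitz constants needed to invoke the standard descent/ascent one-step inequalities for projected updates, and Lemma \ref{lemma:lam_bound} together with the Slater assumption guarantees that $\lambda$ lives in the compact set $[0,\Lambda^*]$ on which we project.

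First I would record two one-step inequalities. Using the $L_{21}$-smoothness of $V^L$ in $\theta$, the $\theta$-step gives a quadratic lower bound of the form
\begin{equation*}
V^L(\theta_{t+1},\lambda_{t+1}) \;\geq\; V^L(\theta_t,\lambda_{t+1}) + \tfrac{\alpha_t - L_{21}}{2\alpha_t^2}\,\|\alpha_t(\theta_{t+1}-\theta_t)\|^2,
\end{equation*}
provided $\alpha_t \geq L_{21}$. Using the regularized $\lambda$-step and the fact that the $\lambda$-objective is linear-plus-$\tfrac{b_t}{2}\lambda^2$ (hence $b_t$-strongly convex), a standard projected-gradient inequality yields
\begin{equation*}
V^L(\theta_t,\lambda_{t+1}) \;\geq\; V^L(\theta_t,\lambda_t) + \tfrac{b_t}{2}\|\lambda_{t+1}-\lambda_t\|^2 - \tfrac{1}{2\beta_t}\|\beta_t(\lambda_{t+1}-\lambda_t)\|^2 \cdot (\text{sign terms}),
\end{equation*}
or, more usefully for our purpose, a bound on the drift of the dual gap $\phi(\theta_t)-V^L(\theta_t,\lambda_t)$ where $\phi(\theta) \triangleq \min_{\lambda\in[0,\Lambda^*]} V^L(\theta,\lambda)+\tfrac{b_t}{2}\lambda^2$. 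Combining the two with cross-term Lipschitz bounds $L_{11},L_{12},L_{22}$ to control the perturbation introduced by moving $\theta_t\to\theta_{t+1}$ inside the inner minimizer, I can construct a Lyapunov function of the form
\begin{equation*}
\Phi_t \;\triangleq\; V^L(\theta_t,\lambda_t) + \eta_t\bigl(\phi_{b_t}(\theta_t)-V^L(\theta_t,\lambda_t)\bigr),
\end{equation*}
for a suitable weight $\eta_t$, and prove the telescoping inequality $\Phi_{t+1}-\Phi_t \leq -c_1 \|G_t\|^2 + c_2(b_t - b_{t+1})$, where the last term comes from the change of the regularization between iterations.

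Next, I would specify the schedule $\alpha_t = \Theta(T^{1/2})$, $\beta_t = \Theta(T^{1/2})$, and $b_t = \Theta(T^{-1/2})$ chosen in Section \ref{sec:constants}, so that $\alpha_t \geq L_{21}$, $\beta_t \geq b_t + L_{12}$, and the cross-smoothness residuals are absorbed. Telescoping $t=0,\dots,T-1$ and using that $\Phi_t$ is bounded (because $V^L$ is bounded on the compact domain $[0,\Lambda^*]\times\Theta$ by $1/(1-\gamma)$ times constants, and so is $\phi_{b_t}$), I obtain $\sum_{t=0}^{T-1}\|G_t\|^2 \leq C\sqrt{T}$, which yields $\min_t \|G_t\|^2 \leq C/\sqrt{T}$, i.e., $\min_t \|G_t\| \leq 2\epsilon$ after $T=\mathcal{O}(\epsilon^{-4})$ iterations.

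The main obstacle is the coupling induced by the alternating scheme combined with the time-varying regularization $b_t$. Unlike the classical nonconvex-strongly-concave setting (where $b_t$ is a fixed constant and a constant Lyapunov function suffices), here we must show that the decrease of $\phi_{b_t}(\theta_t)$ along the trajectory dominates both the cross-perturbation term (from Lipschitz constants $L_{11},L_{22}$) and the drift caused by $b_t$ shrinking. The estimate of $\phi_{b_{t+1}}(\theta_{t+1})-\phi_{b_t}(\theta_t)$ requires bounding how much the best-response $\lambda^*_{b_t}(\theta)$ moves as $\theta$ changes and as $b_t$ decreases; the latter is where the monotonicity and specific decay rate of $b_t$ become essential, and it is this delicate balancing of the three step-size sequences $\alpha_t,\beta_t,b_t$ that ultimately produces the $\mathcal{O}(\epsilon^{-4})$ rate rather than a slower one.
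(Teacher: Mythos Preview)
Your high-level plan—regularize the linear $\lambda$-problem by $\tfrac{b_t}{2}\lambda^2$, run single-loop alternating projected gradient, and telescope a Lyapunov—is indeed the framework the paper adopts from \cite{xu2020unified}. Two concrete points, however, diverge from the paper's argument and from the hypotheses of the theorem.

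First, the Lyapunov you propose, $\Phi_t = V^L(\theta_t,\lambda_t) + \eta_t\bigl(\phi_{b_t}(\theta_t)-V^L(\theta_t,\lambda_t)\bigr)$ with $\phi_{b_t}(\theta)=\min_\lambda\{V^L(\theta,\lambda)+\tfrac{b_t}{2}\lambda^2\}$, is a primal-function (best-response) construction. The paper instead follows \cite{xu2020unified} and uses an \emph{iterate-based} potential built from consecutive differences,
\[
F_{t+1}=V^L(\theta_{t+1},\lambda_{t+1})+\tfrac{b_t}{2}\lambda_{t+1}^2-\tfrac{8}{\xi^2 b_{t+1}}(\lambda_{t+1}-\lambda_t)^2+c_t\|\theta_{t+1}-\theta_t\|^2+\cdots,
\]
and derives the $\lambda$-step bound through the auxiliary $\tilde V_t(\theta,\lambda)=V^L(\theta,\lambda)+\tfrac{b_{t-1}}{2}\lambda^2$ (their Lemma~\ref{lemma5}), never invoking $\phi_{b_t}$ or its minimizer. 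This matters: the smoothness constant of $\phi_{b_t}$ scales like $L_{11}^2/b_t$, so as $b_t\to 0$ your descent coefficient $c_1$ degrades at a rate your sketch does not account for; the iterate-based potential avoids this by carrying explicit correction terms $(1-b_t/b_{t+1})\lambda_{t+1}^2$ and $b_{t+1}^{-1}(\lambda_{t+1}-\lambda_t)^2$.

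Second, the schedule you attribute to Section~\ref{sec:constants} is not the one stated there. The paper sets $b_t=\tfrac{19}{20\xi}\,t^{-1/4}$ (diminishing in $t$, not a horizon-dependent constant $\Theta(T^{-1/2})$), $\beta_t=1/\xi$ constant (not $\Theta(T^{1/2})$), and $\alpha_t=\nu+\mu_t$ with $\mu_t\propto b_{t+1}^{-2}\sim t^{1/2}$. Since the theorem explicitly fixes the step sizes as those in Section~\ref{sec:constants}, your analysis must handle this genuinely time-varying $b_t$; with a constant $b$ one could indeed run a Lin--Jin--Jordan style argument, but the regularization bias $b\lambda$ in the gradient mapping forces $b=\Theta(\epsilon)=\Theta(T^{-1/4})$, not $\Theta(T^{-1/2})$, to recover $\mathcal{O}(\epsilon^{-4})$. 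Your final paragraph correctly names the coupling between the decaying $b_t$ and the cross-Lipschitz terms as the crux, but the Lyapunov and schedule you wrote down before it do not actually implement that balancing.
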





The next proposition characterizes the feasibility of the obtained policy.
\begin{prop}\label{prop:feasible}
 Denote by $W\triangleq \arg\min_{1\leq t\leq T} \| G_t\|$.
 If   $\lambda_W -\frac{1}{\beta_W} \left(\nabla_\lambda V^L_\sigma (\theta_W,\lambda_W) \right)\in [0,\Lambda^*)$, then $\pi_W$ satisfies the constraint with a $2\epsilon$-violation.
\end{prop}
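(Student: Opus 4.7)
The plan is to reduce the feasibility claim to a direct consequence of the near-stationarity bound from Theorem \ref{thm:1}, by exploiting the fact that the hypothesis on $\lambda_W$ rules out a binding projection in the first coordinate of the gradient mapping. First, I would examine the $\lambda$-component of $G_W$ in \eqref{eq:Gt}. Under the hypothesis $\lambda_W - \frac{1}{\beta_W}\nabla_\lambda V^L_\sigma(\theta_W,\lambda_W) \in [0,\Lambda^*)$, the argument of the Euclidean projection $\Pi_{[0,\Lambda^*]}$ already lies inside the feasible interval, so the projection acts as the identity. Substituting into the definition, the first coordinate collapses to
\begin{equation*}
G_W^\lambda \;=\; \beta_W\!\left(\lambda_W - \left(\lambda_W - \tfrac{1}{\beta_W}\nabla_\lambda V^L_\sigma(\theta_W,\lambda_W)\right)\right) \;=\; \nabla_\lambda V^L_\sigma(\theta_W,\lambda_W).
\end{equation*}

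Next, I would compute this partial derivative explicitly. Since the Lagrangian has the linear-in-$\lambda$ form $V^L_\sigma(\theta,\lambda) = V_r^{\pi_\theta}(\rho) + \lambda(V_c^{\pi_\theta}(\rho) - b)$, we obtain $\nabla_\lambda V^L_\sigma(\theta_W,\lambda_W) = V_c^{\pi_{\theta_W}}(\rho) - b$. Combining with the previous step yields the identity $G_W^\lambda = V_c^{\pi_{\theta_W}}(\rho) - b$, which is the key reduction: the $\lambda$-component of the gradient mapping at the selected iterate is \emph{exactly} the constraint slack.

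Finally, I would invoke Theorem \ref{thm:1}. By the choice $W = \arg\min_{1\le t\le T}\|G_t\|$ and the conclusion of the theorem, $\|G_W\| \le 2\epsilon$. Dominating the first coordinate by the Euclidean norm gives $|V_c^{\pi_{\theta_W}}(\rho) - b| \le |G_W^\lambda| \le \|G_W\| \le 2\epsilon$, and in particular $V_c^{\pi_{\theta_W}}(\rho) \ge b - 2\epsilon$, which is the claimed $2\epsilon$-violation guarantee.

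The main conceptual subtlety, rather than any computational obstacle, is recognizing why the hypothesis on $\lambda_W$ is the right one: in the constrained setting, convergence of the gradient mapping to zero is a KKT-type condition asserting that the (negative) gradient of the Lagrangian lies in the normal cone of $[0,\Lambda^*]$, and an active upper bound at $\Lambda^*$ would only tell us $V_c^{\pi_{\theta_W}}(\rho) \le b$ may be arbitrarily violated while still leaving $G_W^\lambda$ small. Ruling out this case by requiring the pre-projection iterate to fall in $[0,\Lambda^*)$ is precisely what converts the stationarity measure into a bound on the constraint slack, so the proof effort is really about pinpointing the correct non-activeness condition rather than grinding through estimates.
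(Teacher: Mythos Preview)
Your argument is correct and in fact cleaner than the paper's own proof. You exploit the hypothesis directly: since the pre-projection point already lies in $[0,\Lambda^*)$, the projection is the identity and the $\lambda$-component of $G_W$ collapses to $\nabla_\lambda V^L_\sigma(\theta_W,\lambda_W)=V_c^{\pi_{\theta_W}}(\rho)-b$, from which the $2\epsilon$ bound follows immediately via $|G_W^\lambda|\le\|G_W\|\le 2\epsilon$. The paper instead takes an indirect route through a normal-cone lemma (Lemma~3 of \cite{ghadimi2016accelerated}): it writes $-\nabla_\lambda V^L_\sigma(\theta_W,\lambda^+)\in N_{[0,\Lambda^*]}(\lambda^+)+4\epsilon B$, then tests this variational inequality at $\lambda=\Lambda^*$ to extract $V_c^{\pi_{\theta_W}}(\rho)\ge b-4\epsilon$. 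That machinery is more general (it would still say something when the projection is active at $0$), but it loses a factor of two, and the paper's proof actually concludes with a $4\epsilon$-violation rather than the $2\epsilon$ stated in the proposition. Your direct approach both matches the claimed constant and makes transparent exactly why the hypothesis is needed: it is precisely the condition that the projection in the $\lambda$-coordinate is inactive, so the gradient mapping equals the constraint slack.
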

{In general,  convergence to stationary points of the Lagrangian function does not necessarily imply that the solution is feasible. Proposition \ref{prop:feasible} shows that Algorithm \ref{alg:rpd} always return a policy that is robust feasible, i.e., satisfying the constraints in \eqref{eq:pCRMDP}.   
}
Intuitively, if we set $\Lambda^*$ larger so that the optimal solution $\lambda^*\in[0,\Lambda^*)$, then Algorithm \ref{alg:rpd} is expected to converge to an interior point of $[0,\Lambda^*]$ and therefore, $\pi_W$ is feasible. 
On the other hand, $\Lambda^*$ can't be set too large. Note that the complexity in Theorem \ref{thm:1} depends on $\Lambda^*$ (see \eqref{eq:T} in the appendix), and a larger $\Lambda^*$ means a higher complexity.

\vspace{-0.3cm}
\section{$\delta$-Contamination Uncertainty Set}\label{sec:ex}
In this section, we investigate a concrete example of robust constrained RL with $\delta$-contamination uncertainty set. The method we developed here can be similarly extended to other type of uncertainty sets like KL-divergence or total variation. 
The $\delta$-contamination uncertainty set models the scenario where the state transition of the MDP could be arbitrarily perturbed with a small probability $\delta$. This model is widely used to model distributional uncertainty in the literature of robust learning and optimization, e.g., \cite{hub65,du2018robust,huber2009robust,nishimura2004search,Kiyohiko2006,prasad2020learning,prasad2020robust,wang2021online,wang2022policy}. Specifically, let $\mathsf P=\left\{p^a_s|s\in\mcs,a\in\mca \right\}$ be the centroid transition kernel, then the $\delta$-contamination uncertainty set centered at $\mathsf P$ is defined as $\mathcal{P}\triangleq\bigotimes_{s\in\mcs,a\in\mca}\mathcal{P}^a_s$, where
$
    \mathcal{P}^a_s\triangleq \left\{(1-\delta )p^a_s+\delta q|q\in\Delta_{\mcs} \right\},  s\in\mcs, a\in\mca.
$

Under the $\delta$-contamination setting, the robust Bellman operator  can be explicitly computed:
$
    \mathbf T_\pi V(s)= \sum_{a\in\mca} \pi(a|s) \left(r(s,a)+\gamma \left(\delta \min_{s'} V(s')+(1-\delta)\sum_{s'\in\mcs}p^a_{s,{s'}}V({s'})\right) \right).
$
In this case, the robust value function is non-differentiable due to the $\min$ term, and hence Assumption \ref{assumption:lipschitz} does not hold. 
One possible approach is to use sub-gradient  \cite{clarke1990optimization,wang2022policy}, which, however, is less stable, and its convergence is difficult to characterize. 
In the following, we design a differentiable and smooth approximation of the robust value function. Specifically,  
consider a smoothed robust Bellman operator $\mathbf T_{\sigma}^{\pi}$ using the LSE function \cite{wang2021online,wang2022policy}:
\begin{align}\label{eq:smoothed bellman}
    \mathbf T_{\sigma}^{\pi} V(s)&=\mathbb{E}_{A\sim\pi(\cdot|s)}\bigg[r(s,A)+\gamma(1-\delta)\sum_{{s'}\in\mcs} p^{A}_{s,{s'}} V({s'})+\gamma \delta \lse(\sigma, V)\bigg],
\end{align}
where
$\lse(\sigma,V)=\frac{\log (\sum^d_{i=1} e^{\sigma V(i)})}{\sigma}$ for $V\in\mathbb{R}^d$ and some $\sigma<0$.
The approximation error $|\lse(\sigma,V)-\min V|\to 0$ as $\sigma\to-\infty$, and hence the fixed point  of $\mathbf T_{\sigma}^{\pi}$, denoted by $V^\pi_\sigma$, is an approximation of the robust value function $V^\pi$ \cite{wang2022policy}. We refer to $V^\pi_\sigma$ as the smoothed robust value function and define the smoothed robust action-value function as $Q^\pi_\sigma(s,a)\triangleq r(s,a)+\gamma(1-\delta)\sum_{{s'}\in\mcs} p^{a}_{s,{s'}} V^\pi_\sigma({s'}) +\gamma \delta\lse(\sigma, V^\pi_\sigma)$. 
It can be shown that for any $\pi$, as $\sigma\rightarrow-\infty$, $\|V_r^\pi-V_{\sigma,r}^\pi \|\to 0$ and $\|V_c^\pi-V_{\sigma,c}^\pi \|\to 0$ \cite{wang2021online}.

The gradient of $V^{\pi_\theta}_\sigma$ can be computed explicitly \cite{wang2022policy}:
$
    \nabla V^{\pit}_\sigma(s)=B(s,\theta)+\frac{\gamma \delta\sum_{s\in\mcs} e^{\sigma \sv^{\pit}(s)} B(s,\theta)}{(1-\gamma)\sum_{s\in\mcs} e^{\sigma \sv^{\pit}(s)}},
$
where $B(s,\theta)\triangleq\frac{1}{1-\gamma+\gamma \delta} \sum_{{s'}\in\mcs} d^{\pit}_{s,\mathsf P}({s'})\sum_{a\in\mca} \nabla \pit(a|{s'})\sq^{\pit}({s'},a)$, and $d^{\pit}_{s,\mathsf P}(\cdot)$ is the visitation distribution of $\pi_\theta$ under $\mathsf P$ starting from $s$.  Denote the smoothed Lagrangian function by $V_\sigma^L(\theta,\lambda)\triangleq V^{\pit}_{\sigma,r}(\rho)+\lambda(V_{\sigma,c}^{\pit}(\rho)-b)$. 
The following lemma shows that $\nabla V^L_\sigma$ is Lipschitz.
\begin{lemma}\label{lemma:lip}
$\nabla V^L_\sigma$ is Lipschitz in $\theta$ and $\lambda$. And hence Assumption \ref{assumption:lipschitz} holds for $V_\sigma^L$.
\end{lemma}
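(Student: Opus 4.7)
\textbf{Proof proposal for Lemma \ref{lemma:lip}.} The plan is to verify each of the four Lipschitz inequalities in Assumption \ref{assumption:lipschitz} separately, by exploiting the special structure $V^L_\sigma(\theta,\lambda)=V^{\pi_\theta}_{\sigma,r}(\rho)+\lambda\bigl(V^{\pi_\theta}_{\sigma,c}(\rho)-b\bigr)$, which makes $V^L_\sigma$ linear in $\lambda$. A direct computation gives
\[
\nabla_\lambda V^L_\sigma(\theta,\lambda)=V^{\pi_\theta}_{\sigma,c}(\rho)-b, \qquad \nabla_\theta V^L_\sigma(\theta,\lambda)=\nabla_\theta V^{\pi_\theta}_{\sigma,r}(\rho)+\lambda\,\nabla_\theta V^{\pi_\theta}_{\sigma,c}(\rho).
\]
Hence three of the four required bounds reduce quickly to (i) boundedness of $\nabla_\theta V^{\pi_\theta}_{\sigma,x}$ and (ii) Lipschitzness of $\nabla_\theta V^{\pi_\theta}_{\sigma,x}$ in $\theta$, for $x\in\{r,c\}$; the fourth one ($L_{12}$) is trivially $0$ since $\nabla_\lambda V^L_\sigma$ does not depend on $\lambda$.

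First I would handle the easy partials. For $L_{12}$, the remark above yields $L_{12}=0$. For $L_{11}$, the difference $\nabla_\lambda V^L_\sigma|_{\theta_1}-\nabla_\lambda V^L_\sigma|_{\theta_2}=V^{\pi_{\theta_1}}_{\sigma,c}(\rho)-V^{\pi_{\theta_2}}_{\sigma,c}(\rho)$, so it suffices to show that the map $\theta\mapsto V^{\pi_\theta}_{\sigma,x}(\rho)$ is Lipschitz; this follows by integrating a uniform bound on $\|\nabla_\theta V^{\pi_\theta}_{\sigma,x}(\rho)\|$ obtained from the explicit formula for $\nabla V^{\pi_\theta}_\sigma$ together with the fact that the rewards/utilities lie in $[0,1]$ and $\pi_\theta$ is $k$-Lipschitz. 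For $L_{22}$, I write
\[
\nabla_\theta V^L_\sigma|_{\lambda_1}-\nabla_\theta V^L_\sigma|_{\lambda_2}=(\lambda_1-\lambda_2)\,\nabla_\theta V^{\pi_\theta}_{\sigma,c}(\rho),
\]
which again is controlled by the same uniform bound on $\|\nabla_\theta V^{\pi_\theta}_{\sigma,c}(\rho)\|$.

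The substantive step is $L_{21}$, namely showing that $\nabla_\theta V^{\pi_\theta}_{\sigma,x}(\rho)$ is itself Lipschitz in $\theta$ for $x\in\{r,c\}$; once this is in hand, the Lipschitz constant of $\nabla_\theta V^L_\sigma$ in $\theta$ is bounded by the sum of these constants weighted by $(1+\Lambda^*)$. Using the closed form
\[
\nabla V^{\pi_\theta}_\sigma(s)=B(s,\theta)+\frac{\gamma\delta\sum_{s'\in\mcs} e^{\sigma V^{\pi_\theta}_\sigma(s')} B(s',\theta)}{(1-\gamma)\sum_{s'\in\mcs} e^{\sigma V^{\pi_\theta}_\sigma(s')}},
\]
I would decompose the $\theta$-derivative of this expression using the product/quotient rule and control each factor in turn: (a) $B(s,\theta)$ depends on $\theta$ through $\pi_\theta$, $Q^{\pi_\theta}_\sigma$, and the visitation distribution $d^{\pi_\theta}_{s,\mathsf P}$; the Lipschitzness of $B(s,\cdot)$ then follows by combining Assumption 1 ($k$-Lipschitz and $l$-smooth $\pi_\theta$), the boundedness and Lipschitzness of $V^{\pi_\theta}_\sigma$ and $Q^{\pi_\theta}_\sigma$ in $\theta$ (derived from the smoothed Bellman equation via a contraction argument), and the standard fact that $d^{\pi_\theta}_{s,\mathsf P}$ is Lipschitz in $\theta$ (proved by differentiating the geometric sum $d=(1-\gamma)\sum_t \gamma^t (P^{\pi_\theta})^t$ and using $k$-Lipschitzness of $\pi_\theta$); (b) the normalized softmax weights $e^{\sigma V^{\pi_\theta}_\sigma(s')}/\sum_{s''}e^{\sigma V^{\pi_\theta}_\sigma(s'')}$ are smooth functions of $V^{\pi_\theta}_\sigma$ with derivatives bounded uniformly in terms of $|\sigma|$ and $|\mcs|$, so Lipschitzness in $\theta$ transfers from $V^{\pi_\theta}_\sigma$ to these weights.

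The main obstacle is the bookkeeping in step (a): showing that the map $\theta\mapsto d^{\pi_\theta}_{s,\mathsf P}$ is Lipschitz and, jointly with it, that $\theta\mapsto Q^{\pi_\theta}_\sigma$ is Lipschitz — these are coupled through the smoothed Bellman fixed-point equation and require a careful simultaneous bound, typically by first bounding $\|V^{\pi_{\theta_1}}_\sigma-V^{\pi_{\theta_2}}_\sigma\|_\infty$ via the Bellman contraction with modulus $\gamma$ and then propagating this estimate to $Q^{\pi_\theta}_\sigma$ and to $B(s,\theta)$. Once all the constituent Lipschitz/boundedness constants are assembled, I combine them with the triangle inequality and the bound $\lambda\in[0,\Lambda^*]$ to extract explicit values of $L_{11},L_{12},L_{21},L_{22}$, concluding that Assumption \ref{assumption:lipschitz} holds for $V^L_\sigma$.
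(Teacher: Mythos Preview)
Your proposal is correct and follows essentially the same route as the paper: compute $\nabla_\lambda V^L_\sigma$ and $\nabla_\theta V^L_\sigma$ explicitly, observe that $L_{12}=0$ by linearity in $\lambda$, and reduce $L_{11},L_{22},L_{21}$ to (respectively) Lipschitzness of $\theta\mapsto V^{\pi_\theta}_{\sigma,c}$, a uniform bound on $\|\nabla_\theta V^{\pi_\theta}_{\sigma,c}\|$, and Lipschitzness of $\theta\mapsto\nabla_\theta V^{\pi_\theta}_{\sigma,x}$, combined with $\lambda\in[0,\Lambda^*]$. The only difference is that the paper simply invokes these three properties of the smoothed robust value function as established results from \cite{wang2022policy} (with constants $C^V_\sigma$ and $L_\sigma$), whereas you sketch how to derive them from scratch via the explicit gradient formula and the contraction of the smoothed Bellman operator; your derivation is sound but unnecessary given the citation.
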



A natural idea is to use the smoothed robust value functions to replace the ones in \eqref{eq:dual}:
\begin{align}\label{eq:smoothed dual}
    \min_{\lambda\geq 0} \max_{\pi\in\Pi_\Theta} V^\pi_{\sigma,r}(\rho)+\lambda(V_{\sigma,c}^\pi(\rho)-b).
\end{align}
As will be shown below in Lemma \ref{lemma2}, this approximation can be arbitrarily close to the original problem in \eqref{eq:dual} as $\sigma\rightarrow-\infty$. 
We first show that under Assumption \ref{ass:slater}, the following Slater's condition holds for the smoothed problem in \eqref{eq:smoothed dual}.
\begin{lemma}
Let $\sigma$ be sufficiently small such that $\|V^\pi_{\sigma,c}-V_c^\pi\|<\zeta$ for any $\pi$, then there exists $\zeta'>0$ and a policy $\pi'\in\Pi_\Theta$ s.t. $V_{\sigma,c}^{\pi'}(\rho)-b\geq \zeta'$.
\end{lemma}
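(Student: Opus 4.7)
The plan is to leverage the existing Slater policy from Assumption \ref{ass:slater} together with the uniform closeness hypothesis $\|V^\pi_{\sigma,c}-V_c^\pi\|<\zeta$ to transfer strict feasibility from the unsmoothed problem to the smoothed one. This is essentially a perturbation argument, and the choice $\pi'=\pi$, where $\pi$ is the policy guaranteed by Slater's condition, should work.

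First, invoke Assumption \ref{ass:slater} to obtain a policy $\pi\in\Pi_\Theta$ satisfying $V_c^\pi(\rho)-b\geq \zeta$. Set $\pi'=\pi$. Next, write
\begin{equation*}
V_{\sigma,c}^{\pi'}(\rho)-b \;=\; \bigl(V_c^{\pi'}(\rho)-b\bigr) + \bigl(V_{\sigma,c}^{\pi'}(\rho)-V_c^{\pi'}(\rho)\bigr).
\end{equation*}
The first term is at least $\zeta$ by construction. For the second, since $\rho$ is a probability distribution on $\mcs$, we have $|V_{\sigma,c}^{\pi'}(\rho)-V_c^{\pi'}(\rho)| \le \|V_{\sigma,c}^{\pi'}-V_c^{\pi'}\|_\infty$, which by hypothesis is strictly less than $\zeta$.

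Combining these bounds yields
\begin{equation*}
V_{\sigma,c}^{\pi'}(\rho)-b \;\ge\; \zeta - \|V_{\sigma,c}^{\pi'}-V_c^{\pi'}\|_\infty \;>\; 0,
\end{equation*}
so choosing $\zeta' \triangleq \zeta - \|V_{\sigma,c}^{\pi'}-V_c^{\pi'}\|_\infty > 0$ completes the argument. The only subtlety worth flagging is confirming that the norm used in the hypothesis dominates the gap at the initial distribution $\rho$; this is immediate from convexity of the absolute value (or equivalently, from $\rho\in\Delta_\mcs$), so no real obstacle arises. The existence of a sufficiently small $\sigma$ making the approximation error uniformly bounded by any prescribed tolerance is already guaranteed by the convergence $\|V_c^\pi-V_{\sigma,c}^\pi\|\to 0$ as $\sigma\to-\infty$ noted after \eqref{eq:smoothed bellman}, so the hypothesis of the lemma is consistently attainable.
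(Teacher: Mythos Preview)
Your argument is correct and is exactly the natural perturbation argument one expects here; the paper in fact omits a proof of this lemma entirely (it is stated without proof and not revisited in the appendix), so your write-up fills the gap with the intended reasoning. The only cosmetic remark is that your $\zeta'$ depends on the particular $\sigma$ and $\pi'$, but since the lemma only asserts existence of some $\zeta'>0$ this is perfectly fine.
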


The following lemma shows that the optimal dual variable for \eqref{eq:smoothed dual} is also bounded.
\begin{lemma}\label{lemma:222}
Denote the optimal solution of \eqref{eq:smoothed dual} by $(\lambda^*,\pi_{\theta^*})$. Then $\lambda^*\in\left[0,\frac{2C_\sigma}{{\zeta'}}\right]$, where $C_\sigma$ is the upper bound of smoothed robust value functions $V^\pi_{\sigma,c}$.
\end{lemma}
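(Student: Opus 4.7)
The plan is to mirror the argument used for Lemma \ref{lemma:lam_bound}, with the robust value functions replaced by their smoothed counterparts and the Slater point replaced by the one guaranteed by the preceding lemma. The key structural observation is that $\lambda^{*}$ is defined as a minimizer of the dual function $g(\lambda)\triangleq\max_{\theta\in\Theta}\bigl[V^{\pi_\theta}_{\sigma,r}(\rho)+\lambda(V^{\pi_\theta}_{\sigma,c}(\rho)-b)\bigr]$, so it can be pinned between an explicit upper bound (obtained by evaluating $g$ at $\lambda=0$) and an explicit lower bound (obtained by evaluating the inner max at the Slater point).

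Concretely, the first step will be to note that since $\lambda^{*}\in\arg\min_{\lambda\ge 0}g(\lambda)$, we have
\begin{equation*}
g(\lambda^{*})\;\le\;g(0)\;=\;\max_{\theta\in\Theta}V^{\pi_\theta}_{\sigma,r}(\rho)\;\le\;C_\sigma,
\end{equation*}
where I use the uniform bound $|V^{\pi}_{\sigma,r}|\le C_\sigma$ on the smoothed robust value function. Second, I would use the Slater policy $\pi'\in\Pi_\Theta$ from the previous lemma, which satisfies $V^{\pi'}_{\sigma,c}(\rho)-b\ge\zeta'$, as a feasible candidate in the inner maximization to obtain
\begin{equation*}
g(\lambda^{*})\;\ge\;V^{\pi'}_{\sigma,r}(\rho)+\lambda^{*}\bigl(V^{\pi'}_{\sigma,c}(\rho)-b\bigr)\;\ge\;V^{\pi'}_{\sigma,r}(\rho)+\lambda^{*}\zeta'.
\end{equation*}
Combining these two inequalities gives $\lambda^{*}\zeta'\le C_\sigma-V^{\pi'}_{\sigma,r}(\rho)$. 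Using the lower bound $V^{\pi'}_{\sigma,r}(\rho)\ge -C_\sigma$ on the smoothed value function then yields $\lambda^{*}\zeta'\le 2C_\sigma$, which rearranges to the desired bound $\lambda^{*}\in[0,2C_\sigma/\zeta']$.

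The main technical subtlety I anticipate is justifying the uniform two-sided bound $|V^{\pi}_{\sigma,r}(\rho)|\le C_\sigma$ (and similarly for $c$) across all policies. Unlike the original robust value function which automatically lives in $[0,1/(1-\gamma)]$ because rewards are in $[0,1]$, the LSE-smoothed version $\lse(\sigma,V)$ introduces an additive bias of order $\log|\mathcal{S}|/|\sigma|$ that can push the fixed point of $\mathbf{T}^{\pi}_\sigma$ slightly outside $[0,1/(1-\gamma)]$. The bound $C_\sigma$ must therefore be derived from the contraction of the smoothed Bellman operator together with an explicit estimate of $\lse(\sigma,\cdot)$, and it is this constant that absorbs the smoothing slack. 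Once $C_\sigma$ is in hand, the remainder of the argument is a direct transcription of the proof of Lemma \ref{lemma:lam_bound}, so no further difficulty is expected.
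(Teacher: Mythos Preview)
Your proposal is correct and follows essentially the same argument as the paper: bound the dual function at $\lambda^*$ from above by $g(0)\le C_\sigma$ and from below via the Slater policy $\pi'$ to obtain $\lambda^*\zeta'\le C_\sigma - V^{\pi'}_{\sigma,r}(\rho)$. The only minor difference is that the paper effectively drops the $-V^{\pi'}_{\sigma,r}(\rho)$ term (obtaining the slightly tighter $\lambda^*\le C_\sigma/\zeta'$, which of course implies the stated bound), whereas you explicitly invoke the two-sided bound $|V^{\pi}_{\sigma,r}|\le C_\sigma$ to handle the LSE bias---a reasonable precaution that lands exactly on the constant $2C_\sigma/\zeta'$ in the statement.
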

Denote by $\Lambda^*=\max\left\{\frac{2C_\sigma}{\zeta'}, \frac{2}{\zeta(1-\gamma)}\right\}$, then  problems \eqref{eq:bounded dual} and \eqref{eq:smoothed dual} are equivalent to the following bounded ones:
$
     \min_{\lambda\in\left[0,\Lambda^*\right]} \max_{\pi\in\Pi_\Theta} V^\pi_r(\rho)+\lambda(V_c^\pi(\rho)-b),
$
and 
\begin{align}\label{eq:boundedsmoothed}
     \min_{\lambda\in[0,\Lambda^*]} \max_{\pi\in\Pi_\Theta} V^\pi_{\sigma,r}(\rho)+\lambda(V_{\sigma,c}^\pi(\rho)-b).
\end{align}
The following lemma shows that the two problems are within a gap of $\mathcal{O}(\epsilon)$.
\begin{lemma}\label{lemma2}
Choose a small enough $\sigma$ such that $\|V_r^\pi-V_{\sigma,r}^\pi \|\leq \epsilon$ and $\|V_c^\pi-V_{\sigma,c}^\pi \|\leq \epsilon$. Then
\begin{align*}
    \left|\min_{\lambda\in[0,\Lambda^*]} \max_{\pi\in\Pi_\Theta} V^\pi_{\sigma,r}(\rho)+\lambda(V_{\sigma,c}^\pi(\rho)-b)-\min_{\lambda\in[0,\Lambda^*]} \max_{\pi\in\Pi_\Theta} V^\pi_{r}(\rho)+\lambda(V_c^\pi(\rho)-b)\right|  \leq \left(1+\Lambda^*\right)\epsilon.
\end{align*}
\end{lemma}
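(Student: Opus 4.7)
The plan is to establish the bound pointwise in $(\theta,\lambda)$ and then invoke the standard monotonicity fact that a uniform bound on two functions transfers to a bound on their min-max values.

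First, I would fix an arbitrary $\lambda\in[0,\Lambda^*]$ and $\theta\in\Theta$, and bound the difference of the two Lagrangians directly. Writing $L_\sigma(\theta,\lambda)\triangleq V^{\pi_\theta}_{\sigma,r}(\rho)+\lambda(V^{\pi_\theta}_{\sigma,c}(\rho)-b)$ and $L(\theta,\lambda)\triangleq V^{\pi_\theta}_{r}(\rho)+\lambda(V^{\pi_\theta}_{c}(\rho)-b)$, the triangle inequality gives
\begin{equation*}
|L_\sigma(\theta,\lambda)-L(\theta,\lambda)|\leq |V^{\pi_\theta}_{\sigma,r}(\rho)-V^{\pi_\theta}_{r}(\rho)|+\lambda\,|V^{\pi_\theta}_{\sigma,c}(\rho)-V^{\pi_\theta}_{c}(\rho)|.
\end{equation*}
Applying the hypothesis $\|V^\pi_r-V^\pi_{\sigma,r}\|\leq\epsilon$ and $\|V^\pi_c-V^\pi_{\sigma,c}\|\leq\epsilon$, together with $\lambda\leq\Lambda^*$, yields the uniform pointwise bound $|L_\sigma(\theta,\lambda)-L(\theta,\lambda)|\leq (1+\Lambda^*)\epsilon$ for every $(\theta,\lambda)\in\Theta\times[0,\Lambda^*]$.

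Next, I would promote this pointwise bound to the min-max values using the elementary inequalities $|\max_\theta f(\theta,\lambda)-\max_\theta g(\theta,\lambda)|\leq \sup_{\theta}|f(\theta,\lambda)-g(\theta,\lambda)|$ and similarly for $\min_\lambda$. Composing these gives
\begin{equation*}
\Bigl|\min_{\lambda\in[0,\Lambda^*]}\max_{\theta\in\Theta} L_\sigma(\theta,\lambda)-\min_{\lambda\in[0,\Lambda^*]}\max_{\theta\in\Theta} L(\theta,\lambda)\Bigr|\leq \sup_{\theta,\lambda}|L_\sigma(\theta,\lambda)-L(\theta,\lambda)|,
\end{equation*}
which is at most $(1+\Lambda^*)\epsilon$ by the previous step, giving the claim.

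I do not expect a genuine obstacle here: the result is essentially a stability statement under uniform perturbation of the integrand. The only care required is to make sure the $\lambda$ factor in the constraint term is handled by the uniform bound $\lambda\leq\Lambda^*$ rather than left as a free parameter, which is precisely where the $\Lambda^*$ in the right-hand side comes from. Everything else is the triangle inequality and the standard order-preserving behavior of $\min$ and $\max$.
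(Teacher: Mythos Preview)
Your proposal is correct and follows essentially the same approach as the paper: the paper first bounds the inner maxima $|V^D(\lambda)-V^D_\sigma(\lambda)|\leq (1+\lambda)\epsilon\leq (1+\Lambda^*)\epsilon$ for each $\lambda$ (which is exactly your pointwise Lagrangian bound promoted to the $\max_\theta$), and then passes to the $\min_\lambda$ by the same monotonicity argument you invoke. Your write-up is, if anything, slightly more explicit about the elementary $\min$/$\max$ stability inequalities than the paper's sketch.
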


In the following, we hence focus on the smoothed dual problem in \eqref{eq:boundedsmoothed}, which is an accurate approximation of the original problem \eqref{eq:bounded dual}. 
Denote  the gradient mapping of the smoothed Lagrangian function $V_\sigma^L$ by
\begin{align}
G_t\triangleq\left[
\begin{array}{lr}
     &\beta_t \left(\lambda_t-\mathbf{\prod}_{[0,\Lambda^*]} \left(\lambda_t -\frac{1}{\beta_t} \left(\nabla_\lambda V^L_\sigma (\theta_t,\lambda_t) \right)\right)\right)  \\
     & \alpha_t\left(\theta_t-\mathbf{\prod}_\Theta \left(\theta_t+\frac{1}{\alpha_t} \left(\nabla_\theta V^L_\sigma (\theta_t,\lambda_t) \right) \right)\right) 
\end{array}
\right].
\end{align}
Applying our RPD algorithm in \eqref{eq:boundedsmoothed}, we have the following convergence guarantee.
\begin{corollary}
If we 
 set step sizes $\alpha_t,\beta_t,$ and $b_t$ as in Section \ref{sec:constants} and set $T=\mathcal{O}(\epsilon^{-4})$, then 
$
    \min_{1\leq t\leq T}\|G_t\|\leq 2\epsilon.
$
\end{corollary}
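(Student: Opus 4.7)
The plan is to reduce the corollary to a direct invocation of Theorem \ref{thm:1} applied to the smoothed Lagrangian $V_\sigma^L$ in place of $V^L$. Theorem \ref{thm:1} is stated abstractly: its hypothesis is Assumption \ref{assumption:lipschitz} on the Lagrangian, and its conclusion is a bound on the gradient mapping produced by the RPD iterates with the prescribed schedule for $\alpha_t,\beta_t,b_t$. Nothing in the proof of Theorem \ref{thm:1} uses the specific form of the reward/utility robust value functions, only the Lipschitz smoothness of the gradients of the Lagrangian and the boundedness of the dual set $[0,\Lambda^*]$. So the whole task is to check that these two structural requirements are satisfied by the smoothed problem \eqref{eq:boundedsmoothed}.

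First, I would verify that the smoothed Lagrangian $V_\sigma^L(\theta,\lambda)=V^{\pi_\theta}_{\sigma,r}(\rho)+\lambda\bigl(V^{\pi_\theta}_{\sigma,c}(\rho)-b\bigr)$ satisfies Assumption \ref{assumption:lipschitz}. This is exactly the content of Lemma \ref{lemma:lip}, which asserts that $\nabla V_\sigma^L$ is Lipschitz in both $\theta$ and $\lambda$, so the four Lipschitz constants $L_{11},L_{12},L_{21},L_{22}$ exist for the smoothed problem. Second, the dual variable is constrained to the bounded interval $[0,\Lambda^*]$ with $\Lambda^*=\max\{2C_\sigma/\zeta',2/(\zeta(1-\gamma))\}$, whose finiteness is guaranteed by Lemma \ref{lemma:222} (applied in the smoothed setting) together with the Slater condition derived earlier for the smoothed problem; this is the same boundedness structure used in \eqref{eq:bounded dual} that underlies Theorem \ref{thm:1}.

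With both ingredients in hand, I would instantiate Theorem \ref{thm:1} with the Lipschitz constants coming from Lemma \ref{lemma:lip} and the dual bound $\Lambda^*$ above. The step-size prescription in Section \ref{sec:constants} is formulated in terms of these constants, so plugging them in gives a valid schedule for $\alpha_t,\beta_t,b_t$ for the smoothed problem. The gradient mapping $G_t$ in the corollary is defined with respect to $V_\sigma^L$, which matches the object controlled by Theorem \ref{thm:1} once we substitute $V_\sigma^L$ for $V^L$. Consequently, running RPD for $T=\mathcal{O}(\epsilon^{-4})$ iterations yields $\min_{1\leq t\leq T}\|G_t\|\leq 2\epsilon$, which is the conclusion.

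The only nontrivial step is the verification that Theorem \ref{thm:1} is truly agnostic to whether the Lagrangian comes from the original or the smoothed problem, i.e., that its proof uses only Assumption \ref{assumption:lipschitz} and the boundedness of the dual set. I do not expect a genuine obstacle here, since the smoothed problem was designed precisely to satisfy Assumption \ref{assumption:lipschitz} (whereas the original $V^L$ fails to be differentiable under $\delta$-contamination). Thus the corollary is essentially a direct specialization of Theorem \ref{thm:1} to the smoothed dual \eqref{eq:boundedsmoothed}, and the proof reduces to citing Lemma \ref{lemma:lip}, Lemma \ref{lemma:222}, and Theorem \ref{thm:1}.
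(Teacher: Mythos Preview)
Your proposal is correct and matches the paper's approach: the corollary is stated without a separate proof precisely because it is a direct specialization of Theorem~\ref{thm:1} to the smoothed Lagrangian $V_\sigma^L$, with Lemma~\ref{lemma:lip} supplying Assumption~\ref{assumption:lipschitz} and Lemma~\ref{lemma:222} supplying the bounded dual interval $[0,\Lambda^*]$. The paper even remarks that Theorem~\ref{thm:1} itself is proved in the same way as Theorem~\ref{thm2}, so your reduction is exactly the intended one.
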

This corollary implies that our robust primal-dual algorithm  converges to a stationary point of the min-max problem \eqref{eq:boundedsmoothed} under the $\delta$-contamination model, with a complexity of $\mathcal{O}(\epsilon^{-4})$.

\begin{wrapfigure}{L}{0.5\textwidth}
\begin{minipage}{0.5\textwidth}
\begin{algorithm}[H]
        \caption{Smoothed Robust TD \cite{wang2022policy}}
\label{alg:rtd}
\textbf{Input}:   $T_{\text{inner}}$, $\pi$, $\sigma$, $c$\\
\textbf{Initialization}: $Q_0$, $s_0$
\begin{algorithmic} 
\FOR {$t=0,1,...,T_{\text{inner}}-1$}
\STATE \hspace{-0.2cm}Choose $a_t\sim\pi(\cdot|s_t)$ and observe $c_t, s_{t+1}$
\STATE \hspace{-0.2cm}$V_t(s)\leftarrow \sum_{a\in\mca} \pi(a|s)Q_t(s,a)$ \text{ for all }$s\in\mcs$
\STATE \hspace{-0.2cm}{$Q_{t+1}(s_t,a_t)\leftarrow Q_t(s_t,a_t)+\alpha_t \big(c_t+\gamma (1-\quad\delta) \cdot V_t(s_{t+1})+\gamma \delta \cdot \text{LSE}(\sigma, V_t)-Q_t(s_t,a_t)\big)$}
\ENDFOR
\end{algorithmic}
\textbf{Output}: $Q_{T_{\text{inner}},c}\triangleq Q_{T_{\text{inner}}}$
\end{algorithm}
\end{minipage}
\end{wrapfigure}

Note that Algorithm \ref{alg:rpd} assumes knowledge of the smoothed robust value functions which may not be available in practice. Different from the non-robust value function which can be estimated using Monte Carlo, robust value functions are the value function corresponding to the worst-case transition kernel from which no samples are directly taken. To solve this issue, we adopt the smoothed robust TD algorithm (Algorithm \ref{alg:rtd}) from \cite{wang2022policy} to estimate the smoothed robust value functions.

  It was shown that the smoothed robust TD algorithm converges to the smoothed robust value function with a sample complexity of $\mathcal{O}(\epsilon^{-2})$ \cite{wang2022policy} under the tabular case. We then construct our online and model-free  RPD algorithm as in Algorithm \ref{alg:srpd}. We note that Algorithm \ref{alg:srpd} is for the tabular setting with finite $\mcs$ and $\mca$. It can be easily extended to the case with large/continuous $\mcs$ and $\mca$ using function approximation.

      \begin{algorithm}[H]
\caption{Online Robust Primal-Dual algorithm}
\label{alg:srpd}
\textbf{Input}:   $T$, $\sigma$, $\epe$, $\beta_t$, $\alpha_t$, $b_t$,$r$,$c$\\
\textbf{Initialization}: $\lambda_0$, $\theta_0$\
\begin{algorithmic}
\FOR {$t=0,1,...,T-1$}
\STATE {Set $T_{\text{inner}}=\mathcal{O}\left(\frac{(t+1)^{1.5}}{\epe^2}\right)$ and run Algorithm \ref{alg:rtd} for $r$ and $c$,   output  $Q_{T_{\text{inner}},r}$, $Q_{T_{\text{inner}},c}$}
\STATE {$\hat{V}^{\pi_{\theta_t}}_{\sigma,r}(s)\leftarrow \sum_a\pi_{\theta_t}(a|s)Q_{T_{\text{inner}},r}(s,a) $, $\hat{V}^{\pi_{\theta_t}}_{\sigma,c}(s) \leftarrow \sum_a\pi_{\theta_t}(a|s)Q_{T_{\text{inner}},c}(s,a) $}
\STATE{$\hat{V}^{\pi_{\theta_t}}_{\sigma,r}(\rho) \leftarrow \sum_s \rho(s)\hat{V}^{\pi_{\theta_t}}_{\sigma,r}(s) $, $\hat{V}^{\pi_{\theta_t}}_{\sigma,c}(\rho) \leftarrow \sum_s \rho(s)\hat{V}^{\pi_{\theta_t}}_{\sigma,c}(s) $}
\STATE {$\lambda_{t+1}\leftarrow \mathbf{\prod}_{[0,\Lambda^*]} \left(\lambda_t -\frac{1}{\beta_t} \left(\hat{V}^{\pi_{\theta_t}}_{\sigma,c}(\rho)-b \right)-\frac{b_t}{\beta_t}\lambda_t  \right)$}
\STATE {$\theta_{t+1}\leftarrow \mathbf{\prod}_\Theta \left(\theta_t+\frac{1}{\alpha_t} \left(\nabla_{\theta}\hat{V}^{\pi_{\theta_t}}_{\sigma,r}(\rho)+\lambda_{t+1}\nabla_\theta \hat{V}^{\pi_{\theta_t}}_{\sigma,c}(\rho)  \right) \right)$}
\ENDFOR
\end{algorithmic}
\textbf{Output}: $\theta_T$
\end{algorithm}

Algorithm \ref{alg:srpd} can be viewed as a biased stochastic gradient descent-ascent algorithm. It is a sample-based algorithm without assuming any knowledge of robust value functions, and can be performed in an online fashion. 
We further extend the convergence results in Theorem \ref{thm:1} to the model-free setting, and characterize the following finite-time error bound of Algorithm \ref{alg:srpd}. Similarly, Algorithm \ref{alg:srpd} can be shown to achieve a $2\epsilon$-feasible policy almost surely. 

{Under the online model-free setting, the estimation of the robust value functions is biased. Therefore, the analysis is more challenging than the existing literature, where it is usually assumed that the gradients are exact. We develop a new method to bound the bias accumulated in every iteration of the algorithm, and establish the final convergence results.}
\begin{theorem}\label{thm2}
Consider the same conditions as in Theorem \ref{thm:1}. Let $\epe=\mathcal{O}(\epsilon^2)$ and $T=\mathcal{O}(\epsilon^{-4})$, then $
    \min_{1\leq t\leq T}\|G_t\|\leq (1+\sqrt{2})\epsilon.
$
\end{theorem}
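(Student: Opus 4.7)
The plan is to lift the proof of Theorem \ref{thm:1} from the exact-gradient setting to the biased-estimator setting introduced by Algorithm \ref{alg:srpd}. The exact-gradient analysis already yields a $2\epsilon$ bound on $\min_t \|G_t\|$; in the online version, the primal and dual updates use the estimated value functions $\hat V^{\pi_{\theta_t}}_{\sigma,r}$ and $\hat V^{\pi_{\theta_t}}_{\sigma,c}$ obtained from Algorithm \ref{alg:rtd}, so each step introduces a controlled perturbation of the true gradient direction. My strategy is to carry the same Lyapunov-style descent-ascent argument used for Theorem \ref{thm:1} while carefully tracking this perturbation and choosing the inner loop length $T_{\text{inner}}$ so that the accumulated bias is summable.

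The first technical step is to quantify the per-iteration gradient error. The smoothed robust TD guarantee of \cite{wang2022policy} gives $\|Q_{T_{\text{inner}}} - Q_\sigma^{\pi_{\theta_t}}\|$ of order $1/\sqrt{T_{\text{inner}}}$. Plugging the schedule $T_{\text{inner}} = \mathcal{O}((t+1)^{1.5}/\epe^2)$ yields an estimation error $\|\hat V_\sigma^{\pi_{\theta_t}} - V_\sigma^{\pi_{\theta_t}}\|$ of order $\epe/(t+1)^{0.75}$. Because the explicit formula for $\nabla V^{\pi_\theta}_\sigma$ from Section \ref{sec:ex} is linear in the action-value function (the visitation distribution is unchanged and $B(s,\theta)$ inherits a Lipschitz dependence on $Q_\sigma$), this estimation error translates, up to constants from Assumption 1 and Lemma \ref{lemma:lip}, into a gradient-estimation error of the same order. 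Consequently the $\lambda$- and $\theta$-updates in Algorithm \ref{alg:srpd} correspond to the exact-gradient updates plus a bias vector $e_t$ with $\|e_t\|^2 = \mathcal{O}(\epe^2/(t+1)^{1.5})$.

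Next, I would rerun the descent-ascent inequalities from Theorem \ref{thm:1} with $\nabla V^L_\sigma$ replaced by $\nabla V^L_\sigma + e_t$. Using the Lipschitzness from Assumption \ref{assumption:lipschitz} and the boundedness of $\lambda_t$ in $[0,\Lambda^*]$, each smoothness-based upper bound on the Lyapunov difference acquires an extra term that is linear in $\|e_t\|$ and an $\mathcal{O}(\|e_t\|^2)$ term. Summing over $t=0,\ldots,T-1$, the key observation is that $\sum_{t\geq 0} (t+1)^{-1.5}$ converges, so the total accumulated squared bias is $\mathcal{O}(\epe^2)$ independent of $T$. Dividing by $T$ and taking the minimum over $t$ as in the exact case, the bias contributes an additional $\mathcal{O}(\epe)$ to the bound on $\|G_t\|$. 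Choosing $\epe = \mathcal{O}(\epsilon^2)$ turns this extra term into $\mathcal{O}(\epsilon)$; combined with the $2\epsilon$ bound inherited from the exact-gradient analysis and the correct constant bookkeeping, this yields $\min_{1 \leq t \leq T}\|G_t\| \leq (1+\sqrt{2})\epsilon$.

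The main obstacle is propagating the bias without an exponential blow-up. Because the biased update at step $t$ shifts $(\theta_{t+1},\lambda_{t+1})$, which in turn determines the policy used by the TD estimator at step $t+1$, a naive telescoping would couple the errors multiplicatively. The remedy is precisely the growing schedule $T_{\text{inner}} = \mathcal{O}((t+1)^{1.5}/\epe^2)$: the $(t+1)^{1.5}$ factor guarantees that $\sum_t \|e_t\|^2$ is absolutely summable so that the bias terms decouple additively in the Lyapunov sum, mirroring the square-summable step-size device in biased stochastic gradient methods. Verifying that the constants multiplying $\sum \|e_t\|$ and $\sum \|e_t\|^2$ remain independent of $T$, and that the resulting total sample complexity $\sum_{t=0}^{T-1} T_{\text{inner}}(t)$ stays polynomial in $\epsilon^{-1}$ (as Theorem \ref{thm2} implicitly claims through its $T = \mathcal{O}(\epsilon^{-4})$ outer-loop count), is the delicate bookkeeping I would spend the most care on.
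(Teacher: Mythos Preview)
Your high-level strategy---carry the Lyapunov descent--ascent analysis of Theorem~\ref{thm:1} through with the exact gradients replaced by biased estimates, and track the resulting perturbation---is exactly what the paper does. The per-step estimation error $\|e_t\| = \mathcal{O}\!\left(\epe/(t+1)^{0.75}\right)$ that you extract from the TD guarantee together with the growing $T_{\text{inner}}$ schedule is also correct.

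The gap is in your accounting of how the bias enters the telescoped sum. The Lyapunov construction underlying Theorem~\ref{thm:1} (taken from \cite{xu2020unified}) carries the decaying regularizer $b_t \propto t^{-0.25}$, and the bias in the $\lambda$-direction appears multiplied by $1/(b_t\xi) \propto t^{0.25}$, a coefficient that \emph{grows} in $t$. In the paper's notation the per-step bias term $S_t$ satisfies $|S_t| = \mathcal{O}(t^{0.25}\|e_t\|) = \mathcal{O}(\epe/t^{0.5})$, so $\sum_{t=1}^{T}|S_t| = \mathcal{O}(\epe\sqrt{T})$, which is \emph{not} bounded independently of $T$. Moreover, the normalization is not division by $T$ but a weighted average with weights $1/p_t$ where $p_t \propto t^{0.5}$, i.e.\ division by $\sum_t 1/p_t \propto \sqrt{T}$; only after that does the bias contribution become $\mathcal{O}(\epe)$ in the bound on $\|\tilde G_t\|^2$. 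Your emphasis on the summability of $\sum_t \|e_t\|^2$ is therefore a red herring: the dominant term is the \emph{linear} one, amplified by $t^{0.25}$, and your stated hope that ``the constants multiplying $\sum \|e_t\|$ remain independent of $T$'' is precisely what fails.

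A second point: the constant $(1+\sqrt{2})$ does not come from bias bookkeeping. It arises from comparing the regularized gradient mapping $\tilde G_t$ (which uses $\nabla_\lambda V^L_\sigma + b_{t-1}\lambda_t$) with the true $G_t$ via $\|G_t\| \leq \|\tilde G_t\| + b_{t-1}|\lambda_t|$. The paper bounds $\min_t\|\tilde G_t\| \leq \sqrt{2}\,\epsilon$ from the weighted-average argument and then uses $b_{t-1}|\lambda_t| \leq \epsilon$ for $t$ large enough, giving $(1+\sqrt{2})\epsilon$.
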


\section{Numerical Results}

In this section, we numerically demonstrate the robustness of our algorithm in terms of both maximizing robust reward value function and satisfying constraints under model uncertainty. We compare our RPD algorithm with the heuristic algorithms in \cite{russel2021lyapunov,mankowitz2020robust} and the vanilla non-robust primal-dual method. Based on the idea of "robust policy evaluation" + "non-robust policy improvement" in \cite{russel2021lyapunov,mankowitz2020robust}, we combine the robust TD algorithm \ref{alg:rtd} with non-robust vanilla policy gradient method \cite{sutton1999policy}, which we refer to as the heuristic primal-dual algorithm.
%
Several environments, including   Garnet  \cite{archibald1995generation},     $8\times 8$ Frozen-Lake and Taxi environments from OpenAI \cite{brockman2016openai}, are investigated. 

We first run the  algorithm and store the obtained policies $\pi_t$ at each time step. At each time step,  we run robust TD with a sample size 200 for 30 times to estimate the  objective  $V_r(\rho)$ and the constraint $V_c(\rho)$. We then plot them v.s. the number of iterations $t$.  The upper and lower envelopes of the curves correspond to the 95 and 5 percentiles of the 30 curves, respectively. We repeat the experiment for two different values of $\delta=0.2,0.3$.

\textbf{Garnet problem.}
A Garnet problem can be specified by $\mathcal{G}(S_n,A_n)$, where the state space $\mcs$ has $S_n$ states $(s_1,...,s_{S_n})$ and action space has $A_n$ actions $(a_1,...,a_{A_n})$. The agent can take any actions in any state, and receives a randomly generated reward/utility signal {generated  from the uniform distribution on [0,1]}. The transition kernels are also randomly generated. The comparison results are shown in Fig.\ref{Fig.g}.

\textbf{$8\times 8$ Frozen-Lake problem.}
We then compare the three algorithms under the $8\times 8$ Frozen-lake problem setting in Fig.\ref{Fig.8f}.
The Frozen-Lake problem involves a frozen lake of size $8\times 8$ which contains several "holes". The agent aims to cross the lake from the start point to the end point without falling into any holes. The agent receives $r=-10$ {and $c=0$} when falling in a hole, receives $r=20$ {and $c=1$} when arrive at the end point; At other times, the agent receives $r=0$  {and a randomly generated utility $c$ according to the uniform distribution on [0,1]}.

\textbf{Taxi problem.}
We then compare the three algorithms under the Taxi problem environment. 
The taxi problem simulates a taxi driver in a $5\times 5$ map. There are four designated locations in the grid world and a passenger occurs at a random location of the designated four locations at the start of each episode. The goal of the driver is to first pick up the passenger and then to drop off at another specific location. The driver receives $r=20$ for each successful drop-off, and always receives $r=-1$ at other times. 
{We randomly generate the utility according to the uniform distribution on [0,1] for each state-action pair.}  The results are shown in Fig.\ref{Fig.t}.

\begin{figure}[ht]
\begin{center}
\subfigure[$V_c$ when $\delta=0.2$.]{
\begin{minipage}[t]{0.23\linewidth}
\centering
\label{Fig.g1c}
\includegraphics[width=1.47 in]{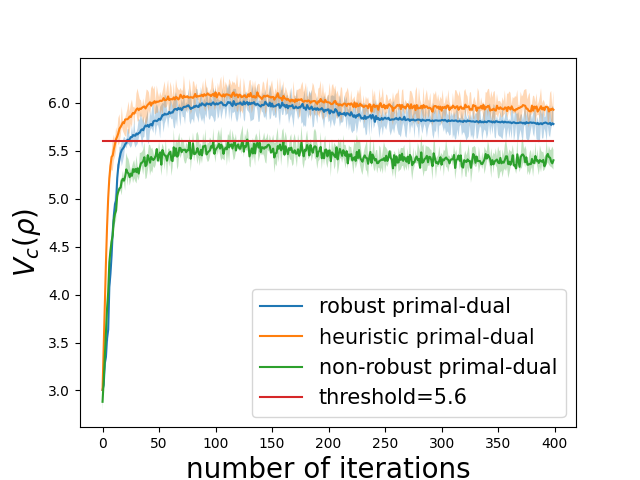}
\end{minipage}}
\subfigure[$V_r$ when $\delta=0.2$.]{
\begin{minipage}[t]{0.23\linewidth}
\centering
\label{Fig.g1r}
\includegraphics[width=1.47 in]{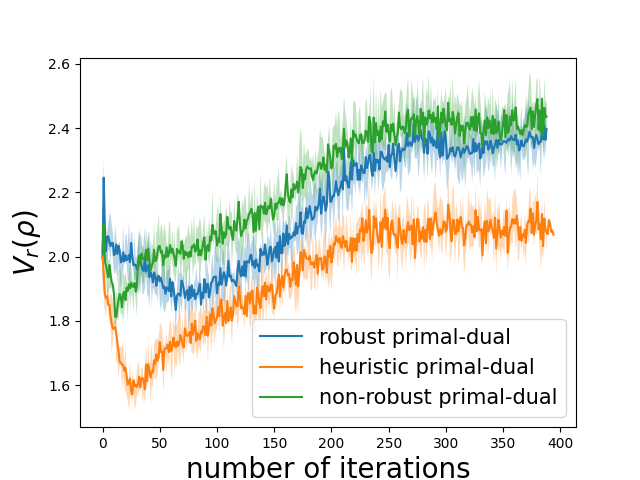}
\end{minipage}}
\subfigure[$V_c$ when $\delta=0.3$.]{
\begin{minipage}[t]{0.23\linewidth}
\centering
\label{Fig.g2c}
\includegraphics[width=1.47 in]{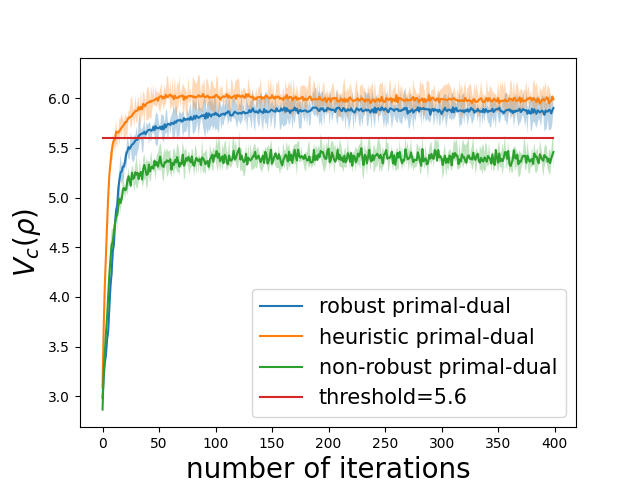}
\end{minipage}}
\subfigure[$V_r$ when $\delta=0.3$.]{
\begin{minipage}[t]{0.23\linewidth}
\centering
\label{Fig.g2r}
\includegraphics[width=1.47 in]{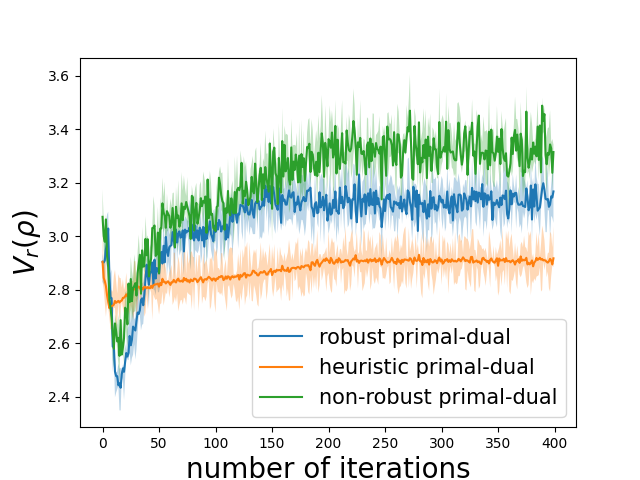}
\end{minipage}}
\caption{Comparison on Garnet Problem $\mathcal{G}(20,10)$.}
\label{Fig.g}
\end{center}
\vskip -0.2in
\end{figure}

\begin{figure}[ht]
\begin{center}
\subfigure[$V_c$ when $\delta=0.2$.]{
\begin{minipage}[t]{0.23\linewidth}
\centering
\label{Fig.8f1c}
\includegraphics[width=1.47 in]{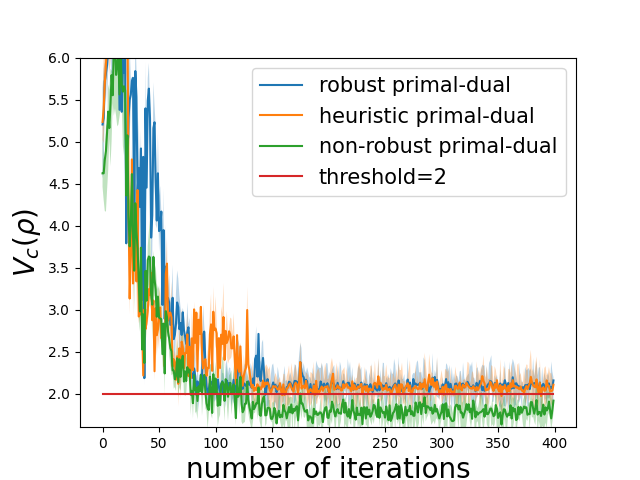}
\end{minipage}}
\subfigure[$V_r$ when $\delta=0.2$.]{
\begin{minipage}[t]{0.23\linewidth}
\centering
\label{Fig.8f1r}
\includegraphics[width=1.47 in]{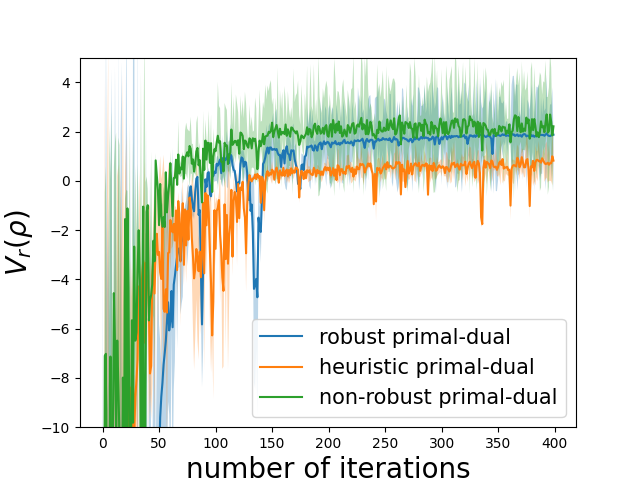}
\end{minipage}}
\subfigure[$V_c$ when $\delta=0.3$.]{
\begin{minipage}[t]{0.23\linewidth}
\centering
\label{Fig.8f2c}
\includegraphics[width=1.47 in]{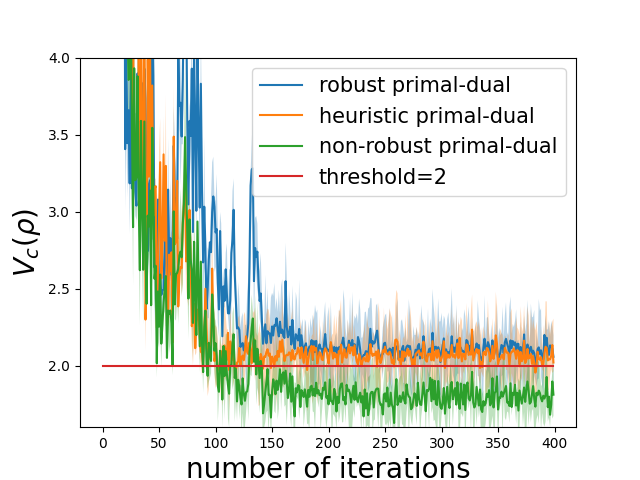}
\end{minipage}}
\subfigure[$V_r$ when $\delta=0.3$.]{
\begin{minipage}[t]{0.23\linewidth}
\centering
\label{Fig.8f2r}
\includegraphics[width=1.47 in]{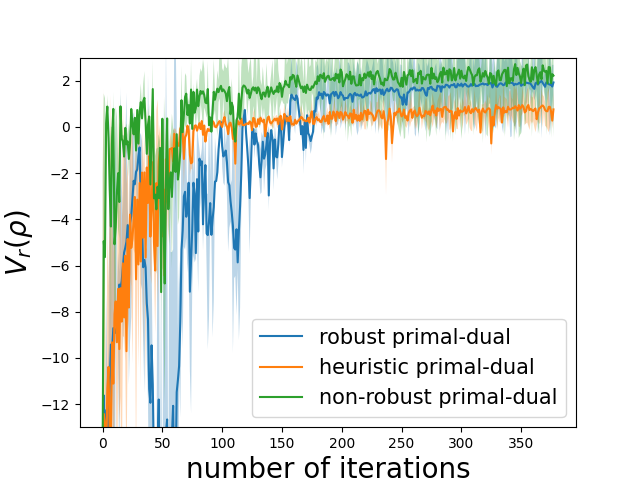}
\end{minipage}}
\caption{Comparison on $8\times 8$ Frozen-Lake Problem.}
\label{Fig.8f}
\end{center}
\vskip -0.2in
\end{figure}
\begin{figure}[htb]
\begin{center}
\subfigure[$V_c$ when $\delta=0.2$.]{
\begin{minipage}[t]{0.23\linewidth}
\centering
\label{Fig.t1c}
\includegraphics[width=1.47 in]{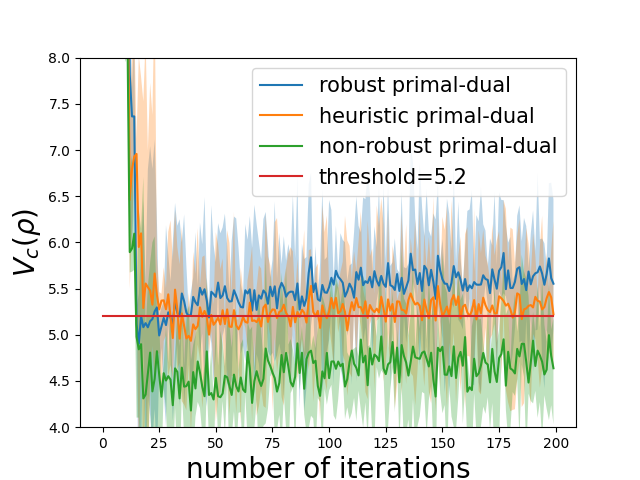}
\end{minipage}}
\subfigure[$V_r$ when $\delta=0.2$.]{
\begin{minipage}[t]{0.23\linewidth}
\centering
\label{Fig.t1r}
\includegraphics[width=1.47 in]{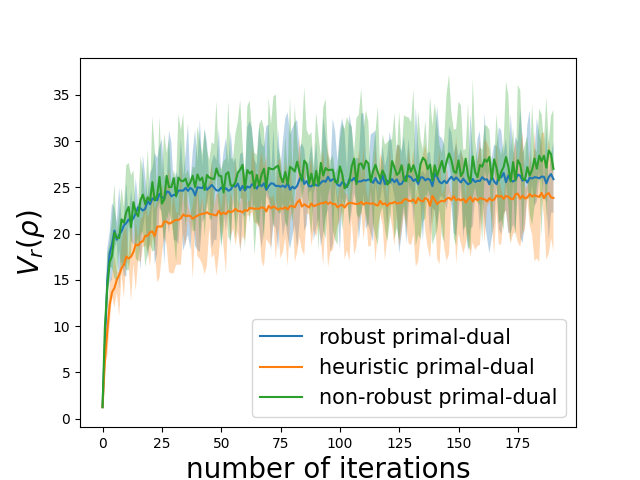}
\end{minipage}}
\subfigure[$V_c$ when $\delta=0.3$.]{
\begin{minipage}[t]{0.23\linewidth}
\centering
\label{Fig.t2c}
\includegraphics[width=1.47 in]{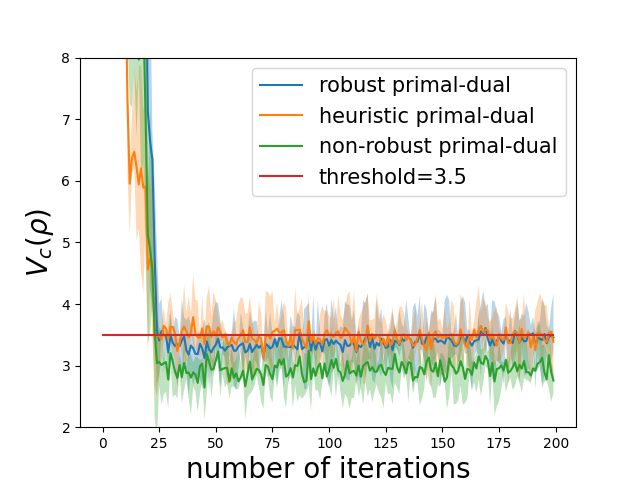}
\end{minipage}}
\subfigure[$V_r$ when $\delta=0.3$.]{
\begin{minipage}[t]{0.23\linewidth}
\centering
\label{Fig.t2r}
\includegraphics[width=1.47 in]{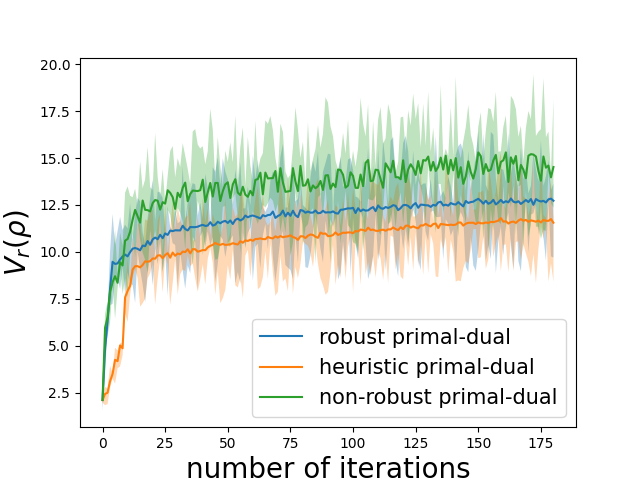}
\end{minipage}}
\caption{Comparison on Taxi Problem.}
\label{Fig.t}
\end{center}
\vskip -0.2in
\end{figure}

From the experiment results above, it can be seen that:
(1) Both our RPD algorithm and the heuristic primal-dual approach find feasible policies satisfying the constraint under the worst-case scenario, i.e., $V^\pi_c\geq b$. However, the non-robust primal-dual method fails to find a feasible solution that satisfy the constraint under the worst-case scenario. 
(2) Compared to the heuristic PD method, our RPD method can obtain more reward and can find a more robust policy while satisfying the robust constraint. Note that the non-robust PD method obtain more reward, but this is because the policy it finds  violates the robust constraint. 
Our experiments demonstrate that among the three algorithms, our RPD algorithm is the best one which optimizes the worst-case reward performance while satisfying the robust constraints on the utility.


\color{black}

\section{Conclusion}
In this paper, we formulate the problem of robust constrained reinforcement learning under model uncertainty, where the goal is to guarantee that constraints are satisfied for all MDPs in the uncertainty set, and to maximize the worst-case reward performance over the uncertainty set. We propose a robust primal-dual algorithm, and theoretically characterize its convergence, complexity and robust feasibility. Our algorithm guarantees convergence to a feasible solution, and outperforms the other two heuristic algorithms. We further investigate a concrete example with $\delta$-contamination uncertainty set, and construct online and model-free robust primal-dual algorithm. {Our methodology can also be readily extended to problems with other uncertainty sets like KL-divergence, total variation and Wasserstein distance. 
The major challenge lies in deriving the robust policy gradient, and further designing model-free algorithm to estimate the robust value function. We also expect that Assumption \ref{assumption:lipschitz} and further results can be derived if some proper smoothing technique is used.}

\textbf{Limitations: }It is of future interest to generalize our results to other types of uncertainty sets, e.g., ones defined by KL divergence, total variation, Wasserstein distance. \textbf{Negative societal impact:} This work is a theoretical study. To the best of the authors’ knowledge, it does not have any potential negative impact on the society.

\newpage

\bibliographystyle{unsrtnat}
\bibliography{references}  

\newpage
\appendix
\begin{large}
 \textbf{Appendix}
\end{large}


\section{Additional Experiments}
\textbf{$4\times 4$ Frozen Lake problem.}
The $4\times 4$ frozen lake is similar to the $8\times 8$ one but with a smaller map. Similarly, we randomly generate the utility signal for each state-action pair. 
The results are shown in Fig.\ref{Fig.f}.

\begin{figure}[ht]
\begin{center}
\subfigure[$V_c$ when $\delta=0.2$.]{
\begin{minipage}[t]{0.23\linewidth}
\centering
\label{Fig.f1c}
\includegraphics[width=1.47 in]{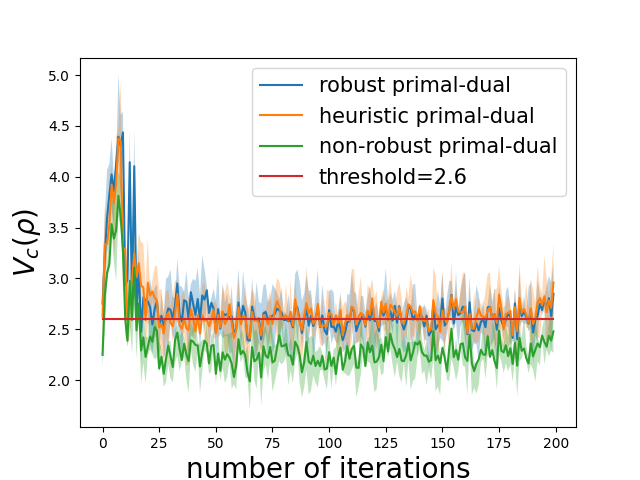}
\end{minipage}}
\subfigure[$V_r$ when $\delta=0.2$.]{
\begin{minipage}[t]{0.23\linewidth}
\centering
\label{Fig.f1r}
\includegraphics[width=1.47 in]{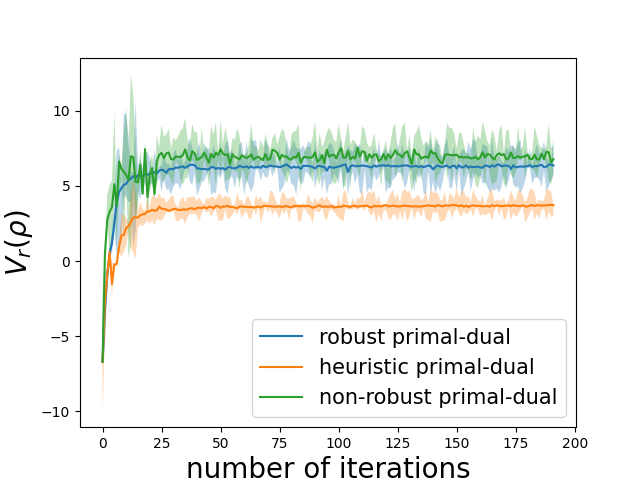}
\end{minipage}}
\subfigure[$V_c$ when $\delta=0.3$.]{
\begin{minipage}[t]{0.23\linewidth}
\centering
\label{Fig.f2c}
\includegraphics[width=1.47 in]{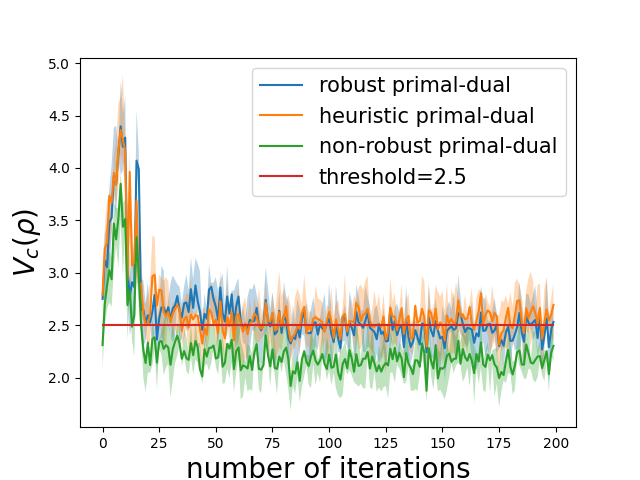}
\end{minipage}}
\subfigure[$V_r$ when $\delta=0.3$.]{
\begin{minipage}[t]{0.23\linewidth}
\centering
\label{Fig.f2r}
\includegraphics[width=1.47 in]{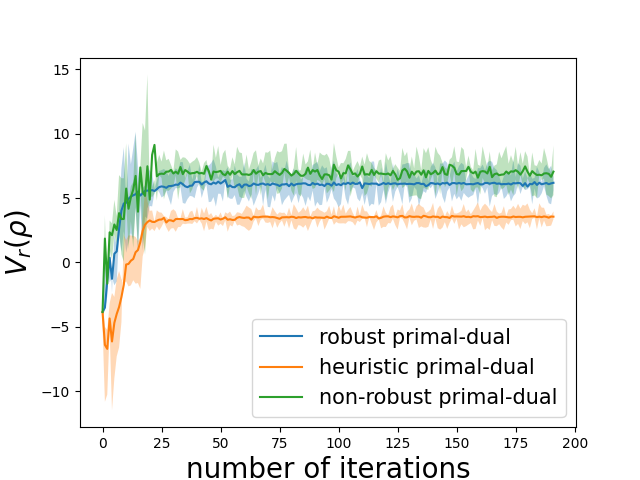}
\end{minipage}}
\caption{Comparison on $4\times 4$ Frozen-Lake Problem.}
\label{Fig.f}
\end{center}
\vskip -0.2in
\end{figure}

\textbf{$N$-Chain problem.}
We then compare three algorithms under the $N$-Chain problem environment. The $N$-chain problem involves a chain contains $N$ nodes. The agent can either move to its left or right node. When it goes to left, it receives a reward-utility signal $(1,0)$; When it goes right, it receives a reward-utility signal $(0,2)$, and if the agent arrives the $N$-th node, it receives a bonus reward of $40$. There is also a small probability that the agent slips to the different direction of its action. In this experiment, we set $N=40$.  The results are shown in Fig.\ref{Fig.nc}.

\begin{figure}[!h]
 \begin{center}
\subfigure[$V_c$ when $\delta=0.2$.]{
 \begin{minipage}[t]{0.23\linewidth}
 \centering
 \label{Fig.nc1c}
 \includegraphics[width=1.47 in]{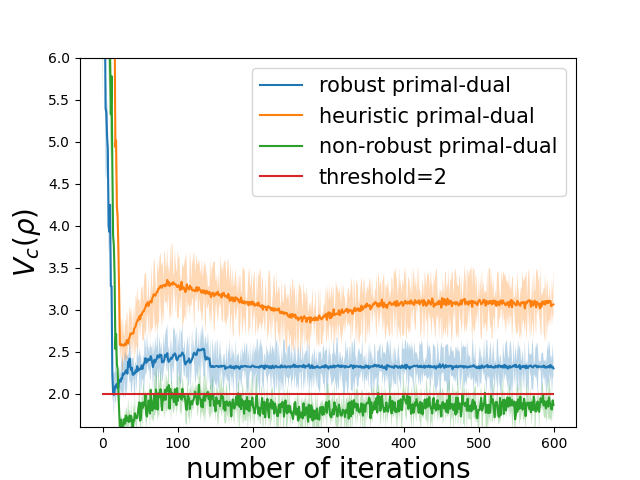}
 \end{minipage}}
 \subfigure[$V_r$ when $\delta=0.2$.]{
 \begin{minipage}[t]{0.23\linewidth}
 \centering
 \label{Fig.nc1r}
 \includegraphics[width=1.47 in]{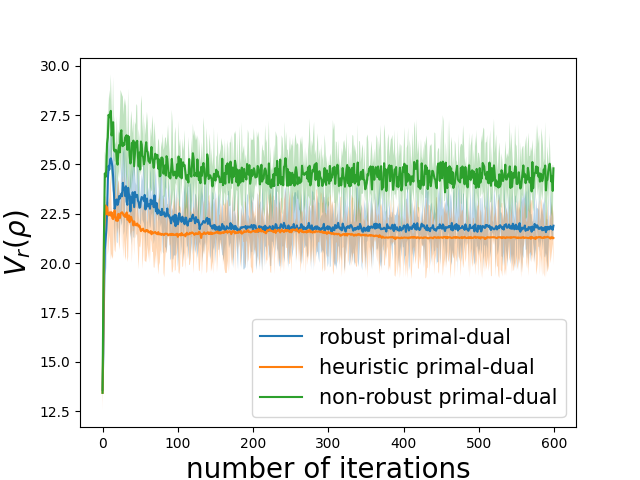}
 \end{minipage}}
 \subfigure[$V_c$ when $\delta=0.3$.]{
 \begin{minipage}[t]{0.23\linewidth}
 \centering
 \label{Fig.nc2c}
\includegraphics[width=1.47 in]{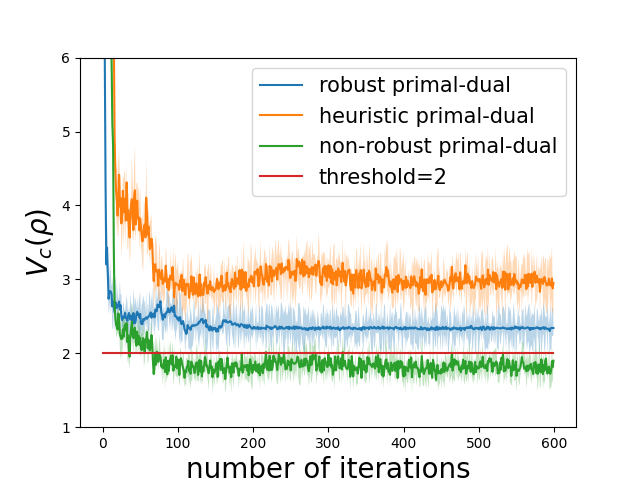}
 \end{minipage}}
\subfigure[$V_r$ when $\delta=0.3$.]{
 \begin{minipage}[t]{0.23\linewidth}
 \centering
 \label{Fig.nc2r}
\includegraphics[width=1.47 in]{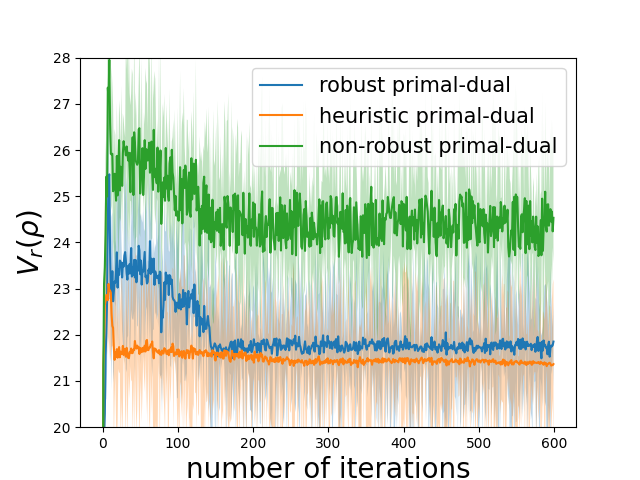}
\end{minipage}}
 \caption{Comparison on $N$-Chain Problem.}
\label{Fig.nc}
 \end{center}
 \vskip -0.2in
 \end{figure}

\section{Proof of Lemma \ref{lemma:fail duality}}\label{sec:counter}
Denote by $\mathsf P^\pi=\left\{(p^\pi)^a_s\in\Delta_{\mcs}: s\in\mcs,a\in\mca \right\}$ the worst-case transition kernel corresponding to the policy $\pi$. We consider the $\delta$-contamination uncertainty set defined in Section \ref{sec:ex}. We then show that under $\delta$-contamination model, the set of visitation distributions is non-convex. 
The robust visitation distribution set can be written as follows:
\begin{align}\label{eq:vis set1}
    \left\{d\in\Delta_{\mcs\times\mca}: \exists \pi\in\Pi, \text{ s.t. } \forall (s,a), \left\{
\begin{aligned}
 & d(s,a)=\pi(a|s)\sum_b d(s,b),&\\
 &\gamma \sum_{s',a'}({p^\pi})^{a'}_{s',s}d(s',a')+(1-\gamma)\rho(s)=\sum_a d(s,a).&
\end{aligned}
\right\}\right\}.
\end{align} 

Under the $\delta$-contamination model, ${\mathsf P}^\pi$ can be explicated as $(p^\pi)^{a}_{s,s'}=(1-\delta)p^a_{s,s'}+\delta \mathbbm{1}_{\left\{s'=\arg\min V^\pi\right\}}$. Hence the set in \eqref{eq:vis set1} can be rewritten as
\begin{align}
    \left\{d\in\Delta_{\mcs\times\mca}: \exists \pi, \text{ s.t. } \forall (s,a), \left\{
\begin{aligned}
 & d(s,a)=\pi(a|s)\left(\sum_b d(s,b)\right),&\\
 &\gamma(1-\delta)\sum_{s',a'}{p}^{a'}_{s',s}d(s',a')+\gamma \delta \mathbbm{1}_{\left\{s=\arg\min V^\pi\right\}}&\\
 &\quad\quad\quad\quad+(1-\gamma)\rho(s)=\sum_a d(s,a).&
\end{aligned}
\right\}\right\}.
\end{align}
Now consider any two pairs $(\pi_1,d_1), (\pi_2,d_2)$ of policy and their worst-case visitation distribution, to show that the set is convex, we need to find a pair $(\pi',d')$ such that  $\forall \lambda\in[0,1]$ and   $\forall s,a$,
\begin{align}
&\lambda d_1(s,a)+(1-\lambda) d_2(s,a)=d'(s,a),\label{eq:vis pi1}\\
&d'(s,a)=\pi'(a|s)\left(\sum_b d'(s,b)\right),\label{eq:vis pi}\\
&\sum_{a'} d'(s,a')=\gamma(1-\delta)\sum_{s',a'}{p}^{a'}_{s',s}d'(s',a')+\gamma \delta \mathbbm{1}_{\left\{s=\arg\min {V}^{\pi'}\right\}}+(1-\gamma)\rho(s) \label{eq:vis kernel}.
\end{align}
\eqref{eq:vis kernel} firstly implies that $\forall s$, 
\begin{align}\label{eq:222}
    \lambda \mathbbm{1}_{\left\{s=\arg\min {V}^{\pi_1}\right\}}+(1-\lambda) \mathbbm{1}_{\left\{s=\arg\min {V}^{\pi_2}\right\}}=\mathbbm{1}_{\left\{s=\arg\min {V}^{\pi'}\right\}},
\end{align}
where from \eqref{eq:vis pi1} and \eqref{eq:vis pi}, $\pi'$ should be \begin{align}\label{eq:convexpi}\pi'(a|s)=\frac{d'(s,a)}{\sum_b d'(s,b)}=\frac{\lambda d_1(s,a)+(1-\lambda) d_2(s,a)}{\sum_b(\lambda d_1(s,b)+(1-\lambda)d_2(s,b))}.\end{align}

We then construct the following counterexample, which shows that there exists a robust MDP, two policy-distribution pairs $(\pi_1,d_1), (\pi_2,d_2)$, and $\lambda\in(0,1)$, such that $\lambda \mathbbm{1}_{\left\{s=\arg\min {V}^{\pi_1}\right\}}+(1-\lambda) \mathbbm{1}_{\left\{s=\arg\min {V}^{\pi_2}\right\}}\neq \mathbbm{1}_{\left\{s=\arg\min {V}^{\pi'}\right\}}$, and therefore the set of robust visitation distribution is non-convex.  

Consider the following Robust MDP. It has three states $1,2,3$ and two actions $a,b$. When the agent is at state $1$, if it takes action $a$, the system will transit to state $2$ and receive reward $r=0$; if it takes action $b$, the system will   transit to state $3$ and receive reward $r=2$. When the agent is at state $2/3$, it can only take action $a/b$, the system can only   transits back to state $1$ and the agent will receive reward $r=1$. The initial distribution is $\mathbbm{1}_{s=1}$.

\begin{center}

\begin{tikzpicture}[
    node distance = 29mm and 17mm,
every edge/.style = {draw, -{Stealth[scale=1.2]}, bend left=15},
every edge quotes/.append style = {auto, inner sep=2pt, font=\footnotesize}
                        ]
\node (n1)  [state] {$1$};
\node (n3)  [state,below right=of n1]   {$3$};
\node (n2)  [state,above right=of n3]   {$2$};
\path   (n1)    edge ["action=$a \quad r=0$"] (n2)
                edge ["action=$b\quad r=2$"] (n3)
        (n2)    edge ["action=$a\quad  r=1$"] (n1)
        (n3)    edge ["action=$b \quad  r=1$"] (n1);
    \end{tikzpicture}
    
\end{center} 

Clearly all policy can be written as $\pi=(p,1-p)$, where $p$ is the probability of taking action $a$ at state $1$. We consider two policies, $\pi_1=(1,0)$ and $\pi_2=(0,1)$.

It can be verified that $\arg\min V^{\pi_1} =1$, and its robust visitation distribution, denoted by $d_1$, is 
\begin{align}
    &d_1(1,a)=\frac{1-\gamma}{1-\gamma^2},\\
    &d_1(1,b)=0,\\
    &d_1(2,a)=\frac{\gamma(1-\gamma)}{1-\gamma^2},\\
    &d_1(2,b)=0,\\
    &d_1(3,a)=0,\\
    &d_1(3,b)=0.
\end{align}

Similarly, $\arg\min V^{\pi_2} =2$, and and its robust visitation distribution, denoted by $d_2$, is \begin{align}
    &d_2(1,a)=0,\\
    &d_2(1,b)=\frac{1-\gamma}{1-\gamma^2},\\
    &d_2(2,a)=0,\\
    &d_2(2,b)=0,\\
    &d_2(3,a)=0,\\
    &d_2(3,b)=\frac{\gamma(1-\gamma)}{1-\gamma^2}.
\end{align}

Hence according to \eqref{eq:convexpi}, $\pi'$ should be as follows:
\begin{align}
    \pi'(a|1)=\lambda, \pi'(b|1)=1-\lambda, \pi'(a|2)=1, \pi'(b|3)=1.
\end{align}
We then show that there exists $\lambda\in[0,1]$, such that  $\lambda\mathbbm{1}_{\left\{s=1\right\}}+(1-\lambda)\mathbbm{1}_{\left\{s=2\right\}}\neq \mathbbm{1}_{\left\{\arg\min V^{\pi'}\right\}}$. 

Clearly \eqref{eq:222} holds only if $V^{\pi'}(1)=V^{\pi'}(2)=\min_s V^{\pi'}(s)$. However, according to the Bellman equations for $\pi'$, we have that 
\begin{align}
    V^{\pi'}(1)&=\lambda(\gamma (1-\delta) V^{\pi'}(2)+\gamma\delta \min V^{\pi'})+(1-\lambda)(2+\gamma (1-\delta) V^{\pi'}(3)+\gamma\delta \min V^{\pi'}),\\
    V^{\pi'}(2)&=1+\gamma(1-\delta) V^{\pi'}(1)+\gamma\delta \min V^{\pi'},\\
    V^{\pi'}(3)&=1+\gamma(1-\delta) V^{\pi'}(1)+\gamma\delta \min V^{\pi'}.
\end{align}

If we set $\lambda=\frac{1}{3}$, 
\begin{align}
    V^{\pi'}(1)&=\frac{4}{3}+\gamma\delta\min V^{\pi'}+\gamma(1-\delta)V^{\pi'}(2),\\
    V^{\pi'}(2)&=1+\gamma\delta\min V^{\pi'}+\gamma(1-\delta)V^{\pi'}(1).
\end{align}
Clearly, $V^{\pi'}(1)\neq V^{\pi'}(2)$, and hence $\lambda\mathbbm{1}_{\left\{\arg\min V^1\right\}}+(1-\lambda)\mathbbm{1}_{\left\{\arg\min V^2\right\}}\neq \mathbbm{1}_{\left\{\arg\min V^{\pi'}\right\}}$.

\section{Proof of Lemmas \ref{lemma:lam_bound} and \ref{lemma:222}}
\textbf{ Proof of Lemma \ref{lemma:lam_bound}}
\begin{proof} 
We first set $C=V^{\pi_{\theta^*}}_r(\rho)+\lambda^*(V_c^{\pi_{\theta^*}}(\rho)-b)$, clearly $\max_{\pi\in\Pi} V^\pi_r(\rho)+\lambda^*(V^\pi_c(\rho)-b)=C$, and hence 
\begin{align}
    C=\max_{\pi\in\Pi} V^\pi_r(\rho)+\lambda^*(V_c^\pi(\rho)-b)\geq V^{\pi^\zeta}_r(\rho)+\lambda^*(V_c^{\pi^\zeta}(\rho)-b)\geq V^{\pi^\zeta}_r(\rho)+\lambda^*\zeta.
\end{align}
Thus we have that 
\begin{align}
    \lambda^*\leq \frac{C-V^{\pi^\zeta}_r(\rho)}{\zeta}.
\end{align}
Note that \begin{align}
    C=\min_{\lambda\geq 0}\max_{\pi\in\Pi} V^\pi_r(\rho)+\lambda(V_c^\pi(\rho)-b)\overset{(a)}{\leq} \max_{\pi\in\Pi} V^\pi_r(\rho)\leq \frac{1}{1-\gamma},
\end{align}
where $(a)$ is because $\min_{\lambda\geq 0}\max_{\pi\in\Pi} V^\pi_r(\rho)+\lambda(V_c^\pi(\rho)-b)$ is less than the optimal value of inner problem when $\lambda=0$, i.e., $\max_{\pi\in\Pi} V^\pi_r(\rho)$, and $\frac{1}{1-\gamma}$ is the upper bound of robust value functions. Hence we have that
\begin{align}
    \lambda^*\leq \frac{1}{(1-\gamma)\zeta},
\end{align}
which completes the proof.
\end{proof}

\textbf{ Proof of Lemma \ref{lemma:222}}
\begin{proof}
Set $C=V^{_{\theta^*}}_{\sigma,r}(\rho)+\lambda^*(V_{\sigma,c}^{_{\theta^*}}(\rho)-b)$, then 
\begin{align}
    C=\max_{\pi\in\Pi} V^\pi_{\sigma,r}(\rho)+\lambda^*(V_{\sigma,c}^\pi(\rho)-b)\geq V^{\pi^{\zeta'}}_{\sigma,r}(\rho)+\lambda^*(V_{\sigma,c}^{\pi^{\zeta'}}(\rho)-b)\geq V^{\pi^{\zeta'}}_{\sigma,r}(\rho)+\lambda^*\zeta'.
\end{align}
Thus we have that 
\begin{align}
    C\geq V^{\pi^\zeta}_{\sigma,r}(\rho)+\lambda^*\zeta', 
\end{align}
hence
\begin{align}
    \lambda^*\leq \frac{C-V^{\pi^\zeta}_{\sigma,r}(\rho)}{\zeta'}.
\end{align}
Note that \begin{align}
    C=\min_{\lambda\geq 0}\max_{\pi\in\Pi} V^\pi_{\sigma,r}(\rho)+\lambda(V_{\sigma,c}^\pi(\rho)-b){\leq} \max_{\pi\in\Pi} V^\pi_{\sigma,r}(\rho)\leq C_\sigma,
\end{align}
 where $C_\sigma$ is the upper bound of smoothed robust value functions \cite{wang2022policy}:  $C_\sigma=\frac{1}{1-\gamma}(1+2\gamma R \frac{\log|\mcs|}{\sigma})$. Hence we have that
\begin{align}
    \lambda^*\leq \frac{C_\sigma}{\zeta'},
\end{align}
which completes the proof.
\end{proof}

\section{Proof of Lemma \ref{lemma2}}
\begin{proof}
For any $\lambda$, denote the optimal value of the inner problems $\max_{\pi\in\Pi_\Theta} V^\pi_{\sigma,r}(\rho)+\lambda(V_{\sigma,c}^\pi(\rho)-b)$ and $\max_{\pi\in\Pi_\Theta} V^\pi_{r}(\rho)+\lambda(V_c^\pi(\rho)-b)$ by $V^D(\lambda)$ and $V^D_\sigma(\lambda)$. It is then easy to verify that 
 \begin{align}
    | V^D(\lambda)-V^D_\sigma(\lambda) |\leq (1+\lambda)\epsilon\leq (1+\Lambda^*)\epsilon.
\end{align}
Denote the optimal solutions of $\min_{\lambda\in[0,\Lambda^*]} V^D(\lambda)$ and $\min_{\lambda\in[0,\Lambda^*]} V^D_\sigma(\lambda)$ by $\lambda^D$ and $\lambda^D_\sigma$. 
We thus conclude that $| V^D_\sigma(\lambda^D_\sigma)- V^D(\lambda^D)|\leq \left(1+\Lambda^*\right)\epsilon$, and this thus completes the proof.
\end{proof}

\section{Proof of Theorem \ref{thm:1}}
We  restate Theorem \ref{thm:1} with all the specific step sizes as follows. 

Set $b_t=\frac{19}{20\xi t^{0.25}}$, $\mu_t=\xi (C_\sigma^V)^2+\frac{16\tau(C_\sigma^V)^2 }{\xi (b_{t+1})^2}-2\nu,$
    $\beta_t=\frac{1}{\xi},
    \alpha_t=\nu+\mu_t$,  where $\xi>\frac{2\nu+(1+\Lambda^*)L_\sigma}{(C^V_\sigma)^2}$, $\nu$ is any positive number and $\tau$ is any number greater than $2$, then \begin{align}
    \min_{1\leq t\leq T}\|G_t\|^2\leq 2\epsilon,
\end{align}
when \begin{align}\label{eq:T}
    T=\max\left\{ \frac{7(\Lambda^*)^4}{\xi^4\epsilon^4},\left(2+\frac{9\xi(\tau-2)(C^V_\sigma)^2uK}{\epsilon^2}\right)^2\right\}=\mathcal{O}(\epsilon^{-4}).
\end{align}
The definitions of $u,K$ can be found in Section \ref{sec:constants}.

Theorem \ref{thm:1} can be proved similarly as  Theorem \ref{thm2}, and hence the proof is omitted here.

\section{Proof of Lemma \ref{lemma:lip}}
\begin{proof}
Recall that $V_\sigma^L(\theta,\lambda)= V^{\pit}_{\sigma,r}(\rho)+\lambda(V_{\sigma,c}^{\pit}(\rho)-b)$, hence we have that
\begin{align}
    \nabla_{\lambda} V_\sigma^L(\theta,\lambda)&=V_{\sigma,c}^{\pit}(\rho)-b,\\
    \nabla_{\theta} V_\sigma^L(\theta,\lambda)&=\nabla_{\theta}V^{\pit}_{\sigma,r}(\rho)+\lambda\nabla_\theta V_{\sigma,c}^{\pit}(\rho).
\end{align}

Note that in \cite{wang2022policy}, it has been shown that 
\begin{align}
    \|V^{\pone}_{\sigma,r}-V^{\ptwo}_{\sigma,r}\|&\leq C^V_\sigma \|\theta_1-\theta_2\|,\\
    \|\nabla_\theta V^{\pone}_{\sigma,r}-\nabla_\theta V^{\ptwo}_{\sigma,r}\|&\leq L_\sigma \|\theta_1-\theta_2\|,
\end{align}
where the definition of constants $C^V_\sigma$ and $L_\sigma$ can be found in Section \ref{sec:constants}.
Hence
\begin{align}
    \|\nabla_{\lambda} V_\sigma^L(\theta,\lambda)|_{\theta_1}-\nabla_{\lambda} V_\sigma^L(\theta,\lambda)|_{\theta_2} \|&=\|V_{\sigma,c}^{\pi_{\theta_1}}(\rho)-V_{\sigma,c}^{\pi_{\theta_2}}(\rho) \|\leq  C^V_\sigma \|\theta_1-\theta_2\|,\\
     \|\nabla_{\lambda} V_\sigma^L(\theta,\lambda)|_{\lambda_1}-\nabla_{\lambda} V_\sigma^L(\theta,\lambda)|_{\lambda_2} \|&=0.
\end{align}
Similarly, we have that
\begin{align}
    \|\nabla_{\theta} V_\sigma^L(\theta,\lambda)|_{\theta_1}-\nabla_{\theta} V_\sigma^L(\theta,\lambda)|_{\theta_2} \|&\leq  (1+\lambda)L_\sigma \|\theta_1-\theta_2\|\leq (1+\Lambda^*)L_\sigma\|\theta_1-\theta_2\|,\\
     \|\nabla_{\theta} V_\sigma^L(\theta,\lambda)|_{\lambda_1}-\nabla_{\theta} V_\sigma^L(\theta,\lambda)|_{\lambda_2} \|&\leq  |(\lambda_1-\lambda_2)| \max_{\theta\in\Theta}\|\nabla_\theta V_{\sigma,c}^{\pit}(\rho) \| \leq C^V_\sigma |\lambda_1-\lambda_2|.
\end{align}
This completes the proof.
\end{proof}

\section{Proof of Proposition \ref{prop:feasible}}
\begin{proof}
The $\lambda$-entry of $G_W$ is smaller than $2\epsilon$, i.e.,
\begin{align}
    |(G_W)_\lambda|=\bigg|\beta_W \bigg(\lambda_W-\mathbf{\prod}_{[0,\Lambda^*]} \big(\lambda_W -\frac{1}{\beta_W} \big(\nabla_\lambda V^L_\sigma (\theta_W,\lambda_W) \big)\big)\bigg)\bigg|<2\epsilon.\end{align}
Denote $\lambda^+\triangleq \mathbf{\prod}_{[0,\Lambda^*]} \left(\lambda_W -\frac{1}{\beta_W} \left(\nabla_\lambda V^L_\sigma (\theta_W,\lambda_W) \right)\right)$. From Lemma 3 in \cite{ghadimi2016accelerated}, $-\nabla_\lambda V^L_\sigma (\theta_W,\lambda^+)$ can be rewritten as the sum of two parts: $-\nabla_\lambda V^L_\sigma (\theta_W,\lambda^+)\in N_{[0,\Lambda^*]}(\lambda^+)+4\epsilon B$, where $N_K(x)\triangleq\left\{g\in \mathbb{R}^d: \langle g, y-x\rangle\leq 0: \forall y\in K  \right\}$ is the normal cone, and $B$ is the unit ball. 

This hence implies that for any $\lambda\in[0,\Lambda^*]$, $(\lambda-\lambda^+)(V_c^W-b)\geq -4(\lambda-\lambda^+)\epsilon$. By setting $\lambda=\Lambda^*$, we have $V_c^W+4\epsilon\geq b$, which means $\pi_W$ is feasible with a $4\epsilon$-violation.
\end{proof}

\section{Proof of Theorem \ref{thm2}}
We then prove Theorem \ref{thm2}. Our proof extends the one in \cite{xu2020unified} to the biased setting.

To simplify notations, we denote the updates in Algorithm \ref{alg:srpd} by $ \hat{f}(\theta_t)\triangleq \hat{V}^{\pi_{\theta_t}}_{\sigma,c}(\rho)-b $, and $\hat{g}(\theta_t,\lambda_{t+1})\triangleq\nabla_{\theta}\hat{V}^{\pi_{\theta_t}}_{\sigma,r}(\rho)+\lambda_{t+1}\nabla_\theta \hat{V}^{\pi_{\theta_t}}_{\sigma,c}(\rho)  $, and denote the update functions in Algorithm \ref{alg:rpd} by $ {f}(\theta_t)\triangleq {V}^{\pi_{\theta_t}}_{\sigma,c}(\rho)-b $, and ${g}(\theta_t,\lambda_{t+1})\triangleq\nabla_{\theta}{V}^{\pi_{\theta_t}}_{\sigma,r}(\rho)+\lambda_{t+1}\nabla_\theta {V}^{\pi_{\theta_t}}_{\sigma,c}(\rho)$. Here $\hat{f}$ and $\hat{g}$ can be viewed as biased estimations of $f$ and $g$. 

In the following, we will first show several technical lemmas that will be useful in the proof of Theorem \ref{thm2}.

\begin{lemma}\label{lemma4}
Recall that the step size $\alpha_t=\nu+\mu_t$. If $\mu_t>(1+\Lambda^*)L_\sigma$, $\forall t\geq 0$, then 
\begin{align}
     V^L_{\sigma}(\theta_{t+1},\lambda_{t+1})-V^L_{\sigma}(\theta_{t},\lambda_{t+1}) &\geq \langle\theta_{t+1}-\theta_t, -\hat{g}(\theta_{t},\lambda_{t+1})+ g(\theta_{t},\lambda_{t+1}) \rangle\nn\\
     &\quad+\left(  \frac{\mu_t}{2}+\nu  \right)\|\theta_{t+1}-\theta_t\|^2.
\end{align}
\end{lemma}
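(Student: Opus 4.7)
The plan is to combine two standard ingredients: the descent/ascent inequality coming from Lipschitz smoothness of $\nabla_\theta V^L_\sigma$ (established in Lemma \ref{lemma:lip}), and the first-order optimality condition for the Euclidean projection that defines the update $\theta_{t+1}$. The point of the lemma is essentially a quantified ``sufficient ascent'' statement for $\theta$ that tolerates the bias $\hat g - g$ in the inexact gradient, and the hypothesis $\mu_t > (1+\Lambda^*) L_\sigma$ is chosen precisely so that the stepsize is small enough (i.e.\ $\alpha_t$ is large enough) to absorb the smoothness constant and still leave a quadratic margin of $\tfrac{\mu_t}{2}+\nu$ on the right-hand side.

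First I would invoke Lemma \ref{lemma:lip} with Lipschitz constant $(1+\Lambda^*)L_\sigma$ applied to $\theta \mapsto V^L_\sigma(\theta,\lambda_{t+1})$, obtaining the ascent-type bound
\begin{align*}
V^L_\sigma(\theta_{t+1},\lambda_{t+1}) - V^L_\sigma(\theta_t,\lambda_{t+1})
\;\geq\; \langle g(\theta_t,\lambda_{t+1}),\,\theta_{t+1}-\theta_t\rangle - \tfrac{(1+\Lambda^*)L_\sigma}{2}\|\theta_{t+1}-\theta_t\|^2.
\end{align*}
Splitting $g = \hat g + (g-\hat g)$ on the inner product separates the term that will combine with the projection estimate from the noise term that appears in the stated conclusion.

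Next I would exploit the variational characterization of the projection. Because $\theta_{t+1} = \Pi_\Theta\!\left(\theta_t + \tfrac{1}{\alpha_t}\hat g(\theta_t,\lambda_{t+1})\right)$ and $\theta_t \in \Theta$, the obtuse-angle property of the projection gives
\begin{align*}
\Big\langle \theta_t - \theta_{t+1},\; \theta_t + \tfrac{1}{\alpha_t}\hat g(\theta_t,\lambda_{t+1}) - \theta_{t+1}\Big\rangle \leq 0,
\end{align*}
which after expansion yields $\langle \hat g(\theta_t,\lambda_{t+1}),\theta_{t+1}-\theta_t\rangle \geq \alpha_t\|\theta_{t+1}-\theta_t\|^2 = (\nu+\mu_t)\|\theta_{t+1}-\theta_t\|^2$. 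Substituting this lower bound back into the smoothness inequality and collecting the quadratic terms produces a coefficient $\nu+\mu_t - \tfrac{(1+\Lambda^*)L_\sigma}{2}$ in front of $\|\theta_{t+1}-\theta_t\|^2$.

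Finally I would use the standing assumption $\mu_t > (1+\Lambda^*) L_\sigma$ to conclude that this coefficient is at least $\tfrac{\mu_t}{2}+\nu$, which matches the claim exactly. There is no substantive obstacle here: the whole argument is a bookkeeping exercise around the smoothness lemma and the standard projection inequality, and the role of the hypothesis on $\mu_t$ is simply to make the linear algebra at the last step clean.
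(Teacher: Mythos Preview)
Your proposal is correct and follows essentially the same approach as the paper's proof: both combine the Lipschitz smoothness bound from Lemma~\ref{lemma:lip} with the first-order optimality (obtuse-angle) inequality for the projection defining $\theta_{t+1}$, then use $\mu_t > (1+\Lambda^*)L_\sigma$ to simplify the quadratic coefficient to $\tfrac{\mu_t}{2}+\nu$. The only cosmetic difference is ordering---the paper states the projection inequality first and then adds the smoothness bound, whereas you start from smoothness and substitute the projection estimate into it.
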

\begin{proof}
Note that from the update of $\theta_t$ and proposition of projection, it implies that
\begin{align}
    \left\langle\theta_t+\frac{1}{\alpha_t} \hat{g}(\theta_t,\lambda_{t+1})-\theta_{t+1}, \theta_t-\theta_{t+1}  \right\rangle\leq 0. 
\end{align}
Hence 
\begin{align}
    \left\langle  \hat{g}(\theta_t,\lambda_{t+1})-\alpha_t(\theta_{t+1}-\theta_t), \theta_t-\theta_{t+1}  \right\rangle\leq 0.
\end{align}
From Lemma \ref{lemma:lip}, we have that
\begin{align}
    V^L_{\sigma}(\theta_{t+1},\lambda_{t+1})-V^L_{\sigma}(\theta_{t},\lambda_{t+1})\geq \langle\theta_{t+1}-\theta_t, g(\theta_{t},\lambda_{t+1}) \rangle-\frac{(1+\Lambda^*)L_\sigma}{2}\|\theta_{t+1}-\theta_t\|^2.
\end{align}
Summing up the two inequalities implies 
\begin{align}
    &V^L_{\sigma}(\theta_{t+1},\lambda_{t+1})-V^L_{\sigma}(\theta_{t},\lambda_{t+1})\nn\\
    &\geq \langle\theta_{t+1}-\theta_t, -\hat{g}(\theta_{t},\lambda_{t+1})+ g(\theta_{t},\lambda_{t+1})+\alpha_t(\theta_{t+1}-\theta_t) \rangle-\frac{(1+\Lambda^*)L_\sigma}{2}\|\theta_{t+1}-\theta_t\|^2\nn\\
    &\geq \langle\theta_{t+1}-\theta_t, -\hat{g}(\theta_{t},\lambda_{t+1})+ g(\theta_{t},\lambda_{t+1}) \rangle + \left( {\alpha_t} -\frac{L_\sigma(1+\Lambda^*)}{2} \right)\|\theta_{t+1}-\theta_t\|^2\nn\\
    &\geq \langle\theta_{t+1}-\theta_t, -\hat{g}(\theta_{t},\lambda_{t+1})+ g(\theta_{t},\lambda_{t+1}) \rangle +\left(  \frac{\mu_t}{2}+\nu  \right)\|\theta_{t+1}-\theta_t\|^2,
\end{align}
and hence completes the proof.
\end{proof}

\begin{lemma}\label{lemma5}
Recall that the step size $\beta_t=\frac{1}{\xi}$, and set $\xi\leq \frac{1}{b_0}$ , then
\begin{align}
     &V^L_{\sigma}(\theta_{t+1},\lambda_{t+1})-V^L_{\sigma}(\theta_{t},\lambda_{t}) \nn\\
     &\geq ( f(\theta_{t-1})-\hat{f}(\theta_{t-1}))( \lambda_{t+1}-\lambda_t)+\langle\theta_{t+1}-\theta_t, -\hat{g}(\theta_{t},\lambda_{t+1})+ g(\theta_{t},\lambda_{t+1}) \rangle  - \frac{\xi (C_\sigma^V)^2}{2} \|\theta_t-\theta_{t-1}\|^2\nn\\
     &\quad+\left(  \frac{\mu_t}{2}+\nu  \right)\|\theta_{t+1}-\theta_t\|^2+\frac{b_{t-1}}{2}(\lambda_t^2-\lambda_{t+1}^2)-\frac{1}{\xi}(\lambda_{t+1}-\lambda_t)^2-\frac{1}{2\xi}(\lambda_t-\lambda_{t-1})^2.
\end{align}
\end{lemma}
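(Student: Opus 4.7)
My strategy is to build on Lemma \ref{lemma4} (which already handles the $\theta$-update) and then develop a projected-gradient-type descent bound for the $\lambda$-update, carefully accounting for the bias $f-\hat f$ and for the fact that the gradient estimate used in forming $\lambda_t$ was $\hat f(\theta_{t-1})$, which is one iteration stale in $\theta$. First I would split the increment as
\begin{align*}
V^L_\sigma(\theta_{t+1},\lambda_{t+1})-V^L_\sigma(\theta_t,\lambda_t)
 &= \bigl[V^L_\sigma(\theta_{t+1},\lambda_{t+1})-V^L_\sigma(\theta_t,\lambda_{t+1})\bigr]\\
 &\quad + \bigl[V^L_\sigma(\theta_t,\lambda_{t+1})-V^L_\sigma(\theta_t,\lambda_t)\bigr].
\end{align*}
The first bracket is precisely what Lemma \ref{lemma4} bounds below, contributing the inner product $\langle\theta_{t+1}-\theta_t,\,-\hat g(\theta_t,\lambda_{t+1})+g(\theta_t,\lambda_{t+1})\rangle$ and the $(\tfrac{\mu_t}{2}+\nu)\|\theta_{t+1}-\theta_t\|^2$ curvature term in the statement.

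For the second bracket, linearity of $V^L_\sigma$ in $\lambda$ gives $(\lambda_{t+1}-\lambda_t)f(\theta_t)$. To bring in the biased quantity $\hat f(\theta_{t-1})$ that produced $\lambda_t$, I add and subtract
\[
f(\theta_t) = \hat f(\theta_{t-1}) + \bigl[f(\theta_{t-1})-\hat f(\theta_{t-1})\bigr] + \bigl[f(\theta_t)-f(\theta_{t-1})\bigr].
\]
The middle piece, multiplied by $\lambda_{t+1}-\lambda_t$, gives the bias term $(f(\theta_{t-1})-\hat f(\theta_{t-1}))(\lambda_{t+1}-\lambda_t)$ in the lemma verbatim. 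For the third piece, I would invoke the Lipschitz bound $|f(\theta_t)-f(\theta_{t-1})|\le C_\sigma^V\|\theta_t-\theta_{t-1}\|$ from Lemma \ref{lemma:lip} and a weighted Young's inequality to obtain
\[
(\lambda_{t+1}-\lambda_t)\bigl(f(\theta_t)-f(\theta_{t-1})\bigr)\;\ge\;-\tfrac{\xi(C_\sigma^V)^2}{2}\|\theta_t-\theta_{t-1}\|^2-\tfrac{1}{2\xi}(\lambda_{t+1}-\lambda_t)^2,
\]
which produces the $-\tfrac{\xi(C_\sigma^V)^2}{2}\|\theta_t-\theta_{t-1}\|^2$ term and half of the $-\tfrac{1}{\xi}(\lambda_{t+1}-\lambda_t)^2$ bucket.

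The remaining and most delicate piece is lower-bounding $\hat f(\theta_{t-1})(\lambda_{t+1}-\lambda_t)$. For this I would invoke the projection optimality of the previous $\lambda$-update (the one that produced $\lambda_t$ from $\lambda_{t-1}$ using $\hat f(\theta_{t-1})$, $b_{t-1}$, and $\beta_{t-1}=1/\xi$), tested against the feasible point $\lambda_{t+1}\in[0,\Lambda^*]$. The resulting variational inequality rearranges to
\[
\hat f(\theta_{t-1})(\lambda_{t+1}-\lambda_t)\;\ge\;-\tfrac{1}{\xi}(\lambda_t-\lambda_{t-1})(\lambda_{t+1}-\lambda_t)-b_{t-1}\lambda_{t-1}(\lambda_{t+1}-\lambda_t).
\]
A second Young's inequality on the cross product contributes the remaining $-\tfrac{1}{2\xi}(\lambda_t-\lambda_{t-1})^2$ and $-\tfrac{1}{2\xi}(\lambda_{t+1}-\lambda_t)^2$ pieces needed to assemble $-\tfrac{1}{\xi}(\lambda_{t+1}-\lambda_t)^2$. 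For the $b_{t-1}$ term I would use the identity
\[
\lambda_{t-1}(\lambda_t-\lambda_{t+1})=\tfrac{1}{2}(\lambda_t^2-\lambda_{t+1}^2)+\tfrac{1}{2}(\lambda_t-\lambda_{t+1})^2+(\lambda_{t-1}-\lambda_t)(\lambda_t-\lambda_{t+1}),
\]
applying AM--GM to the last (signed) cross term, to extract the target $\tfrac{b_{t-1}}{2}(\lambda_t^2-\lambda_{t+1}^2)$.

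The main obstacle is the bookkeeping in this final step. One must be careful that it is precisely the $\lambda_t$-update (not the $\lambda_{t+1}$-update) whose projection inequality is used, since only the former involves the stale estimate $\hat f(\theta_{t-1})$ that matches the one introduced by the add-and-subtract trick. Moreover the remainder quadratics spawned by the two Young inequalities and by the algebraic identity must be absorbed into the three quadratic buckets present in the statement, using that $b_{t-1}\le 1/\xi$ under the chosen step sizes so that any residual $\tfrac{b_{t-1}}{2}(\lambda_{t-1}-\lambda_t)^2$ can be dominated by the available $\tfrac{1}{2\xi}(\lambda_t-\lambda_{t-1})^2$ budget. Once this accounting is cleared, summing the $\theta$-bracket bound from Lemma \ref{lemma4} with the $\lambda$-bracket bound just assembled yields the claimed inequality.
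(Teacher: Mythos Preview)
Your approach is correct in spirit and actually more elementary than the paper's. The paper introduces the regularized function $\tilde V_t(\theta,\lambda)=V^L_\sigma(\theta,\lambda)+\tfrac{b_{t-1}}{2}\lambda^2$, invokes its $b_{t-1}$-strong convexity in $\lambda$, and decomposes $\nabla_\lambda\tilde V_t(\theta_t,\lambda_t)-\nabla_\lambda\tilde V_t(\theta_{t-1},\lambda_{t-1})$ into three pieces that are bounded separately (via Cauchy--Schwarz and a co-coercivity estimate), whereas you bypass all of that by exploiting the exact linearity $V^L_\sigma(\theta_t,\lambda_{t+1})-V^L_\sigma(\theta_t,\lambda_t)=f(\theta_t)(\lambda_{t+1}-\lambda_t)$ together with a single projection inequality for the $\lambda_t$-update. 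This yields a shorter and more transparent argument.

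There is, however, a bookkeeping slip in your final absorption step. If you apply Young separately to the two cross terms $-\tfrac{1}{\xi}(\lambda_t-\lambda_{t-1})(\lambda_{t+1}-\lambda_t)$ and $b_{t-1}(\lambda_{t-1}-\lambda_t)(\lambda_t-\lambda_{t+1})$, you accumulate $-\tfrac{1}{2\xi}(\lambda_t-\lambda_{t-1})^2-\tfrac{b_{t-1}}{2}(\lambda_t-\lambda_{t-1})^2$ on that quadratic, while the target only allows $-\tfrac{1}{2\xi}(\lambda_t-\lambda_{t-1})^2$; the first Young has already exhausted the ``$\tfrac{1}{2\xi}$ budget'' you intend to reuse. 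The fix is simple: first combine the two cross terms into
\[
\Bigl(b_{t-1}-\tfrac{1}{\xi}\Bigr)(\lambda_t-\lambda_{t-1})(\lambda_{t+1}-\lambda_t),
\]
observe that the coefficient is nonpositive since $b_{t-1}\le b_0\le 1/\xi$, and then apply a single Young's inequality. This produces coefficients $-\tfrac{1}{2}\bigl(\tfrac{1}{\xi}-b_{t-1}\bigr)$ on each of $(\lambda_t-\lambda_{t-1})^2$ and $(\lambda_{t+1}-\lambda_t)^2$, which together with the $+\tfrac{b_{t-1}}{2}(\lambda_{t+1}-\lambda_t)^2$ coming from your identity reproduce the three target quadratic buckets with nonnegative slack. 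With this one-line repair your argument goes through exactly as outlined.
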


\begin{proof}
For any $t>1$, define $\tilde{V}_t(\theta,\lambda)\triangleq  V^L_{\sigma}(\theta,\lambda)+\frac{b_{t-1}}{2} \lambda^2$. Thus we have 
\begin{align}\label{eq:72}
    |\nabla_\lambda \tilde{V}_t(\theta_{t},\lambda_{t+1})-\nabla_\lambda\tilde{V}_t(\theta_t,\lambda_t)|= b_{t-1}|\lambda_{t+1}-\lambda_{t}|\leq b_0|\lambda_{t+1}-\lambda_{t}|,
\end{align}
where that last inequality is due to $b_{t-1}\leq b_0$. Note that $\tilde{V}_t(\theta,\lambda)$ is $b_{t-1}$-strongly convex in $\lambda$, hence we have
\begin{align}\label{eq:44}
    &(\nabla_\lambda \tilde{V}_t(\theta,\lambda_{t+1})-\nabla_\lambda \tilde{V}_t(\theta,\lambda_{t}))( \lambda_{t+1}-\lambda_t )\nn\\
    &\geq b_{t-1} (\lambda_{t+1}-\lambda_t)^2\nn\\
    &\geq b_{t-1}\left( \frac{b_{t-1}+b_0}{b_{t-1}+b_0} \right)(\lambda_{t+1}-\lambda_t)^2\nn\\
    &=\frac{b_{t-1}b_0}{b_{t-1}+b_0}(\lambda_{t+1}-\lambda_t)^2+ \frac{b_{t-1}^2}{b_{t-1}+b_0}(\lambda_{t+1}-\lambda_t)^2\nn\\
    &\geq \frac{b_{t-1}b_0}{b_{t-1}+b_0}(\lambda_{t+1}-\lambda_t)^2+ \frac{1}{b_{t-1}+b_0}(\nabla_\lambda \tilde{V}_t(\theta_{t},\lambda_{t+1})-\nabla_\lambda\tilde{V}_t(\theta_t,\lambda_t))^2,
\end{align}
where the last inequality is from \eqref{eq:72}.

Recall the update of $\lambda_t$ in Algorithm \ref{alg:srpd} which can be rewritten as
\begin{align}
\lambda_{t+1}=\mathbf{\prod}_{[0,\Lambda^*]} \left(\lambda_t -\frac{1}{\beta_t} \nabla_\lambda \tilde{V}_{t+1}(\theta_t, \lambda_t)+\frac{1}{\beta_t} (f(\theta_t)-\hat{f}(\theta_t)) \right),
\end{align}
This further implies that   $\forall \lambda\in[0,\Lambda^*]$:
\begin{align}
    (\beta_t(\lambda_{t+1}-\lambda_t)+ \nabla_\lambda \tilde{V}_{t+1}(\theta_t, \lambda_t)-f(\theta_t)+\hat{f}(\theta_t))( \lambda-\lambda_{t+1})\geq 0.
\end{align}
Hence setting $\lambda=\lambda_k$ implies that
\begin{align}\label{eq:47}
    (\beta_t(\lambda_{t+1}-\lambda_t)+ \nabla_\lambda \tilde{V}_{t+1}(\theta_t, \lambda_t)-f(\theta_t)+\hat{f}(\theta_t))( \lambda_t-\lambda_{t+1})\geq 0.
\end{align}
Similarly, we have that
\begin{align}\label{eq:48}
    ( \beta_t(\lambda_{t}-\lambda_{t-1})+ \nabla_\lambda \tilde{V}_{t}(\theta_{t-1}, \lambda_{t-1})-f(\theta_{t-1})+\hat{f}(\theta_{t-1}))( \lambda_{t+1}-\lambda_{t})\geq 0.
\end{align}
Note that $\tilde{V}_{t}$ is convex, we hence have that
\begin{align}\label{eq:49}
    &\tilde{V}_{t}(\theta_t, \lambda_{t+1}) -\tilde{V}_{t}(\theta_t, \lambda_{t})\nn\\
    &\geq (\nabla_\lambda \tilde{V}_{t}(\theta_t, \lambda_{t}))( \lambda_{t+1}-\lambda_t)\nn\\
    &=(\nabla_\lambda \tilde{V}_{t}(\theta_t, \lambda_{t})-\nabla_\lambda \tilde{V}_{t}(\theta_{t-1}, \lambda_{t-1}))( \lambda_{t+1}-\lambda_t) + (\nabla_\lambda \tilde{V}_{t}(\theta_{t-1}, \lambda_{t-1}))( \lambda_{t+1}-\lambda_t) \nn\\
    &\overset{(a)}{\geq} (\nabla_\lambda \tilde{V}_{t}(\theta_t, \lambda_{t})-\nabla_\lambda \tilde{V}_{t}(\theta_{t-1})( \lambda_{t-1}), \lambda_{t+1}-\lambda_t)\nn\\
    &\quad+ ( f(\theta_{t-1})-\hat{f}(\theta_{t-1})-\beta_t (\lambda_t-\lambda_{t-1}))( \lambda_{t+1}-\lambda_t),
\end{align}
where $(a)$ is from \eqref{eq:48}. The first term in the RHS of \eqref{eq:49} can be further bounded as follows.
\begin{align}
    &( \nabla_\lambda \tilde{V}_{t}(\theta_t, \lambda_{t})-\nabla_\lambda \tilde{V}_{t}(\theta_{t-1}, \lambda_{t-1}))( \lambda_{t+1}-\lambda_t)\nn\\
    &=( \nabla_\lambda \tilde{V}_{t}(\theta_t, \lambda_{t})-\nabla_\lambda \tilde{V}_{t}(\theta_{t-1}, \lambda_{t}))( \lambda_{t+1}-\lambda_t) \nn\\
    &\quad +( \nabla_\lambda \tilde{V}_{t}(\theta_{t-1}, \lambda_{t})-\nabla_\lambda \tilde{V}_{t}(\theta_{t-1}, \lambda_{t-1}))( \lambda_{t+1}-\lambda_t)\nn\\
    &=( \nabla_\lambda \tilde{V}_{t}(\theta_t, \lambda_{t})-\nabla_\lambda \tilde{V}_{t}(\theta_{t-1}, \lambda_{t}))( \lambda_{t+1}-\lambda_t) \nn\\
    &\quad +( \nabla_\lambda \tilde{V}_{t}(\theta_{t-1}, \lambda_{t})-\nabla_\lambda \tilde{V}_{t}(\theta_{t-1}, \lambda_{t-1}))( \lambda_{t}-\lambda_{t-1})\nn\\
    &\quad +m_{t+1} ( \nabla_\lambda \tilde{V}_{t}(\theta_{t-1}, \lambda_{t})-\nabla_\lambda \tilde{V}_{t}(\theta_{t-1}, \lambda_{t-1})),
\end{align}
where $m_{t+1}\triangleq (\lambda_{t+1}-\lambda_t)-(\lambda_t-\lambda_{t-1})$. Plug it in \eqref{eq:49} and we have that
\begin{align}\label{eq:51}
    &\tilde{V}_{t}(\theta_t, \lambda_{t+1}) -\tilde{V}_{t}(\theta_t, \lambda_{t})\nn\\
    &\geq ( f(\theta_{t-1})-\hat{f}(\theta_{t-1})-\beta_t (\lambda_t-\lambda_{t-1}))( \lambda_{t+1}-\lambda_t )\nn\\
    &\quad + \underbrace{(\nabla_\lambda \tilde{V}_{t}(\theta_t, \lambda_{t})-\nabla_\lambda \tilde{V}_{t}(\theta_{t-1}, \lambda_{t}))( \lambda_{t+1}-\lambda_t)}_{(a)} \nn\\
    &\quad +\underbrace{(\nabla_\lambda \tilde{V}_{t}(\theta_{t-1}, \lambda_{t})-\nabla_\lambda \tilde{V}_{t}(\theta_{t-1}, \lambda_{t-1}))( \lambda_{t}-\lambda_{t-1})}_{(b)}\nn\\
    &\quad +\underbrace{(\nabla_\lambda \tilde{V}_{t}(\theta_{t-1}, \lambda_{t})-\nabla_\lambda \tilde{V}_{t}(\theta_{t-1}, \lambda_{t-1})) m_{t+1}}_{(c)}.
\end{align}
We then provide bounds for each term in \eqref{eq:51} as follows.

Term $(a)$ can be bounded as follows: 
\begin{align}\label{eq:52}
    &(\nabla_\lambda \tilde{V}_{t}(\theta_t, \lambda_{t})-\nabla_\lambda \tilde{V}_{t}(\theta_{t-1}, \lambda_{t}))( \lambda_{t+1}-\lambda_t)\nn\\
    &=(\nabla_\lambda {V}_{\sigma}^L(\theta_t, \lambda_{t})-\nabla_\lambda {V}_{\sigma}^L(\theta_{t-1}, \lambda_{t}))( \lambda_{t+1}-\lambda_t)\nn\\
    &\geq \frac{-(\lambda_{t+1}-\lambda_t)^2}{2\xi}- \frac{\xi}{2} (\nabla_\lambda {V}_{\sigma}^L(\theta_t, \lambda_{t})-\nabla_\lambda {V}_{\sigma}^L(\theta_{t-1}, \lambda_{t}) )^2\nn\\
    &\geq \frac{-(\lambda_{t+1}-\lambda_t)^2}{2\xi}- \frac{\xi (C_\sigma^V)^2}{2} \|\theta_t-\theta_{t-1}\|^2,
\end{align}
which is from Cauchy–Schwarz inequality and $C_\sigma^V$-smoothness of ${V}_{\sigma}^L(\theta, \lambda)$.
 
 Term $(b)$ can be bounded as follows:
\begin{align}\label{eq:54}
    &(\nabla_\lambda \tilde{V}_{t}(\theta_{t-1}, \lambda_{t})-\nabla_\lambda \tilde{V}_{t}(\theta_{t-1}, \lambda_{t-1}))( \lambda_{t}-\lambda_{t-1})\nn\\
    &\geq \frac{1}{b_{t-1}+b_0}(\nabla_\lambda \tilde{V}_t(\theta_{t-1},\lambda_{t})-\nabla_\lambda\tilde{V}_t(\theta_{t-1},\lambda_{t-1}) )^2,
\end{align}
which is from \eqref{eq:44}.

Term $(c)$ can be bounded as follows by Cauchy–Schwarz inequality:
\begin{align}\label{eq:53}
 &m_{t+1}( \nabla_\lambda \tilde{V}_{t}(\theta_{t-1}, \lambda_{t})-\nabla_\lambda \tilde{V}_{t}(\theta_{t-1}, \lambda_{t-1}))\nn\\
 &\geq -\frac{\xi}{2} (\nabla_\lambda \tilde{V}_{t}(\theta_{t-1}, \lambda_{t})-\nabla_\lambda \tilde{V}_{t}(\theta_{t-1},\lambda_{t-1}))^2-\frac{1}{2\xi} m_{t+1}^2
\end{align}

Moreover, it can be shown that 
\begin{align}\label{eq:55}
    &\frac{1}{\xi}(\lambda_{t+1}-\lambda_t)( \lambda_t-\lambda_{t-1}) =\frac{1}{2\xi}(\lambda_{t+1}-\lambda_t)^2+\frac{1}{2\xi}(\lambda_t-\lambda_{t-1})^2-\frac{1}{2\xi}m_{t+1}^2.
\end{align}
Plug \eqref{eq:52} to \eqref{eq:55} in \ref{eq:51}, and we have that
\begin{align}\label{eq:57}
    &\tilde{V}_{t}(\theta_t, \lambda_{t+1}) -\tilde{V}_{t}(\theta_t, \lambda_{t})\nn\\
    &\geq ( f(\theta_{t-1})-\hat{f}(\theta_{t-1}))( \lambda_{t+1}-\lambda_t ) -\beta_t (\lambda_t-\lambda_{t-1})( \lambda_{t+1}-\lambda_t )\nn\\
    &\quad + (\nabla_\lambda \tilde{V}_{t}(\theta_t, \lambda_{t})-\nabla_\lambda \tilde{V}_{t}(\theta_{t-1}, \lambda_{t}))( \lambda_{t+1}-\lambda_t)  +(\nabla_\lambda \tilde{V}_{t}(\theta_{t-1}, \lambda_{t})-\nabla_\lambda \tilde{V}_{t}(\theta_{t-1}, \lambda_{t-1}))( \lambda_{t}-\lambda_{t-1})\nn\\
    &\quad + m_{t+1}( \nabla_\lambda \tilde{V}_{t}(\theta_{t-1}, \lambda_{t})-\nabla_\lambda \tilde{V}_{t}(\theta_{t-1}, \lambda_{t-1}))\nn\\
    &\geq ( f(\theta_{t-1})-\hat{f}(\theta_{t-1}))( \lambda_{t+1}-\lambda_t ) -\frac{1}{2\xi}(\lambda_{t+1}-\lambda_t)^2-\frac{1}{2\xi}(\lambda_t-\lambda_{t-1})^2+\frac{1}{2\xi}m_{t+1}^2\nn\\
    &\quad-\frac{(\lambda_{t+1}-\lambda_t)^2}{2\xi}- \frac{\xi (C_\sigma^V)^2}{2} \|\theta_t-\theta_{t-1}\|^2 +\frac{1}{b_{t-1}+b_0}(\nabla_\lambda \tilde{V}_t(\theta_{t-1},\lambda_{t})-\nabla_\lambda\tilde{V}_t(\theta_{t-1},\lambda_{t-1}) )^2\nn\\
    &\quad -\frac{\xi}{2} (\nabla_\lambda \tilde{V}_{t}(\theta_{t-1}, \lambda_{t})-\nabla_\lambda \tilde{V}_{t}(\theta_{t-1},\lambda_{t-1}))^2-\frac{1}{2\xi} m_{t+1}^2\nn\\
    &\geq ( f(\theta_{t-1})-\hat{f}(\theta_{t-1}))( \lambda_{t+1}-\lambda_t ) -\frac{1}{\xi}(\lambda_{t+1}-\lambda_t)^2-\frac{1}{2\xi}(\lambda_t-\lambda_{t-1})^2 - \frac{\xi (C_\sigma^V)^2}{2} \|\theta_t-\theta_{t-1}\|^2.
\end{align}
From the definition of $\tilde{V}_t$, we have that 
\begin{align}
    &\tilde{V}_{t}(\theta_t, \lambda_{t+1}) -\tilde{V}_{t}(\theta_t, \lambda_{t})\nn\\
    &=V^L_\sigma(\theta_t, \lambda_{t+1})+\frac{b_{t-1}}{2} \lambda_{t+1}^2- V^L_\sigma(\theta_t, \lambda_{t})-\frac{b_{t-1}}{2} \lambda_{t}^2.
\end{align}
Then we have that
\begin{align}
    &V^L_\sigma(\theta_t, \lambda_{t+1}) - V^L_\sigma(\theta_t, \lambda_{t}) \nn\\
    &\geq \frac{b_{t-1}}{2}(\lambda_t^2-\lambda_{t+1}^2)+( f(\theta_{t-1})-\hat{f}(\theta_{t-1}))( \lambda_{t+1}-\lambda_t )\nn\\
    &\quad -\frac{1}{\xi}(\lambda_{t+1}-\lambda_t)^2-\frac{1}{2\xi}(\lambda_t-\lambda_{t-1})^2 - \frac{\xi (C_\sigma^V)^2}{2} \|\theta_t-\theta_{t-1}\|^2.
\end{align}
Combining with Lemma \ref{lemma4}, if $\forall t$, $\mu_t>(1+\Lambda^*)L_\sigma$, we then have that  
\begin{align}
     &V^L_{\sigma}(\theta_{t+1},\lambda_{t+1})-V^L_{\sigma}(\theta_{t},\lambda_{t}) \nn\\
     &\geq ( f(\theta_{t-1})-\hat{f}(\theta_{t-1}))( \lambda_{t+1}-\lambda_t )+\langle\theta_{t+1}-\theta_t, -\hat{g}(\theta_{t},\lambda_{t+1})+ g(\theta_{t},\lambda_{t+1}) \rangle  - \frac{\xi (C_\sigma^V)^2}{2} \|\theta_t-\theta_{t-1}\|^2\nn\\
     &\quad+\left(  \frac{\mu_t}{2}+\nu  \right)\|\theta_{t+1}-\theta_t\|^2+\frac{b_{t-1}}{2}(\lambda_t^2-\lambda_{t+1}^2)-\frac{1}{\xi}(\lambda_{t+1}-\lambda_t)^2-\frac{1}{2\xi}(\lambda_t-\lambda_{t-1})^2.
\end{align}
\end{proof}

\begin{lemma}
Define
\begin{align}
    F_{t+1}&\triangleq-\frac{8}{\xi^2b_{t+1}}(\lambda_t-\lambda_{t+1} )^2-\frac{8}{\xi}\left(1-\frac{b_{t}}{b_{t+1}}\right)\lambda_{t+1}^2 + V^L_{\sigma}(\theta_{t+1},\lambda_{t+1})+ \frac{b_t}{2}\lambda_{t+1}^2 \nn\\
    &\quad +\left(-\frac{ 16(C^V_\sigma)^2}{ \xi b^2_{t+1}}- \frac{\xi (C_\sigma^V)^2}{2} \right)\|\theta_{t+1}-\theta_{t}\|^2   +\left(\frac{8}{\xi}-\frac{1}{2\xi}\right)(\lambda_{t+1}-\lambda_{t})^2,
\end{align}
and if $\frac{1}{b_{t+1}}-\frac{1}{b_{t}}\leq \frac{\xi}{5}$, then 
\begin{align}
    &F_{t+1}-F_t\nn\\
    &\geq S_t+\left(\frac{\mu_t}{2}+\nu -\frac{ 16(C^V_\sigma)^2}{ \xi b^2_{t+1}}- \frac{\xi (C_\sigma^V)^2}{2} \right)\|\theta_{t+1}-\theta_{t}\|^2 + \frac{b_t-b_{t-1}}{2}\lambda_{t+1}^2\nn\\
    &\quad+\frac{9}{10\xi}(\lambda_{t+1}-\lambda_{t})^2+\frac{8}{\xi}\left(\frac{b_{t}}{b_{t+1}}-\frac{b_{t-1}}{b_t}\right)\lambda_{t+1}^2,
\end{align}
where $S_t\triangleq \frac{16}{b_t\xi}( f(\theta_{t-1})-\hat{f}(\theta_{t-1})-f(\theta_t)+\hat{f}(\theta_t))( -\lambda_t+\lambda_{t+1})+ ( f(\theta_{t-1})-\hat{f}(\theta_{t-1}))( \lambda_{t+1}-\lambda_t )+\langle\theta_{t+1}-\theta_t, -\hat{g}(\theta_{t},\lambda_{t+1})+ g(\theta_{t},\lambda_{t+1}) \rangle$.
\end{lemma}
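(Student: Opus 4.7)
My plan is to expand $F_{t+1}-F_t$ directly from its definition, invoke Lemma~\ref{lemma5} to lower-bound the Lagrangian increment $V^L_\sigma(\theta_{t+1},\lambda_{t+1})-V^L_\sigma(\theta_t,\lambda_t)$, and then identify each resulting term with the corresponding piece of the claimed RHS. Once both expansions are on the table, the statement reduces to verifying coefficient matches on five homogeneous quantities --- $\|\theta_{t+1}-\theta_t\|^2$, $\|\theta_t-\theta_{t-1}\|^2$, $(\lambda_{t+1}-\lambda_t)^2$, $(\lambda_t-\lambda_{t-1})^2$, and $\lambda_{t+1}^2$ --- together with the linear bias and inner-product residues that must reassemble into $S_t$.

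The easy matches come first. For $\|\theta_{t+1}-\theta_t\|^2$, the $(\mu_t/2+\nu)$ contribution from Lemma~\ref{lemma5} added to the $F_{t+1}$ coefficient $-\xi(C^V_\sigma)^2/2-16(C^V_\sigma)^2/(\xi b^2_{t+1})$ gives exactly the claimed coefficient. For $\|\theta_t-\theta_{t-1}\|^2$, Lemma~\ref{lemma5}'s $-\xi(C^V_\sigma)^2/2$ together with the $+\xi(C^V_\sigma)^2/2+16(C^V_\sigma)^2/(\xi b_t^2)$ inherited from $-F_t$ aggregates to a non-negative multiple that can be dropped in the lower bound. For the $\lambda^2$ contributions, Lemma~\ref{lemma5}'s $\tfrac{b_{t-1}}{2}(\lambda_t^2-\lambda_{t+1}^2)$ combined with the $F$-difference $\tfrac{b_t}{2}\lambda_{t+1}^2-\tfrac{b_{t-1}}{2}\lambda_t^2$ rearranges to $\tfrac{b_t-b_{t-1}}{2}\lambda_{t+1}^2$, while the pair $-\tfrac{8}{\xi}(1-b_t/b_{t+1})\lambda_{t+1}^2+\tfrac{8}{\xi}(1-b_{t-1}/b_t)\lambda_t^2$ turns into $\tfrac{8}{\xi}(b_t/b_{t+1}-b_{t-1}/b_t)\lambda_{t+1}^2$ after inserting $\pm\tfrac{8}{\xi}(1-b_{t-1}/b_t)\lambda_{t+1}^2$.

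The heart of the proof is the coupled $\lambda$-quadratic bookkeeping, and this is where the hypothesis $\tfrac{1}{b_{t+1}}-\tfrac{1}{b_t}\leq\tfrac{\xi}{5}$ enters. I would split $\tfrac{8}{\xi^2 b_t}=\tfrac{8}{\xi^2 b_{t+1}}-\tfrac{8}{\xi^2}(\tfrac{1}{b_{t+1}}-\tfrac{1}{b_t})$, so that the common $\tfrac{8}{\xi^2 b_{t+1}}$ combines with the $F_{t+1}$-piece $-\tfrac{8}{\xi^2 b_{t+1}}(\lambda_t-\lambda_{t+1})^2$ into an interchange term whose linear expansion under the projection-optimality identities for $\lambda_t$ and $\lambda_{t+1}$ (differenced across consecutive steps of Algorithm~\ref{alg:srpd}) produces the $\tfrac{16}{b_t\xi}$ prefactor on the one-step bias difference $f(\theta_{t-1})-\hat{f}(\theta_{t-1})-f(\theta_t)+\hat{f}(\theta_t)$ appearing in $S_t$. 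The leftover $-\tfrac{8}{\xi^2}(\tfrac{1}{b_{t+1}}-\tfrac{1}{b_t})(\lambda_{t-1}-\lambda_t)^2$ is bounded below by $-\tfrac{8}{5\xi}(\lambda_{t-1}-\lambda_t)^2$ by the hypothesis, and when combined with the $(\lambda_t-\lambda_{t-1})^2$ contributions from Lemma~\ref{lemma5} and $-F_t$ yields a non-negative aggregate that is discarded. The $(\lambda_{t+1}-\lambda_t)^2$ coefficient finally resolves to $\tfrac{9}{10\xi}$ after all cancellations.

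The main obstacle is the tight numerical coefficient tracking on the $\lambda$-quadratic side: engineering the split of $\tfrac{8}{\xi^2 b_t}$ and the projection-identity expansion so that three targets land simultaneously --- the $\tfrac{16}{b_t\xi}$ prefactor in $S_t$, the $\tfrac{9}{10\xi}$ coefficient on $(\lambda_{t+1}-\lambda_t)^2$, and the non-negativity of the $(\lambda_t-\lambda_{t-1})^2$ coefficient --- all using the sharp constant $\tfrac{\xi}{5}$ from the hypothesis. Beyond this, the remaining steps are substitution, reorganization, and dropping non-negative slack.
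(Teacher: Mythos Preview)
Your high-level plan---expand $F_{t+1}-F_t$, invoke Lemma~\ref{lemma5}, and close the gap via the projection optimality conditions at steps $t-1$ and $t$---is exactly the paper's route. But your sketch of the projection step is too thin, and this creates an internal inconsistency in your coefficient accounting.

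Differencing the projection conditions \eqref{eq:47}--\eqref{eq:48} yields, with $m_{t+1}=(\lambda_{t+1}-\lambda_t)-(\lambda_t-\lambda_{t-1})$,
\[
\beta_t\,m_{t+1}(\lambda_t-\lambda_{t+1})\;\ge\;\bigl(\nabla_\lambda\tilde V_{t+1}(\theta_t,\lambda_t)-\nabla_\lambda\tilde V_t(\theta_{t-1},\lambda_{t-1})\bigr)(\lambda_{t+1}-\lambda_t)\;+\;(\text{bias difference})(\lambda_{t+1}-\lambda_t).
\]
The second piece is what you want for $S_t$, but the first piece---the gradient difference---does not go away for free. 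The paper splits it across the $\theta$- and $\lambda$-arguments, then bounds each part by Cauchy--Schwarz (with a free parameter $h$, eventually set to $b_t/2$) and by strong convexity of $\tilde V_t$ in $\lambda$; the $m_{t+1}^2$ contributions conveniently cancel. After scaling, this generates an \emph{additional} contribution of order $-\tfrac{16(C^V_\sigma)^2}{\xi b_t^2}\|\theta_t-\theta_{t-1}\|^2$ and further negative $(\lambda_{t+1}-\lambda_t)^2$ terms. Your paragraph on $\|\theta_t-\theta_{t-1}\|^2$ asserts the net coefficient is $+\tfrac{16(C^V_\sigma)^2}{\xi b_t^2}\ge 0$ from Lemma~\ref{lemma5} and $-F_t$ alone, to be dropped---but this ignores the extra negative term produced by the very projection step you later invoke to manufacture $S_t$. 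In the paper's argument these two cancel exactly; in yours one of them is simply missing.

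A smaller point: your split $\tfrac{8}{\xi^2 b_t}=\tfrac{8}{\xi^2 b_{t+1}}-\tfrac{8}{\xi^2}(\tfrac{1}{b_{t+1}}-\tfrac{1}{b_t})$ anchors the interchange term at $b_{t+1}$, so the projection step delivers the prefactor $\tfrac{16}{\xi b_{t+1}}$ on the bias difference rather than the $\tfrac{16}{\xi b_t}$ required by the stated $S_t$. The paper works at level $b_t$ first and shifts the index to $b_{t+1}$ \emph{afterwards}, absorbing the mismatch into the $(\lambda_{t+1}-\lambda_t)^2$ residual via the hypothesis $\tfrac{1}{b_{t+1}}-\tfrac{1}{b_t}\le\tfrac{\xi}{5}$. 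Either ordering works, but align your split with the prefactor in the target $S_t$.
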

\begin{proof}
From \eqref{eq:47} and \eqref{eq:48}, we have that
\begin{align}\label{eq:61}
    &\beta_t m_{t+1}( \lambda_t-\lambda_{t+1})\geq (\nabla_\lambda \tilde{V}_{t+1}(\theta_t, \lambda_t)-\nabla_\lambda \tilde{V}_{t}(\theta_{t-1}, \lambda_{t-1}))( -\lambda_t+\lambda_{t+1})\nn\\
    &\quad+( f(\theta_{t-1})-\hat{f}(\theta_{t-1})-f(\theta_t)+\hat{f}(\theta_t))( -\lambda_t+\lambda_{t+1}).
\end{align}
The first term can be rewritten as
\begin{align}\label{eq:rewritten}
    &(\nabla_\lambda \tilde{V}_{t+1}(\theta_t, \lambda_t)-\nabla_\lambda \tilde{V}_{t}(\theta_{t-1}, \lambda_{t-1}))(\lambda_{t+1}-\lambda_t)\nn\\
    &=(\nabla_\lambda \tilde{V}_{t+1}(\theta_t, \lambda_t)-\nabla_\lambda\tilde{V}_{t}(\theta_{t-1}, \lambda_{t}))( \lambda_{t+1}-\lambda_t)+(\nabla_\lambda\tilde{V}_{t}(\theta_{t-1}, \lambda_{t})-\nabla_\lambda \tilde{V}_{t}(\theta_{t-1}, \lambda_{t-1}))( \lambda_{t+1}-\lambda_t)\nn\\
    &=(\nabla_\lambda \tilde{V}_{t+1}(\theta_t, \lambda_t)-\nabla_\lambda\tilde{V}_{t}(\theta_{t-1}, \lambda_{t}))( \lambda_{t+1}-\lambda_t)+(\nabla_\lambda\tilde{V}_{t}(\theta_{t-1}, \lambda_{t})-\nabla_\lambda \tilde{V}_{t}(\theta_{t-1}, \lambda_{t-1}))( \lambda_{t}-\lambda_{t-1})\nn\\
    &\quad+m_{t+1}(\nabla_\lambda\tilde{V}_{t}(\theta_{t-1}, \lambda_{t})-\nabla_\lambda \tilde{V}_{t}(\theta_{t-1}, \lambda_{t-1}) ).
\end{align}
The first term in \eqref{eq:rewritten} can be bounded as
\begin{align}\label{eq:t1}
    &(\nabla_\lambda \tilde{V}_{t+1}(\theta_t, \lambda_t)-\nabla_\lambda\tilde{V}_{t}(\theta_{t-1}, \lambda_{t}))( \lambda_{t+1}-\lambda_t)\nn\\
    &=(\nabla_\lambda {V}^L_\sigma(\theta_t, \lambda_t)-\nabla_\lambda {V}^L_\sigma(\theta_{t-1}, \lambda_{t}))( \lambda_{t+1}-\lambda_t)+ ( b_t\lambda_t-b_{t-1}\lambda_t)( \lambda_{t+1}-\lambda_t)\nn\\
    &\overset{(a)}{\geq} -\frac{1}{2h}(\nabla_\lambda {V}^L_\sigma(\theta_t, \lambda_t)-\nabla_\lambda {V}^L_\sigma(\theta_{t-1}, \lambda_{t}) )^2-\frac{h}{2} (\lambda_{t+1}-\lambda_t)^2\nn\\
    &\quad+ \frac{(b_t-b_{t-1})}{2} (\lambda_{t+1}^2-\lambda_t^2)-\frac{(b_t-b_{t-1})}{2} (\lambda_{t+1}-\lambda_t)^2\nn\\
    &\overset{(b)}{\geq} -\frac{ (C^V_\sigma)^2}{2h}\|\theta_t-\theta_{t-1}\|^2-\frac{h}{2 } (\lambda_{t+1}-\lambda_t)^2\nn\\
    &\quad+ \frac{(b_t-b_{t-1})}{2} (\lambda_{t+1}^2-\lambda_t^2)-\frac{(b_t-b_{t-1})}{2} (\lambda_{t+1}-\lambda_t)^2,
\end{align}
where $(a)$ is from the Cauchy–Schwarz inequality and $(b)$ is from the $C^V_\sigma$-smoothness of $V^L_\sigma$, for any $h>0$.

Similar to \eqref{eq:44}, the second term in \eqref{eq:rewritten}  can be bounded as
\begin{align}\label{eq:t2}
    &(\nabla_\lambda\tilde{V}_{t}(\theta_{t-1}, \lambda_{t})-\nabla_\lambda \tilde{V}_{t}(\theta_{t-1}, \lambda_{t-1}))( \lambda_{t}-\lambda_{t-1})\nn\\
    &\geq \frac{b_{t-1}b_0}{b_{t-1}+b_0}(\lambda_{t}-\lambda_{t-1})^2+ \frac{1}{b_{t-1}+b_0}(\nabla_\lambda \tilde{V}_t(\theta_{t-1},\lambda_{t})-\nabla_\lambda\tilde{V}_t(\theta_{t-1},\lambda_{t-1}) )^2.
\end{align}

The third term in \eqref{eq:rewritten}  can be bounded as 
\begin{align}\label{eq:t3}
    &m_{t+1}(\nabla_\lambda\tilde{V}_{t}(\theta_{t-1}, \lambda_{t})-\nabla_\lambda \tilde{V}_{t}(\theta_{t-1}, \lambda_{t-1}) )\nn\\
    &\geq -\frac{\xi}{2}(\nabla_\lambda\tilde{V}_{t}(\theta_{t-1}, \lambda_{t})-\nabla_\lambda \tilde{V}_{t}(\theta_{t-1}, \lambda_{t-1}))^2-\frac{1}{2\xi} m_{t+1}^2.
\end{align}
Hence combine \eqref{eq:t1} to \eqref{eq:t2} and plug in \eqref{eq:rewritten}, we have that
\begin{align} 
    &(\nabla_\lambda \tilde{V}_{t+1}(\theta_t, \lambda_t)-\nabla_\lambda \tilde{V}_{t}(\theta_{t-1}, \lambda_{t-1}))(\lambda_{t+1}-\lambda_t)\nn\\
    &\geq -\frac{ (C^V_\sigma)^2}{2h}\|\theta_t-\theta_{t-1}\|^2-\frac{h}{2 } (\lambda_{t+1}-\lambda_t)^2\nn\\
    &\quad+ \frac{(b_t-b_{t-1})}{2} (\lambda_{t+1}^2-\lambda_t^2)-\frac{(b_t-b_{t-1})}{2} (\lambda_{t+1}-\lambda_t)^2\nn\\
    &\quad+\frac{b_{t-1}b_0}{b_{t-1}+b_0}(\lambda_{t}-\lambda_{t-1})^2+ \frac{1}{b_{t-1}+b_0}(\nabla_\lambda \tilde{V}_t(\theta_{t-1},\lambda_{t})-\nabla_\lambda\tilde{V}_t(\theta_{t-1},\lambda_{t-1}) )^2\nn\\
    &\quad-\frac{\xi}{2}(\nabla_\lambda\tilde{V}_{t}(\theta_{t-1}, \lambda_{t})-\nabla_\lambda \tilde{V}_{t}(\theta_{t-1}, \lambda_{t-1}))^2-\frac{1}{2\xi} m_{t+1}^2.
\end{align}
Hence \eqref{eq:61} can be further bounded as
\begin{align}
    &(\beta_t m_{t+1})( \lambda_t-\lambda_{t+1})\nn\\
    &\geq ( f(\theta_{t-1})-\hat{f}(\theta_{t-1})-f(\theta_t)+\hat{f}(\theta_t))( -\lambda_t+\lambda_{t+1})\nn\\
    &\quad -\frac{ (C^V_\sigma)^2}{2h}\|\theta_t-\theta_{t-1}\|^2-\frac{h}{2 } (\lambda_{t+1}-\lambda_t)^2\nn\\
    &\quad+ \frac{(b_t-b_{t-1})}{2} (\lambda_{t+1}^2-\lambda_t^2)-\frac{(b_t-b_{t-1})}{2} (\lambda_{t+1}-\lambda_t)^2\nn\\
    &\quad+\frac{b_{t-1}b_0}{b_{t-1}+b_0}(\lambda_{t}-\lambda_{t-1})^2+ \frac{1}{b_{t-1}+b_0}(\nabla_\lambda \tilde{V}_t(\theta_{t-1},\lambda_{t})-\nabla_\lambda\tilde{V}_t(\theta_{t-1},\lambda_{t-1}) )^2\nn\\
    &\quad-\frac{\xi}{2}(\nabla_\lambda\tilde{V}_{t}(\theta_{t-1}, \lambda_{t})-\nabla_\lambda \tilde{V}_{t}(\theta_{t-1}, \lambda_{t-1}))^2-\frac{1}{2\xi} m_{t+1}^2.
\end{align}
It can be directly verified that 
\begin{align}
    m_{t+1}(\lambda_t-\lambda_{t+1}) = \frac{1}{2}(\lambda_t-\lambda_{t-1} )^2-\frac{1}{2}(\lambda_t-\lambda_{t+1} )^2- \frac{m_{t+1}^2}{2}.
\end{align}
Recall that $\beta_t=\frac{1}{\xi}$, hence 
\begin{align}
    &\frac{1}{2\xi}(\lambda_t-\lambda_{t-1} )^2-\frac{1}{2\xi}(\lambda_t-\lambda_{t+1} )^2- \frac{m_{t+1}^2}{2\xi}\nn\\
    &\geq( f(\theta_{t-1})-\hat{f}(\theta_{t-1})-f(\theta_t)+\hat{f}(\theta_t))( -\lambda_t+\lambda_{t+1})\nn\\
    &\quad -\frac{ (C^V_\sigma)^2}{2h}\|\theta_t-\theta_{t-1}\|^2-\frac{h}{2 } (\lambda_{t+1}-\lambda_t)^2\nn\\
    &\quad+ \frac{(b_t-b_{t-1})}{2} (\lambda_{t+1}^2-\lambda_t^2)-\frac{(b_t-b_{t-1})}{2} (\lambda_{t+1}-\lambda_t)^2\nn\\
    &\quad+\frac{b_{t-1}b_0}{b_{t-1}+b_0}(\lambda_{t}-\lambda_{t-1})^2+ \frac{1}{b_{t-1}+b_0}(\nabla_\lambda \tilde{V}_t(\theta_{t-1},\lambda_{t})-\nabla_\lambda\tilde{V}_t(\theta_{t-1},\lambda_{t-1}) )^2\nn\\
    &\quad-\frac{\xi}{2}(\nabla_\lambda\tilde{V}_{t}(\theta_{t-1}, \lambda_{t})-\nabla_\lambda \tilde{V}_{t}(\theta_{t-1}, \lambda_{t-1}))^2-\frac{1}{2\xi} m_{t+1}^2.
\end{align}
From $\xi\leq \frac{1}{b_0}\leq \frac{2}{b_0+b_{t-1}}$, we have  $\frac{1}{b_{t-1}+b_0}(\nabla_\lambda \tilde{V}_t(\theta_{t-1},\lambda_{t})-\nabla_\lambda\tilde{V}_t(\theta_{t-1},\lambda_{t-1}) )^2 -\frac{\xi}{2}(\nabla_\lambda\tilde{V}_{t}(\theta_{t-1}, \lambda_{t})-\nabla_\lambda \tilde{V}_{t}(\theta_{t-1}, \lambda_{t-1}))^2\geq 0$. Also, it can be shown that $\frac{b_{t-1}b_0}{b_{t-1}+b_0}\geq \frac{b_{t-1}b_0}{2b_0}=\frac{b_{t-1}}{2}$.
Thus, it follows that 
\begin{align}
    &\frac{1}{2\xi}(\lambda_t-\lambda_{t-1} )^2-\frac{1}{2\xi}(\lambda_t-\lambda_{t+1} )^2- \frac{m_{t+1}^2}{2\xi}\nn\\
    &\geq( f(\theta_{t-1})-\hat{f}(\theta_{t-1})-f(\theta_t)+\hat{f}(\theta_t))( -\lambda_t+\lambda_{t+1})\nn\\
    &\quad -\frac{ (C^V_\sigma)^2}{2h}\|\theta_t-\theta_{t-1}\|^2-\frac{h}{2 } (\lambda_{t+1}-\lambda_t)^2\nn\\
    &\quad+ \frac{(b_t-b_{t-1})}{2} (\lambda_{t+1}^2-\lambda_t^2)-\frac{(b_t-b_{t-1})}{2} (\lambda_{t+1}-\lambda_t)^2\nn\\
    &\quad+\frac{b_{t-1}}{2}(\lambda_{t}-\lambda_{t-1})^2-\frac{1}{2\xi} m_{t+1}^2.
\end{align}
Re-arrange the terms, it follows that
\begin{align}
    &-\frac{1}{2\xi}(\lambda_t-\lambda_{t+1} )^2-\frac{b_t-b_{t-1}}{2}\lambda_{t+1}^2\nn\\
    &\geq ( f(\theta_{t-1})-\hat{f}(\theta_{t-1})-f(\theta_t)+\hat{f}(\theta_t))( -\lambda_t+\lambda_{t+1}) -\frac{ (C^V_\sigma)^2}{2h}\|\theta_t-\theta_{t-1}\|^2-\frac{h}{2 } (\lambda_{t+1}-\lambda_t)^2\nn\\
    &\quad-\frac{(b_t-b_{t-1})}{2}  \lambda_t^2 -\frac{(b_t-b_{t-1})}{2} (\lambda_{t+1}-\lambda_t)^2+\frac{b_{t-1}}{2}(\lambda_{t}-\lambda_{t-1})^2-\frac{1}{2\xi}(\lambda_t-\lambda_{t-1} )^2\nn\\
    &\geq -\frac{1}{2\xi}(\lambda_t-\lambda_{t-1} )^2-\frac{(b_t-b_{t-1})}{2}  \lambda_t^2+( f(\theta_{t-1})-\hat{f}(\theta_{t-1})-f(\theta_t)+\hat{f}(\theta_t))( -\lambda_t+\lambda_{t+1})\nn\\
    &\quad -\frac{ (C^V_\sigma)^2}{2h}\|\theta_t-\theta_{t-1}\|^2-\frac{h}{2 } (\lambda_{t+1}-\lambda_t)^2+\frac{b_{t-1}}{2}(\lambda_{t}-\lambda_{t-1})^2,
\end{align}
where the last inequality is from the fact that $b_t$ is decreasing. 

Now multiply $\frac{2}{\xi b_t}$ on both sides, we further have that
\begin{align}\label{eq:102}
    &-\frac{1}{\xi^2b_t}(\lambda_t-\lambda_{t+1} )^2-\frac{1}{\xi}\left(1-\frac{b_{t-1}}{b_t}\right)\lambda_{t+1}^2\nn\\
    &\geq -\frac{1}{\xi^2b_t}(\lambda_t-\lambda_{t-1} )^2-\frac{1}{\xi}\left(1-\frac{b_{t-1}}{b_t}\right) \lambda_t^2+\frac{2}{\xi b_t}( f(\theta_{t-1})-\hat{f}(\theta_{t-1})-f(\theta_t)+\hat{f}(\theta_t))( -\lambda_t+\lambda_{t+1})\nn\\
    &\quad -\frac{ (C^V_\sigma)^2}{h\xi b_t}\|\theta_t-\theta_{t-1}\|^2-\frac{h}{\xi b_t} (\lambda_{t+1}-\lambda_t)^2+\frac{1}{\xi}(\lambda_{t}-\lambda_{t-1})^2.
\end{align}
If we set $h=\frac{b_t}{2}$, \eqref{eq:102} can be rewritten as
\begin{align}
    &-\frac{1}{\xi^2b_t}(\lambda_t-\lambda_{t+1} )^2-\frac{1}{\xi}\left(1-\frac{b_{t-1}}{b_t}\right)\lambda_{t+1}^2\nn\\
    &\geq -\frac{1}{\xi^2b_t}(\lambda_t-\lambda_{t-1} )^2-\frac{1}{\xi}\left(1-\frac{b_{t-1}}{b_t}\right) \lambda_t^2+\frac{2}{\xi b_t}( f(\theta_{t-1})-\hat{f}(\theta_{t-1})-f(\theta_t)+\hat{f}(\theta_t))( -\lambda_t+\lambda_{t+1})\nn\\
    &\quad -\frac{ 2(C^V_\sigma)^2}{ \xi b^2_t}\|\theta_t-\theta_{t-1}\|^2-\frac{1}{2\xi } (\lambda_{t+1}-\lambda_t)^2+\frac{1}{\xi}(\lambda_{t}-\lambda_{t-1})^2.
\end{align}
Further we have that 
\begin{align}\label{eq:104}
    &-\frac{1}{\xi^2b_{t+1}}(\lambda_t-\lambda_{t+1} )^2+\left(\frac{1}{\xi^2b_{t+1}}-\frac{1}{\xi^2b_{t}} \right)(\lambda_t-\lambda_{t+1} )^2-\frac{1}{\xi}\left(1-\frac{b_{t}}{b_{t+1}}\right)\lambda_{t+1}^2+ \frac{1}{\xi}\left(\frac{b_{t-1}}{b_t}-\frac{b_{t}}{b_{t+1}}\right)\lambda_{t+1}^2\nn\\
    &\geq -\frac{1}{\xi^2b_t}(\lambda_t-\lambda_{t-1} )^2-\frac{1}{\xi}\left(1-\frac{b_{t-1}}{b_t}\right) \lambda_t^2+\frac{2}{\xi b_t}( f(\theta_{t-1})-\hat{f}(\theta_{t-1})-f(\theta_t)+\hat{f}(\theta_t))( -\lambda_t+\lambda_{t+1})\nn\\
    &\quad -\frac{ 2(C^V_\sigma)^2}{ \xi b^2_t}\|\theta_t-\theta_{t-1}\|^2-\frac{1}{2\xi } (\lambda_{t+1}-\lambda_t)^2+\frac{1}{\xi}(\lambda_{t}-\lambda_{t-1})^2.
\end{align}
Re-arranging the terms in \eqref{eq:104} implies that
\begin{align}
    &-\frac{1}{\xi^2b_{t+1}}(\lambda_t-\lambda_{t+1} )^2-\frac{1}{\xi}\left(1-\frac{b_{t}}{b_{t+1}}\right)\lambda_{t+1}^2-\left(-\frac{1}{\xi^2b_t}(\lambda_t-\lambda_{t-1} )^2-\frac{1}{\xi}\left(1-\frac{b_{t-1}}{b_t}\right) \lambda_t^2 \right)\nn\\
    &\geq -\left(\frac{1}{\xi^2b_{t+1}}-\frac{1}{\xi^2b_{t}} \right)(\lambda_t-\lambda_{t+1} )^2 -\frac{1}{\xi}\left(\frac{b_{t-1}}{b_t}-\frac{b_{t}}{b_{t+1}}\right)\lambda_{t+1}^2\nn\\
    &\quad -\frac{ 2(C^V_\sigma)^2}{ \xi b^2_t}\|\theta_t-\theta_{t-1}\|^2-\frac{1}{2\xi } (\lambda_{t+1}-\lambda_t)^2+\frac{1}{\xi}(\lambda_{t}-\lambda_{t-1})^2\nn\\
    &\quad +\frac{2}{\xi b_t}( f(\theta_{t-1})-\hat{f}(\theta_{t-1})-f(\theta_t)+\hat{f}(\theta_t))( -\lambda_t+\lambda_{t+1})\nn\\
    &\geq -\frac{7}{10\xi} (-\lambda_t+\lambda_{t+1})^2 -\frac{ 2(C^V_\sigma)^2}{ \xi b^2_t}\|\theta_t-\theta_{t-1}\|^2+\frac{1}{\xi}(\lambda_{t}-\lambda_{t-1})^2+\frac{1}{\xi}\left(\frac{b_{t}}{b_{t+1}}-\frac{b_{t-1}}{b_t}\right)\lambda_{t+1}^2\nn\\
    &\quad +\frac{2}{\xi b_t}( f(\theta_{t-1})-\hat{f}(\theta_{t-1})-f(\theta_t)+\hat{f}(\theta_t))( -\lambda_t+\lambda_{t+1}),
\end{align}
where the last inequality is from $\frac{1}{b_{t+1}}-\frac{1}{b_t}\leq \frac{\xi}{5}$. Recall in Lemma \ref{lemma5}, we showed that 
\begin{align}
     &V^L_{\sigma}(\theta_{t+1},\lambda_{t+1})-V^L_{\sigma}(\theta_{t},\lambda_{t}) \nn\\
     &\geq ( f(\theta_{t-1})-\hat{f}(\theta_{t-1}))( \lambda_{t+1}-\lambda_t )+\langle\theta_{t+1}-\theta_t, -\hat{g}(\theta_{t},\lambda_{t+1})+ g(\theta_{t},\lambda_{t+1}) \rangle  - \frac{\xi (C_\sigma^V)^2}{2} \|\theta_t-\theta_{t-1}\|^2\nn\\
     &\quad+\left(  \frac{\mu_t}{2}+\nu  \right)\|\theta_{t+1}-\theta_t\|^2+\frac{b_{t-1}}{2}(\lambda_t^2-\lambda_{t+1}^2)-\frac{1}{\xi}(\lambda_{t+1}-\lambda_t)^2-\frac{1}{2\xi}(\lambda_t-\lambda_{t-1})^2.
\end{align}
Combine both inequality together, and we further have that
\begin{align}
    &-\frac{8}{\xi^2b_{t+1}}(\lambda_t-\lambda_{t+1} )^2-\frac{8}{\xi}\left(1-\frac{b_{t}}{b_{t+1}}\right)\lambda_{t+1}^2-\left(-\frac{8}{\xi^2b_t}(\lambda_t-\lambda_{t-1} )^2-\frac{8}{\xi}\left(1-\frac{b_{t-1}}{b_t}\right) \lambda_t^2 \right)\nn\\
    &\quad + V^L_{\sigma}(\theta_{t+1},\lambda_{t+1})-V^L_{\sigma}(\theta_{t},\lambda_{t}) \nn\\
    &\geq -\frac{28}{5\xi} (-\lambda_t+\lambda_{t+1})^2 -\frac{ 16(C^V_\sigma)^2}{ \xi b^2_t}\|\theta_t-\theta_{t-1}\|^2+\frac{8}{\xi}(\lambda_{t}-\lambda_{t-1})^2+\frac{8}{\xi}\left(\frac{b_{t}}{b_{t+1}}-\frac{b_{t-1}}{b_t}\right)\lambda_{t+1}^2\nn\\
    &\quad +\frac{16}{b_t\xi}( f(\theta_{t-1})-\hat{f}(\theta_{t-1})-f(\theta_t)+\hat{f}(\theta_t))( -\lambda_t+\lambda_{t+1})\nn\\
    &\quad+ ( f(\theta_{t-1})-\hat{f}(\theta_{t-1}))( \lambda_{t+1}-\lambda_t )+\langle\theta_{t+1}-\theta_t, -\hat{g}(\theta_{t},\lambda_{t+1})+ g(\theta_{t},\lambda_{t+1}) \rangle  - \frac{\xi (C_\sigma^V)^2}{2} \|\theta_t-\theta_{t-1}\|^2\nn\\
     &\quad+\left(  \frac{\mu_t}{2}+\nu  \right)\|\theta_{t+1}-\theta_t\|^2+\frac{b_{t-1}}{2}(\lambda_t^2-\lambda_{t+1}^2)-\frac{1}{\xi}(\lambda_{t+1}-\lambda_t)^2-\frac{1}{2\xi}(\lambda_t-\lambda_{t-1})^2\nn\\
     &=S_t+\left(-\frac{ 16(C^V_\sigma)^2}{ \xi b^2_t}- \frac{\xi (C_\sigma^V)^2}{2} \right)\|\theta_t-\theta_{t-1}\|^2+\left(-\frac{28}{5\xi}-\frac{1}{\xi} \right)(-\lambda_t+\lambda_{t+1})^2+\frac{b_{t-1}}{2}(\lambda_t^2-\lambda_{t+1}^2)\nn\\
    &\quad+\left(\frac{8}{\xi}-\frac{1}{2\xi}\right)(\lambda_{t}-\lambda_{t-1})^2+\frac{8}{\xi}\left(\frac{b_{t}}{b_{t+1}}-\frac{b_{t-1}}{b_t}\right)\lambda_{t+1}^2+\left(  \frac{\mu_t}{2}+\nu  \right)\|\theta_{t+1}-\theta_t\|^2,
\end{align}
where $S_t\triangleq \frac{16}{b_t\xi}( f(\theta_{t-1})-\hat{f}(\theta_{t-1})-f(\theta_t)+\hat{f}(\theta_t))( -\lambda_t+\lambda_{t+1})+ ( f(\theta_{t-1})-\hat{f}(\theta_{t-1}))( \lambda_{t+1}-\lambda_t )+\langle\theta_{t+1}-\theta_t, -\hat{g}(\theta_{t},\lambda_{t+1})+ g(\theta_{t},\lambda_{t+1}) \rangle$.
Now 
\begin{align}\label{eq:80}
    &-\frac{8}{\xi^2b_{t+1}}(\lambda_t-\lambda_{t+1} )^2-\frac{8}{\xi}\left(1-\frac{b_{t}}{b_{t+1}}\right)\lambda_{t+1}^2-\left(-\frac{8}{\xi^2b_t}(\lambda_t-\lambda_{t-1} )^2-\frac{8}{\xi}\left(1-\frac{b_{t-1}}{b_t}\right) \lambda_t^2 \right)\nn\\
    &\quad + V^L_{\sigma}(\theta_{t+1},\lambda_{t+1})-V^L_{\sigma}(\theta_{t},\lambda_{t})+ \frac{b_t}{2}\lambda_{t+1}^2 -\frac{b_{t-1}}{2}\lambda_{t}^2 \nn\\
    &\quad +\left(-\frac{ 16(C^V_\sigma)^2}{ \xi b^2_{t+1}}- \frac{\xi (C_\sigma^V)^2}{2} \right)\|\theta_{t+1}-\theta_{t}\|^2 -\left(-\frac{ 16(C^V_\sigma)^2}{ \xi b^2_t}- \frac{\xi (C_\sigma^V)^2}{2} \right)\|\theta_t-\theta_{t-1}\|^2\nn\\
    &\quad +\left(\frac{8}{\xi}-\frac{1}{2\xi}\right)(\lambda_{t+1}-\lambda_{t})^2- \left(\frac{8}{\xi}-\frac{1}{2\xi}\right)(\lambda_{t}-\lambda_{t-1})^2\nn\\
    &\geq S_t+\left(\frac{\mu_t}{2}+\nu -\frac{ 16(C^V_\sigma)^2}{ \xi b^2_{t+1}}- \frac{\xi (C_\sigma^V)^2}{2} \right)\|\theta_{t+1}-\theta_{t}\|^2 + \frac{b_t-b_{t-1}}{2}\lambda_{t+1}^2\nn\\
    &\quad+\left(\frac{8}{\xi}-\frac{1}{2\xi}-\frac{28}{5\xi}-\frac{1}{\xi}\right)(\lambda_{t+1}-\lambda_{t})^2+\frac{8}{\xi}\left(\frac{b_{t}}{b_{t+1}}-\frac{b_{t-1}}{b_t}\right)\lambda_{t+1}^2\nn\\
    &=S_t+\left(\frac{\mu_t}{2}+\nu -\frac{ 16(C^V_\sigma)^2}{ \xi b^2_{t+1}}- \frac{\xi (C_\sigma^V)^2}{2} \right)\|\theta_{t+1}-\theta_{t}\|^2 + \frac{b_t-b_{t-1}}{2}\lambda_{t+1}^2\nn\\
    &\quad+\frac{9}{10\xi}(\lambda_{t+1}-\lambda_{t})^2+\frac{8}{\xi}\left(\frac{b_{t}}{b_{t+1}}-\frac{b_{t-1}}{b_t}\right)\lambda_{t+1}^2,
\end{align}
which then completes the proof. 
\end{proof}

We now restate Theorem \ref{thm2} with all the specific step sizes. The definitions of these constants can also be found in Section \ref{sec:constants}.
\begin{theorem}(Restatement of Theorem \ref{thm2})
Set $b_t=\frac{19}{20\xi t^{0.25}}$, $\mu_t=\xi (C_\sigma^V)^2+\frac{16\tau(C_\sigma^V)^2 }{\xi (b_{t+1})^2}-2\nu,$
    $\beta_t=\frac{1}{\xi},
    \alpha_t=\nu+\mu_t$,  where $\xi>\frac{2\nu+(1+\Lambda^*)L_\sigma}{(C^V_\sigma)^2}$, $\nu$ is any positive number and $\tau$ is any number greater than $2$. Moreover, set $\epe=\frac{1}{t^{0.5}L_\Omega}\frac{1}{32t^{0.25}\Lambda^*+2\Lambda^*+\frac{1}{\alpha_1}(1+\Lambda^*)C^V_\sigma }\frac{19^2\epsilon^2}{3200\xi(\tau-2)(C^V_\sigma)^2uL_\Omega}=\mathcal{O}(\frac{\epsilon^2}{t^{0.75}})$, 
    then \begin{align}
    \min_{1\leq t\leq T}\|G_t\|^2\leq (1+\sqrt{2})\epsilon,
\end{align}
when $T=\mathcal{O}(\epsilon^{-4})$.
\end{theorem}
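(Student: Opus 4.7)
\textbf{Proof proposal for Theorem \ref{thm2}.}

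The plan is to mimic the deterministic analysis of Theorem \ref{thm:1} (whose potential-function machinery is already developed in the appendix lemmas above) while carefully controlling the bias introduced by replacing $f,g$ with the Monte-Carlo-like estimates $\hat f,\hat g$ produced by Algorithm \ref{alg:rtd}. Concretely, I start from the descent inequality
\begin{equation*}
F_{t+1}-F_t \;\geq\; S_t \;+\; \Bigl(\tfrac{\mu_t}{2}+\nu-\tfrac{16(C^V_\sigma)^2}{\xi b_{t+1}^2}-\tfrac{\xi(C^V_\sigma)^2}{2}\Bigr)\|\theta_{t+1}-\theta_t\|^2 + \tfrac{9}{10\xi}(\lambda_{t+1}-\lambda_t)^2 + (\text{non-negative terms}),
\end{equation*}
so that with the step-size choice in Section \ref{sec:constants} the coefficient in front of $\|\theta_{t+1}-\theta_t\|^2$ is $\Theta(\tau-2)(C^V_\sigma)^2/(\xi b_{t+1}^2) = \Theta(t^{0.5})$, exactly as in the deterministic proof.

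Next I would telescope $F_{t+1}-F_t$ from $t=1$ to $T$. The left-hand telescope is bounded by $F_{T+1}-F_1$, which is $\mathcal{O}(1)$ because $V^L_\sigma$, $\lambda_t\in[0,\Lambda^*]$, $b_t$ and the quadratic correction terms in the definition of $F_t$ are all uniformly bounded (this uses Lemma \ref{lemma:222}, the smoothness of $V^L_\sigma$ from Lemma \ref{lemma:lip}, and the projection onto a compact set). The sum $\sum_t S_t$ is the term where the bias enters: it contains the two inner products
\begin{equation*}
\langle \theta_{t+1}-\theta_t,\; g(\theta_t,\lambda_{t+1})-\hat g(\theta_t,\lambda_{t+1})\rangle \quad\text{and}\quad \tfrac{16}{b_t\xi}(\lambda_{t+1}-\lambda_t)\bigl(\Delta\hat f_t-\Delta f_t\bigr) + (\lambda_{t+1}-\lambda_t)(f(\theta_{t-1})-\hat f(\theta_{t-1})).
\end{equation*}
I would apply Young's inequality to each such term to split it into a term absorbable by the positive quadratic pieces $\|\theta_{t+1}-\theta_t\|^2$ and $(\lambda_{t+1}-\lambda_t)^2$, plus a pure bias residual of the form $C_t\bigl(\|g-\hat g\|^2+|f-\hat f|^2\bigr)$. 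The bias $|f(\theta_t)-\hat f(\theta_t)|$ and $\|g(\theta_t,\lambda_{t+1})-\hat g(\theta_t,\lambda_{t+1})\|$ are, by the smoothed robust TD guarantee of \cite{wang2022policy} and smoothness of $\pi_\theta$ mapped through the $Q$-function, bounded by a policy-class constant times $\epsilon_{\text{est}}$, provided Algorithm \ref{alg:rtd} is run for $T_{\text{inner}}=\mathcal{O}((t+1)^{1.5}/\epsilon_{\text{est}}^2)$ steps at iteration $t$.

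Rearranging the telescoped inequality then yields
\begin{equation*}
\sum_{t=1}^{T}\Bigl(a_t\|\theta_{t+1}-\theta_t\|^2 + \tfrac{c}{\xi}(\lambda_{t+1}-\lambda_t)^2\Bigr) \;\leq\; \mathcal{O}(1) + \sum_{t=1}^{T} C_t\,\epsilon_{\text{est}}^2,
\end{equation*}
where $a_t = \Theta(t^{0.5})$. Because $\|G_t\|^2\leq 2\alpha_t^2\|\theta_{t+1}-\theta_t\|^2+2\beta_t^2(\lambda_{t+1}-\lambda_t)^2$ (using the non-expansiveness of the projection exactly as in the deterministic proof in \cite{xu2020unified}), dividing by $T$ gives a bound on $\min_t\|G_t\|^2$ of order $T^{-1/2}+\text{bias accumulation}/T$. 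Setting $T=\Theta(\epsilon^{-4})$ makes the first piece $\mathcal{O}(\epsilon^2)$ and choosing $\epsilon_{\text{est}}=\mathcal{O}(\epsilon^2/t^{0.75})$ as specified (hence $T_{\text{inner}}=\mathcal{O}((t+1)^{1.5}/\epsilon_{\text{est}}^2)$) makes the bias accumulation also $\mathcal{O}(\epsilon^2)$, yielding $\min_t\|G_t\|\leq (1+\sqrt 2)\epsilon$.

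The main obstacle I expect is the bias-accumulation step: unlike the unbiased stochastic setting, the estimation error does not vanish in expectation, so simple variance arguments do not apply. I need to show that the per-iteration bias, weighted by $1/b_t=\Theta(t^{0.25})$ and by $\alpha_t=\Theta(t^{0.5})$ in the update of $\theta$, summed over $t\leq T=\Theta(\epsilon^{-4})$, still stays $\mathcal{O}(\epsilon^2)$. This is precisely why $\epsilon_{\text{est}}$ must shrink with $t$ rather than being a fixed constant, and why the $T_{\text{inner}}$ schedule is super-linear in $t$; the careful bookkeeping of these weights against the decaying $\epsilon_{\text{est}}$ is the most delicate part of the argument, while the rest is a direct extension of the deterministic analysis in Theorem \ref{thm:1}.
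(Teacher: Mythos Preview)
Your proposal follows the same potential-function/telescoping skeleton as the paper, and the overall plan is sound. The one substantive methodological difference is in how you control the bias term $S_t$. You propose Young's inequality on each cross term, absorbing half into the positive quadratics $p_t\|\theta_{t+1}-\theta_t\|^2$ and $\tfrac{9}{10\xi}(\lambda_{t+1}-\lambda_t)^2$ and leaving residuals $C_t\,\Omega_t^2$ with $\Omega_t=L_\Omega\epsilon_{\text{est}}$. The paper instead bounds $|S_t|$ \emph{linearly} in $\Omega_t$ by exploiting the a-priori crude bounds $|\lambda_{t+1}-\lambda_t|\leq 2\Lambda^*$ (compactness of $[0,\Lambda^*]$) and $\|\theta_{t+1}-\theta_t\|\leq \tfrac{1}{\alpha_t}(1+\Lambda^*)C_\sigma^V$ (projected step), obtaining
\[
|S_t|\leq \Bigl(32t^{0.25}\Lambda^*+2\Lambda^*+\tfrac{1}{\alpha_1}(1+\Lambda^*)C_\sigma^V\Bigr)\Omega_t,
\]
and then tunes $\epsilon_{\text{est}}$ so that $|S_t|\leq \text{const}\cdot\epsilon^2/\sqrt{t}$, hence $\sum_t|S_t|=\mathcal{O}(\epsilon^2\sqrt{T})$, which after the weighted average with weights $1/p_t$ (not a plain $1/T$, as you wrote) contributes $\mathcal{O}(\epsilon^2)$. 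Both routes close under the prescribed $\epsilon_{\text{est}}=\mathcal{O}(\epsilon^2/t^{0.75})$; the paper's is shorter because it never re-balances the quadratic coefficients, while yours is the standard stochastic-approximation template and would in fact tolerate a slightly looser $\epsilon_{\text{est}}$ schedule.

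One genuine gap to patch: your displayed bound $\|G_t\|^2\leq 2\alpha_t^2\|\theta_{t+1}-\theta_t\|^2+2\beta_t^2(\lambda_{t+1}-\lambda_t)^2$ overlooks that $G_t$ is defined via the \emph{true} $\nabla V_\sigma^L$, whereas $(\theta_{t+1},\lambda_{t+1})$ are produced from the biased estimates $\hat f,\hat g$ \emph{and} the extra regularizer $-\tfrac{b_t}{\beta_t}\lambda_t$ in the $\lambda$-update. The paper handles this by introducing an auxiliary gradient mapping $\tilde G_t$ built from $\nabla\tilde V_t=\nabla V_\sigma^L+b_{t-1}\lambda$, proving $\|G_t\|\leq \|\tilde G_t\|+b_{t-1}|\lambda_t|$, bounding $\|\tilde G_t\|^2$ by the increments (via Theorem 4.2 of \cite{xu2020unified}), and finally noting $b_{t-1}|\lambda_t|\leq\epsilon$ once $t>\Theta((\Lambda^*/\epsilon)^4)$. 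That last $\epsilon$ is exactly the ``$1$'' in the constant $(1+\sqrt{2})\epsilon$; without this correction your argument would at best deliver $\sqrt{2}\,\epsilon$ and would not match the stated bound.
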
 
\begin{proof}
Denote by  $p_t\triangleq\frac{8(\tau-2)(C_\sigma^V)^2 }{\xi b_{t+1}^2}$ and $M_1\triangleq\frac{16\tau^2}{(\tau-2)^2}+\frac{(\xi (C^V_\sigma)^2-\nu)^2}{64(\tau-2)^2(C^V_\sigma)^2\xi^2}$. Then it can be verified that  $\nu+\frac{\mu_t}{2}-\frac{\xi (C_\sigma^V)^2 }{2}-\frac{16 (C_\sigma^V)^2 }{\xi b_{t+1}^2}=p_t$.
Then \eqref{eq:80} can be rewritten as 
\begin{align}\label{eq:82}
    F_{t+1}-F_t&\geq S_t+ p_t\|\theta_{t+1}-\theta_t\|^2+ \frac{b_t-b_{t-1}}{2}\lambda_{t+1}^2\nn\\
    &\quad+\frac{9}{10\xi}(\lambda_{t+1}-\lambda_{t})^2+\frac{8}{\xi}\left(\frac{b_{t}}{b_{t+1}}-\frac{b_{t-1}}{b_t}\right)\lambda_{t+1}^2.
\end{align}
From the definition, we have that
\begin{align}
G_t=\left[
\begin{array}{lr}
     &\beta_t \left(\lambda_t-\mathbf{\prod}_{[0,\Lambda^*]} \left(\lambda_t -\frac{1}{\beta_t} \left(\nabla_\lambda V^L_\sigma (\theta_t,\lambda_t) \right)\right)\right)\\
     &\alpha_t\left(\theta_t-\mathbf{\prod}_\Theta \left(\theta_t+\frac{1}{\alpha_t} \left(\nabla_\theta V^L_\sigma (\theta_t,\lambda_t) \right) \right)\right)
\end{array}
\right],
\end{align}
and denote by
\begin{align}
\tilde{G}_t\triangleq\left[
\begin{array}{lr}
     &\beta_t \left(\lambda_t-\mathbf{\prod}_{[0,\Lambda^*]} \left(\lambda_t -\frac{1}{\beta_t} \left( \nabla_\lambda \tilde{V_t} (\theta_t,\lambda_t)\right) \right)\right)\\
     &\alpha_t\left(\theta_t-\mathbf{\prod}_\Theta \left(\theta_t+\frac{1}{\alpha_t} \left(\nabla_\theta\tilde{V_t}(\theta_t,\lambda_t)  \right) \right)\right)
\end{array}
\right].
\end{align}
It can be verified that
\begin{align}\label{eq:113}
    \|G_t\|-\| \tilde{G}_t\| \leq b_{t-1}|\lambda_t|.
\end{align}
From Theorem 4.2 in \cite{xu2020unified}, it can be shown that
\begin{align}
    \| \tilde{G}_t\|^2\leq 2(\mu_t+\nu)^2\|\theta_{t+1}-\theta_t\|^2+\left(2(C^V_\sigma)^2+\frac{1}{\xi^2} \right) (\lambda_{t+1}-\lambda_t)^2,
\end{align}
and 
\begin{align}
    M_1\geq \frac{2(\nu+\mu_t)^2}{p_t^2}. 
\end{align}
Hence
\begin{align}
    \| \tilde{G}_t\|^2\leq M_1 p_t^2\|\theta_{t+1}-\theta_t\|^2+\left(2(C^V_\sigma)^2+\frac{1}{\xi^2} \right) (\lambda_{t+1}-\lambda_t)^2.
\end{align}
Set $u_t\triangleq \frac{1}{\max\left\{ M_1p_t, \frac{10+20\xi^2(C^V_\sigma)^2}{9\xi}  \right\}}$, then from \eqref{eq:82}, we have that
\begin{align}
    u_t\| \tilde{G}_t\|^2\leq F_{t+1}-F_t -S_t-\frac{b_t-b_{t-1}}{2}\lambda_{t+1}^2-\frac{8}{\xi}\left(\frac{b_{t}}{b_{t+1}}-\frac{b_{t-1}}{b_t}\right)\lambda_{t+1}^2.
\end{align}
Summing the inequality above from $t=1$ to $T$, then 
\begin{align}
    \sum^T_{t=1}u_t\| \tilde{G}_t\|^2&\leq F_{T+1}-F_1-\sum^T_{t=1}S_t+\frac{8}{\xi}\left(\frac{b_0}{b_1}\lambda_2^2 -\frac{b_T}{b_{T+1}}\lambda_{T+1}^2\right)+\left( \frac{b_0-b_T}{2}(\Lambda^*)^2 \right)\nn\\
    &\leq F_{T+1}-F_1-\sum^T_{t=1}S_t+\frac{8}{\xi}\frac{b_0}{b_1}(\Lambda^*)^2+\left( \frac{b_0-b_T}{2}(\Lambda^*)^2 \right),
\end{align}
which is from $b_t$ is decreasing and $\lambda_t<\Lambda^*$.
Note that 
\begin{align}
    \max_{t\geq 1}\max_{\theta\in\Theta,\lambda\in[0,\Lambda^*]} F_t &= \max\bigg\{-\frac{8}{\xi^2b_{t+1}}(\lambda_t-\lambda_{t+1} )^2-\frac{8}{\xi}\left(1-\frac{b_{t}}{b_{t+1}}\right)\lambda_{t+1}^2 + V^L_{\sigma}(\theta_{t+1},\lambda_{t+1})+ \frac{b_t}{2}\lambda_{t+1}^2 \nn\\
    &\quad +\left(-\frac{ 16(C^V_\sigma)^2}{ \xi b^2_{t+1}}- \frac{\xi (C_\sigma^V)^2}{2} \right)\|\theta_{t+1}-\theta_{t}\|^2   +\left(\frac{8}{\xi}-\frac{1}{2\xi}\right)(\lambda_{t+1}-\lambda_{t})^2\bigg\}\nn\\
    &\leq \frac{1.6}{\xi}(\Lambda^*)^2+(1+\Lambda^*)(2C_\sigma)+\frac{b_1}{2}(\Lambda^*)^2+\frac{15}{2\xi}(\Lambda^*)^2\nn\\
    &\triangleq F^*,
\end{align}
which is from the definition of $b_t$, and $8(\frac{b_t}{b_{t+1}}-1)\leq 8 (\frac{(t+1)^{0.25}}{t^{0.25}}-1)\leq 8 (\frac{2^{0.25}}{1}-1)<1.6$.
Then plugging in the definition of $b_t$ implies that
\begin{align}
    \sum^T_{t=1}u_t\| \tilde{G}_t\|^2& \leq F^*-F_1-\sum^T_{t=1}S_t+\frac{8}{\xi}(\Lambda^*)^2+\left( \frac{b_0}{2}(\Lambda^*)^2 \right).
\end{align}
If moreover set $u\triangleq\max\left\{M_1, \frac{10+20\xi^2(C^V_\sigma)^2}{9\xi p_2} \right\}$, then $u_t\geq \frac{1}{up_t}$, and hence
\begin{align}
    \frac{\sum^T_{t=1}\frac{1}{p_t}\| \tilde{G}_t\|^2}{\sum^T_{t=1}\frac{1}{p_t}}& \leq \frac{u}{\sum^T_{t=1}\frac{1}{p_t}}\left( F^*-F_1-\sum^T_{t=1}S_t+\frac{8}{\xi}(\Lambda^*)^2+\left( \frac{b_0}{2}(\Lambda^*)^2 \right)\right).
\end{align}
Plug in the definition of $p_t$ then we have that
\begin{align}\label{eq:93}
    \frac{\sum^T_{t=1}\frac{1}{p_t}\| \tilde{G}_t\|^2}{\sum^T_{t=1}\frac{1}{p_t}}& \leq \frac{3200\xi(\tau-2)(C^V_\sigma)^2d}{19^2(\sqrt{T}-2)}\left( F^*-F_1-\sum^T_{t=1}S_t+\frac{8}{\xi}(\Lambda^*)^2+\left( \frac{b_0}{2}(\Lambda^*)^2 \right)\right).
\end{align}
We moreover have that  
\begin{align}
    |S_t|&=\bigg|\frac{16}{b_t\xi}( f(\theta_{t-1})-\hat{f}(\theta_{t-1})-f(\theta_t)+\hat{f}(\theta_t))( -\lambda_t+\lambda_{t+1})+ ( f(\theta_{t-1})-\hat{f}(\theta_{t-1}))( \lambda_{t+1}-\lambda_t )\nn\\
    &\quad+\langle\theta_{t+1}-\theta_t, -\hat{g}(\theta_{t},\lambda_{t+1})+ g(\theta_{t},\lambda_{t+1}) \rangle\bigg|\nn\\
    &\leq  32t^{0.25} \Lambda^*(\Omega_{t-1}+\Omega_{t}) + 2\Lambda^*\Omega_{t-1}+\frac{1}{\alpha_t}(1+\Lambda^*)C^V_\sigma \Omega_{t},
\end{align}
where $\Omega_t\triangleq \max\left\{ \|g(\theta_t,\lambda_{t+1})-\hat{g}(\theta_t,\lambda_{t+1})\|,|f(\theta_t)-\hat{f}(\theta_t)| \right\}$. Note that it has been shown in \cite{wang2022policy} that $\Omega_t\leq L_\Omega \max\left\{ \|Q_{\sigma,r}-\hat{Q}_{\sigma,r} \|,\|Q_{\sigma,c}-\hat{Q}_{\sigma,c} \| \right\}=L_\Omega \epe$, and hence $\Omega_t$ can be controlled by setting $\epe$.

Note that $\alpha_t=\nu+\mu_t$ is increasing, hence $\frac{1}{\alpha_t}\leq \frac{1}{\alpha_1}$. Hence if we set $\epe=\frac{1}{t^{0.5}L_\Omega}\frac{1}{32t^{0.25}\Lambda^*+2\Lambda^*+\frac{1}{\alpha_1}(1+\Lambda^*)C^V_\sigma }\frac{19^2\epsilon^2}{3200\xi(\tau-2)(C^V_\sigma)^2uL_\Omega}=\mathcal{O}(\frac{\epsilon^2}{t^{0.75}})$, then 
\begin{align}
    |S_t| \leq  \frac{1}{t^{0.5}}\frac{19^2\epsilon^2}{3200\xi(\tau-2)(C^V_\sigma)^2uL_\Omega},
\end{align}
and hence
\begin{align}
    \left|\sum^T_{t=1}S_t \right| \leq  \sqrt{T}\frac{19^2\epsilon^2}{3200\xi(\tau-2)(C^V_\sigma)^2uL_\Omega}.
\end{align}\
Thus plug in \eqref{eq:93} and we have that
\begin{align} 
    \frac{\sum^T_{t=1}\frac{1}{p_t}\| \tilde{G}_t\|^2}{\sum^T_{t=1}\frac{1}{p_t}}& \leq \frac{3200\xi(\tau-2)(C^V_\sigma)^2u}{19^2(\sqrt{T}-2)} K +\epsilon^2,
\end{align}
where $K=F^*-F_1+\frac{8}{\xi}(\Lambda^*)^2+\left( \frac{b_1}{2}(\Lambda^*)^2 \right)$.
When $T=(2+\frac{3200\xi(\tau-2)(C^V_\sigma)^2uK}{19^2\epsilon^2})^2$, we have that 
\begin{align} 
    \frac{\sum^T_{t=1}\frac{1}{p_t}\| \tilde{G}_t\|^2}{\sum^T_{t=1}\frac{1}{p_t}}\leq 2\epsilon^2.
\end{align}
Similarly to Theorem 4.2 in \cite{xu2020unified}, if $t>\frac{19^4(\Lambda^*)^4}{2\cdot10^4\xi^4\epsilon^4}$, then $b_{t-1}<\frac{\epsilon}{\Lambda^*}$ and $b_{t-1}\lambda_t<\epsilon$. 
Hence combine with \eqref{eq:113} we finally have that
\begin{align}
    \min_{1\leq t\leq T}\| G_t\|\leq (1+\sqrt{2})\epsilon,
\end{align}
when $T=\max\left\{ \frac{7(\Lambda^*)^4}{\xi^4\epsilon^4},\left(2+\frac{9\xi(\tau-2)(C^V_\sigma)^2uK}{\epsilon^2}\right)^2\right\}=\mathcal{O}(\epsilon^{-4})$.
\end{proof}
\begin{remark}
Note that the sample complexity of robust TD algorithm to achieve an $\epe$-error bound is $\mathcal{O}(\epe^{-2})$, hence the sample complexity at the time step $t$ is $\mathcal{O}(\epe^{-2})=\mathcal{O}(\frac{t^{1.5}}{\epsilon^4})$. Thus the total sample complexity to find an $\epsilon$-stationary solution is $\sum^T_{t=1} \frac{t^{1.5}}{\epsilon^4}=\mathcal{O}(\epsilon^{-14})$. This great increasing of complexity is due to the estimation of robust value functions. 
\end{remark}

\section{Constants}\label{sec:constants}
In this section, we summarize the definitions of all the constants we used in this paper. 
\begin{align}
    L_V&=\frac{k |\mca|}{(1-\gamma)^2},\nn\\
    C_\sigma&=\frac{1}{1-\gamma}(1+2\gamma \delta \frac{\log|\mcs|}{\sigma}),\nn\\
    C^V_\sigma&=\frac{1}{1-\gamma} |\mca|k C_\sigma,\nn\\
    k_B&=\frac{1}{1-\gamma +\gamma \delta} \left( |\mca|C_\sigma l+|\mca|k C^V_\sigma\right) +\frac{2|\mca|^2\gamma (1-\delta)}{(1-\gamma+\gamma \delta)^2}k^2C_\sigma,\nn\\
    L_\sigma&=k_B+\frac{\gamma \delta}{1-\gamma}\left(\sqrt{|\mcs|}k_B+ 2\sigma|\mcs|C^V_\sigma \frac{1}{1-\gamma+\gamma \delta}k|\mca|C_\sigma\right),\nn\\
    b_t&=\frac{19}{20\xi t^{0.25}},\nn\\ M_1&=\frac{16\tau^2}{(\tau-2)^2}+\frac{(\xi (C^V_\sigma)^2-\nu)^2}{64(\tau-2)^2(C^V_\sigma)^2\xi^2},\nn\\
    u&=\max\left\{M_1, \frac{10+20\xi^2(C^V_\sigma)^2}{9\xi p_2} \right\},\nn\\
    F^*&= \frac{1.6}{\xi}(\Lambda^*)^2+(1+\Lambda^*)(2C_\sigma)+\frac{b_1}{2}(\Lambda^*)^2+\frac{15}{2\xi}(\Lambda^*)^2,\nn\\
    K&=F^*-F_1+\frac{8}{\xi}(\Lambda^*)^2+\left( \frac{b_1}{2}(\Lambda^*)^2 \right),\nn\\
    \mu_t&=\xi (C_\sigma^V)^2+\frac{16\tau(C_\sigma^V)^2 }{\xi (b_{t+1})^2}-2\nu,\nn\\
    \beta_t&=\frac{1}{\xi},\nn\\
    \alpha_t&=\nu+\mu_t.
\end{align}

\end{document}